\documentclass[11pt]{article}

\usepackage[margin=1in]{geometry}
\usepackage{amsmath, amssymb}
\usepackage{graphicx}
\usepackage{hyperref}
\usepackage{setspace}
\usepackage{authblk}
\usepackage{amsthm}
\usepackage{cleveref}
\usepackage{booktabs}
\usepackage{subcaption}

\usepackage[authoryear]{natbib}

\bibpunct{[}{]}{,}{a}{,}{,}

\newtheorem{proposition}{Proposition}
\newtheorem{corollary}{Corollary}
\newtheorem{lemma}{Lemma}

\begin{document}

\title{MABLE: Masked Autoencoding with Bi-Lipschitz Decoding for Embeddings and Graph Metric Learning}
\author[1]{Yaniv Shulman\thanks{Correspondence: \href{mailto:yaniv@shulman.info}{yaniv@shulman.info}. This work was done while Yaniv Shulman was working at Fleet Space Technologies.}}
\author[1]{Shaghayegh Akbarpour\thanks{shae.akbarpour@fleet.space}}
\author[1,2]{Jack B. Muir\thanks{jack.muir@fleet.space}}

\affil[1]{Fleet Space Technologies, Adelaide, Australia \\ \url{https://www.fleet.space}}
\affil[2]{Research School of Earth Sciences, Australian National University, Acton, Australia}
\date{\today}
\maketitle

\begin{abstract}
We propose MABLE (\textbf{M}asked \textbf{A}utoencoding with \textbf{B}i-\textbf{L}ipschitz Decoding for \textbf{E}mbeddings and Graph Metric Learning), a self-supervised framework for learning node and graph embeddings from large, heterogeneous graphs, demonstrated here on geospatial mineral-exploration data. MABLE combines masked reconstruction with fixed cosine-similarity losses that align matched augmented views while keeping unpaired embeddings well spread. A bi-Lipschitz feature decoder ties a low-dimensional reconstruction component of each node embedding to feature similarity, while matched-node consistency shapes the remaining context used by graph pooling. Lipschitz-controlled pooling helps stabilize graph-level representations under perturbations of retained node embeddings, while augmentation alignment trains robustness to masking, node dropping, and sampling variation. Across local copper and regional Arabian Shield studies, MABLE embeddings provide complementary downstream signal and produce coherent embedding-derived layers for hypothesis generation without learned discriminators or hard-negative selection.
\end{abstract}

\section{Introduction}

Recent advances in self-supervised learning have fueled the development of
foundation models in domains ranging from computer vision~\citep{He2022MAE}
to geospatial analysis~\citep{Daruna2024GFM4MPM}, where tasks like semantic similarity search of distant regions or mineral prospectivity mapping
require robust representations of spatially distributed data. In particular,
\emph{masked autoencoders} (MAEs) and \emph{contrastive} methods have emerged as two prominent paradigms. MAEs reconstruct missing content from partially observed inputs, thereby learning localized semantics~\citep{He2022MAE, LiEtAl2023Mask, Zhang2022UMAE, Hou2022GraphMAE, Hou2023GraphMAE2}, while contrastive approaches push away “unrelated” samples to avoid representational collapse~\citep{Oord2018CPC,Chen2020SimCLR}. It has recently been shown that plain MAEs can still suffer from partial dimensional collapse in feature space; thus, adding a \emph{uniformity or repulsion term} can spread out different samples’ embeddings~\citep{Zhang2022UMAE}.

In geospatial applications, data are often geographically uneven, and when absolute location is provided as an input feature, it may spuriously dominate learned representations if not properly controlled~\citep{Wu2024TorchSpatial}. Our work \emph{unifies} masked reconstruction and metric learning in a single end-to-end framework on \emph{graphs}. Here, metric learning refers to learning an embedding geometry in which distances or cosine similarities reflect useful node- and graph-level relationships, using self-supervised objectives rather than supervised pair or triplet labels. To avoid trivial geospatial shortcuts, we exclude absolute/global position signals from the input features, and learn representations from observed features, relative positions, and graph structure alone. Specifically, we encourage each node’s latent feature representation to be accurate for reconstructing its masked features, a \emph{positive alignment} principle in the reconstruction-relevant component; to remain consistent across matched nodes in augmented views, a \emph{cross-view node-correspondence} principle in the complementary contextual component; and to remain well spread relative to other sampled nodes in the batch, a \emph{node uniformity} concept, as well as from entirely different graphs, a \emph{graph uniformity} concept. We impose a \emph{bi-Lipschitz} continuity constraint on the feature decoder (acting on a low-dimensional reconstruction component), so that accurate reconstruction encourages proximity in that reconstruction subspace, and we employ \emph{Lipschitz} pooling operators, notably an adaptive softmax attention pooling mechanism (§\ref{sec:softmax-attn-pool}), to aggregate node embeddings into robust graph-level representations. These Lipschitz-continuous aggregators ensure that small local perturbations in node embeddings produce proportionally bounded shifts in the resulting graph embedding, even though they do not in general admit a lower-Lipschitz bound or guarantee semantic separation.

Our framework employs normalized/cosine contrastive loss terms inspired by the Spectral Contrastive Loss (SCL) of \citep{HaoChen2021SCL}. Because MABLE uses cosine similarity rather than unconstrained raw inner products, the resulting fixed-cosine surrogate removes scale ambiguity and admits a heuristic second-order Donsker--Varadhan (DV)-style interpretation under boundedness, approximate symmetry, and small-variance assumptions (see Appendix~\ref{appendix:mi-bound} for details). In place of a learned discriminator and the InfoNCE log softmax \citep{Oord2018CPC}, we use a fixed squared-similarity surrogate, and introduce two key advancements:
First, we impose spectral constraints on the decoder, bounding its singular values to enforce both upper and lower Lipschitz continuity so that small reconstruction errors encourage proximity in the reconstruction subspace.  
Second, we embed the surrogate in a masked autoencoder and apply it at both node and graph scales, while also adding an explicit node-level correspondence objective across augmented views in a contextual component of the node embedding, all without explicit hard-negative selection, crucial for geospatial data where “negatives” can still be semantically related.

\section{A Unified End-to-End Framework}

Our masked-graph pretraining pipeline addresses three challenges in one pass:  
(i) encouraging node similarity in a reconstruction-relevant component via masked feature reconstruction and a bi-Lipschitz feature decoder, while also encouraging cross-view consistency of matched nodes in a complementary contextual component;
(ii) encouraging graph uniformity by pushing graph embeddings toward a well-spread, approximately isotropic distribution in normalized embedding space; and
(iii) suppressing geospatial bias by dispensing with explicit hard-negative selection and absolute/global locations.

Conceptually, the loss is the masked-graph counterpart of SCL, but expressed with cosine or squared-distance penalties instead of any learned discriminator or InfoNCE term. Combined with the spectral constraints, this fixed-surrogate design encourages reconstruction-driven alignment in the reconstruction component, cross-view node consistency in the contextual component, and graph-level alignment/uniformity without a separate substructure-to-graph module, while avoiding the instability of adversarial training. We draw inspiration from mutual-information methods such as Deep InfoMax \citep{Hjelm2018DeepInfoMax, Velickovic2019DeepGraphInfomax} while remaining fully non-adversarial, and from recent graph self-supervised learning methods including GraphCL, GRACE, and BGRL~\citep{You2020GraphCL, Zhu2020GRACE, Thakoor2021BGRL}.

Throughout, we use ``graph'' to denote a variable-sized collection of node observations with optional explicit edges. The experiments in this paper instantiate the edge-free case: no hand-crafted adjacency matrix is supplied, and interactions are induced by the attention mechanism over node tokens (see §\ref{sec:georeformer-exp}). Explicit edge-aware variants are a natural extension when reliable connectivity is available. The objective itself is not tied to geospatial inputs or to a particular encoder: the mineral-exploration setting below is the experimental demonstration and validation domain.

\subsection{Notation and Node Feature Similarity Preservation}
\label{sec:node-similarity}

We begin by formalizing the encoder--decoder model and illustrating how masked reconstruction can help preserve node-level similarity in the latent space, provided the decoder is suitably constrained. By hiding portions of each node’s features, we encourage the model to learn robust, cross-modal or cross-attribute relationships, rather than simply memorizing direct inputs.

\paragraph{Encoder--Decoder Setup.}
Let $G=(V(G), E(G))$ denote a graph with vertex set $V(G)$ and edge set $E(G)$. Each node $u\in V(G)$ is associated with a feature vector $x_u \in \mathbb{R}^{d_x}$, where $d_x$ denotes the feature (attribute) dimension; we write $X(G):=\{x_u\}_{u\in V(G)}$ for the collection of node features, and by abuse of notation also denote by
$X(G)\in\mathbb{R}^{|V(G)|\times d_x}$ the matrix obtained by stacking $x_u^\top$ as rows in an arbitrary but fixed node ordering. When $G$ and $X$ are clear from context, we write $X:=X(G)$. Here $f_\theta$ is a graph-level encoder (e.g.\ a GNN) that processes the whole graph $G$ together with an input feature collection $X$ (which may be masked) and returns a contextualised embedding for every node; for notational simplicity, any additional auxiliary per-node context supplied to the encoder is left implicit. We write $f(u)$ as shorthand for the more explicit $(f_\theta(G,X))_u$ when $G$ and $X$ are clear from context.
Our model consists of an encoder that produces node embeddings of dimension $d$ and a feature decoder that acts on a reconstruction-facing subspace of dimension $d_e\le d$:
\[
f_\theta(G,X(G)) \in \mathbb{R}^{|V(G)|\times d}
\quad\text{and}\quad
g: \mathbb{R}^{d_e} \;\to\; \mathbb{R}^{d_x},
\]
where $z_u := f(u) := (f_\theta(G,X(G)))_u \in\mathbb R^d$ denotes the \emph{node embedding}. When a minibatch contains multiple graphs, we distinguish \emph{node instances} by writing
\[
\mathcal{B}_{\mathrm{node}} := \{(G,u):\, G\in\mathrm{batch},\ u\in V(G)\},
\]
and for $(G,u)\in\mathcal{B}_{\mathrm{node}}$ we write
\[
f(G,u) := (f_\theta(G,X(G)))_u.
\]
Thus, the shorthand $f(u)$ is used only when the underlying graph $G$ (and input view $X$) are fixed by context. 

To reconstruct node features, we define a low-dimensional \emph{feature embedding} $e_u := \pi(z_u)\in\mathbb R^{d_e}$ as a coordinate slice $\pi(z)=z_{1:d_e}$. We denote the complementary \emph{contextual slice} by
\[
c_u := \rho(z_u) := z_{d_e+1:d}\in\mathbb R^{d-d_e}.
\]
The feature decoder $g$ reconstructs node features from $e_u$, and the autoencoder is $h(u):=g(e_u)=g(\pi(f(u)))$. In the sequel, masked reconstruction acts on the feature slice $e_u$, while optional cross-view node-consistency terms may be imposed on the contextual slice $c_u$. This split embeds a constrained reconstruction pathway inside the broader graph encoder: the feature slice is geometrically tied to reconstructible feature similarity through the decoder, while the contextual slice remains available for view-stable information used by pooling and downstream graph representations. It also decouples the full node-embedding dimension from the reconstruction dimension, allowing high-dimensional contextual node embeddings while keeping the decoder input sized relative to the actual feature dimension $d_x$.

Finally, let $P_\phi$ be a permutation-invariant pooling operator mapping multisets of node embeddings to a graph embedding:
\[
P_\phi:\ (\mathbb{R}^d)^{|V(G)|}\to \mathbb{R}^{d_G},\qquad
s_G := P_\phi(\{z_u\}_{u\in V(G)}).
\]

\noindent
In words, $P_\phi$ pools the unordered multiset of node embeddings into a single fixed-dimensional \emph{graph embedding} $s_G$, used for graph-level losses and downstream tasks. Permutation invariance ensures that $s_G$ depends only on the graph content, not on the arbitrary ordering of its nodes, consistent with the general set-function viewpoint of Deep Sets~\citep{Zaheer2017DeepSets}.

\paragraph{View operator for masking and augmentation.}
We model masking and augmentation as a random view operator $T\sim\mathcal{A}$ acting on the graph--feature pair:
\[
(G^{(a)},X^{(a)},\iota^{(a)})=T(G,X).
\]
Here $G^{(a)}$ and $X^{(a)}$ are the augmented graph and feature collection, while $\iota^{(a)}:V(G^{(a)})\to V(G)$ is an injective map recording the original node identity of each retained node. Feature-only masking is the special case $G^{(a)}=G$ and $\iota^{(a)}=\mathrm{id}$; node or subgraph drops remove vertices from $G^{(a)}$ and restrict $\iota^{(a)}$ to the retained nodes. Unless otherwise stated, when a single graph $G$ and input view $X$ are fixed by context, we continue to write $f(u):=(f_\theta(G,X))_u$ for the node embedding under the \emph{current} input $X$ (masked or unmasked). In minibatch settings involving multiple graphs, we instead write $f(G,u)$. For two sampled view operators $T_1,T_2$, we write $(G^{(a)},X^{(a)},\iota^{(a)})=T_a(G,X)$ for $a\in\{1,2\}$. In our masked setting, $h$ predicts missing or masked features from partially observed inputs, forcing the learned representation to capture meaningful dependencies within and across features.

\paragraph{Matrix norms and singular values.}
For any matrix $W\in\mathbb R^{p\times q}$, define
\[
\sigma_{\max}(W) \;:=\; \max_{\|v\|_2=1}\|Wv\|_2 \;=\; \sqrt{\lambda_{\max}(W^\top W)},
\qquad
\sigma_{\min}(W) \;:=\; \min_{\|v\|_2=1}\|Wv\|_2 \;=\; \sqrt{\lambda_{\min}(W^\top W)}.
\]
When $p\ge q$ (tall or square) and $W$ has full column rank, $\sigma_{\min}(W)>0$ and coincides with the smallest (nonzero) singular value in the usual ordering.
When $p<q$ (wide), $\sigma_{\min}(W)=0$.

\paragraph{Cosine similarity.}
We define cosine similarity as
\[
\cos(z,z') \;:=\;
\begin{cases}
\dfrac{z^\top z'}{\|z\|_2\,\|z'\|_2}, & \text{if } z\neq 0 \text{ and } z'\neq 0,\\[6pt]
0, & \text{otherwise.}
\end{cases}
\]

\paragraph{Positive Alignment via Masking.}
Masked autoencoders, including graph variants~\citep{He2022MAE,Zhang2022UMAE,Hou2022GraphMAE,Hou2023GraphMAE2}, suggest that masking can \emph{align} nodes in the reconstruction-relevant feature-embedding space when local structures are similar, because reconstructing missing features or, in graph-autoencoding settings, masked edges/structure from context often forces the model to group similar inputs together. If two nodes $u$ and $v$ share similar local context, features, or incident structure, then under sufficient masking, $h(u)$ and $h(v)$ must correctly fill in similar content or structure. This typically helps their learned representations $f(u)$ and $f(v)$ to become proximate. In this paragraph we use $f(u)$ and $g(\cdot)$ in the standard autoencoder sense (not MABLE-specific notation).

However, if the decoder $g$ is \emph{unconstrained}, the reconstruction loss only requires $h(u) = g\bigl(f(u)\bigr)$ to match $u$ in the
\emph{input--output} space. That alone does not strictly guarantee that $f(u) \approx f(v)$ for two nodes $u,v$ with similar original features.
In other words, $g$ might learn to “invert” $f(u)$ in arbitrary ways without necessarily preserving pairwise distances in the latent space. This scenario shares conceptual parallels with issues like posterior collapse in Variational Autoencoders (VAEs), where an overly powerful or flexible decoder can learn to ignore the latent variables, rendering them uninformative, because it can model the data distribution well without relying on structured latent codes~\citep{Wang2021PosteriorCollapse}. To ensure that the reconstruction-relevant latent space captured by $e_u=\pi(f(u))$ reflects meaningful similarities and is not trivialized by the decoder's capacity, we turn to explicit constraints on $g$.

\paragraph{Bi-Lipschitz Feature Decoding for Latent Alignment.}
To promote embedding-level alignment, we impose a bi-Lipschitz constraint on $g$. The next proposition shows that, when $g$ is bi-Lipschitz and reconstruction is accurate, similarity in the \emph{input} domain translates into controlled proximity in the reconstruction subspace.

\begin{proposition}[Node Feature Similarity Preservation]
\label{prop:lipschitz}
Let $z_u := f(u)\in\mathbb R^d$ denote the node embedding and let $e_u := \pi(z_u)\in\mathbb R^{d_e}$ denote the feature embedding. Let $h(u):=g(e_u)$ be an autoencoder whose feature decoder $g$ is \emph{bi-Lipschitz}; i.e.\ there exist constants $0 < m \le L$ such that
\[
  m\,\|e_1 - e_2\| \;\le\; \|g(e_1) - g(e_2)\| \;\le\; L\,\|e_1 - e_2\|, \quad \forall\, e_1,e_2\in\mathbb{R}^{d_e}.
\]
Suppose $h(u)=g(e_u)$ reconstructs each node $u$ to within $\varepsilon$ of its features:
\[
\|h(u) - x_u\| \le \varepsilon \quad \forall\,u.
\]
Then, for any nodes $u,v$,
\[
    \|e_u-e_v\|
    \le
    \frac{1}{m}\,\|x_u-x_v\| + \frac{2\varepsilon}{m}.
\]
Moreover,
\[
    \|e_u-e_v\|
    \ge
    \max\!\left\{0,\;\frac{\|x_u-x_v\| - 2\varepsilon}{L}\right\}.
\]

(This is a pointwise accuracy assumption on $h$; we specify practical training losses and masking-dependent instantiations later. The same inequality holds in any norm provided $g$ satisfies the corresponding co‑Lipschitz bound in that norm.)
\end{proposition}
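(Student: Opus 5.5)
The plan is to sandwich $\|g(e_u)-g(e_v)\| = \|h(u)-h(v)\|$ between the two bi-Lipschitz bounds. Then we relate it to $\|x_u-x_v\|$ using the pointwise reconstruction accuracy and the triangle inequality. Nothing from earlier in the paper is needed beyond the stated assumptions: the bi-Lipschitz inequalities on $g$ and $\|h(w)-x_w\|\le\varepsilon$ for every node $w$.

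\emph{Upper bound.} We start from the co-Lipschitz side, $m\|e_u-e_v\|\le\|g(e_u)-g(e_v)\|=\|h(u)-h(v)\|$. Next we insert $x_u$ and $x_v$ and apply the triangle inequality:
\[
\|h(u)-h(v)\|\le\|h(u)-x_u\|+\|x_u-x_v\|+\|x_v-h(v)\|\le\|x_u-x_v\|+2\varepsilon .
\]
Dividing by $m>0$ gives $\|e_u-e_v\|\le \tfrac1m\|x_u-x_v\|+\tfrac{2\varepsilon}{m}$.

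\emph{Lower bound.} We use the reverse triangle inequality in the other direction:
\[
\|x_u-x_v\|\le\|x_u-h(u)\|+\|h(u)-h(v)\|+\|h(v)-x_v\|\le \|h(u)-h(v)\|+2\varepsilon .
\]
This gives $\|h(u)-h(v)\|\ge\|x_u-x_v\|-2\varepsilon$. Combined with the upper Lipschitz bound $\|h(u)-h(v)\|\le L\|e_u-e_v\|$ and $L>0$, we obtain $\|e_u-e_v\|\ge(\|x_u-x_v\|-2\varepsilon)/L$. Since norms are nonnegative, we may take the maximum with $0$.

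There is no real obstacle here. The only points to check are that the constants are strictly positive, so that the divisions are legitimate, and that the argument uses only the triangle inequality and homogeneity. Those two facts give the remark that the same inequalities hold in any norm in which $g$ satisfies the corresponding bi-Lipschitz bounds. I would also note that the lower bound needs only the upper Lipschitz constant, and the upper bound needs only the co-Lipschitz constant $m$.
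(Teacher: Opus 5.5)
Your proposal is correct and follows the same route as the paper's proof: the co-Lipschitz bound combined with the three-term triangle inequality gives the upper bound. The Lipschitz bound combined with the reverse triangle inequality, followed by taking the maximum with $0$, gives the lower bound.
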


See proof in Appendix~\ref{appendix:proof-lipschitz}. Thus, \emph{node feature similarity} is approximately preserved in the feature-embedding space used for reconstruction when the decoder is bi-Lipschitz and reconstruction error is small. The proposition is a pointwise geometric statement, while in training we optimize an empirical/expected mask-dependent reconstruction objective as its practical surrogate. Masking can amplify this effect because the model must predict missing information from partial inputs, implicitly aligning nodes that fill in similar content \citep{Zhang2022UMAE}.

\smallskip
\noindent
\textbf{Scope of the guarantee.}
Proposition~\ref{prop:lipschitz} is a statement about the reconstruction-relevant feature slice $e_u=\pi(z_u)$ under a bi-Lipschitz decoder and small reconstruction error. It does not by itself imply that the full node embedding $z_u$ preserves all task-relevant similarities, nor that masking alone guarantees alignment in the full latent space.

\smallskip
\noindent
\textbf{Practical note.}
We approximate the bi-Lipschitz property with a single linear decoder $g(e)=W_{dec}e+b$ whose singular values are kept in a fixed band
$m\!\le\!\sigma_i(W_{dec})\!\le\!L$. Implementation notes and computational details are discussed in §\ref{sec:condition-number}. For a linear map, achieving a non-trivial lower bound $m>0$ requires $\sigma_{\min}(W_{dec})>0$, i.e.\ $W_{dec}$ has full column rank (and thus $d_e \le d_x$ in the Euclidean setting). If $d_e > d_x$,
then $\sigma_{\min}(W_{dec})=0$ and no nontrivial lower bound exists.

\subsection{Fixed-Cosine Surrogates}
\label{sec:fixed-cosine-surrogates}

InfoMax–style objectives learn representations by contrasting the joint distribution \(p_{\text{pos}}(z,z^{+})\), with \(z=f(x)\) and \(z^{+}=f(x^{+})\) an augmented view, against the product distribution \(p(z)^{2}\) of unpaired embeddings~\citep{Hjelm2018DeepInfoMax,Velickovic2019DeepGraphInfomax,Oord2018CPC,He2019MoCo}. Positive pairs \((z,z^{+})\) are encouraged to score higher than uncorrelated pairs \((z,z^{-})\) treated as approximately independent negatives (e.g., in-batch), typically by maximising a mutual‑information lower bound such as InfoNCE, a Jensen–Shannon variant, or the binary cross‑entropy (BCE) objective~\citep{Gutmann2010NCE}:
\[
\mathcal L_{\text{BCE}}
=
\mathbb E_{(z,z^{+})}\bigl[-\log \sigma\bigl(D(z,z^{+})\bigr)\bigr] +
\mathbb E_{(z,z^{-})}\bigl[-\log\bigl(1-\sigma\bigl(D(z,z^{-})\bigr)\bigr)\bigr],
\]
where \(\sigma(u)=1/(1+e^{-u})\) and \(D\) is a learned discriminator. Besides introducing additional parameters, this formulation assumes that in-batch negatives are semantically unrelated, an assumption that often fails for real-world geospatial corpora, where nearby coordinates typically share land-cover, climate, or geology. In practice, when absolute coordinates are available to the model, a powerful discriminator may latch onto absolute location: the simplest way to separate a random negative from its positive is often just coordinate distance, not a meaningful semantic cue. In our setting, we avoid this shortcut at the source by \emph{not} feeding absolute/global positions as input features. We still prefer fixed-similarity surrogates over learned discriminators to reduce the risk of other trivial separation cues dominating training dynamics in geographically uneven corpora.

To address these limitations and avoid the learned discriminator, we use fixed cosine-similarity objectives. The raw inner-product form of the Spectral Contrastive Loss (SCL) of~\cite{HaoChen2021SCL} already expresses the desired pairwise geometry:
\[
\mathcal L_{\text{SCL,raw}}
=
-2 \, \mathbb{E}_{(z,z^{+})}\bigl[z^\top z^{+}\bigr]
+ \mathbb{E}_{(z,z^{-})}\bigl[(z^\top z^{-})^{2}\bigr].
\]
The positive term rewards large positive-pair inner products, while the quadratic negative term penalizes nonzero inner products for unpaired embeddings. The original population objective is written in terms of raw inner products, but the empirical setting in~\citet{HaoChen2021SCL} still uses norm control, including projection to a bounded ball with experimentally chosen radius, and the finite-sample analysis assumes norm-controlled function classes. Without normalization, however, this objective mixes angular structure with vector scale: after angular alignment saturates, the positive term can still be improved by increasing norms, and the negative term can be reduced either by orthogonality or by shrinking norms.

In MABLE we remove this scale ambiguity by working directly with cosine similarity, \(\cos(z,z') \in [-1,1]\). For unit-normalized embeddings, \(\cos(z,z')=z^\top z'\), and SCL becomes the normalized/cosine objective
\[
\mathcal L_{\text{SCL}}
=
-2 \, \mathbb{E}_{(z,z^{+})}\bigl[\cos(z,z^{+})\bigr]
+ \mathbb{E}_{(z,z^{-})}\bigl[\cos(z,z^{-})^{2}\bigr].
\]
This cleanly targets angular alignment of positive pairs ($\cos(z,z^{+}) \to 1$) and angular decorrelation of unpaired embeddings ($\cos(z,z^{-}) \to 0$).

The same geometry can be motivated from a fixed log-based surrogate. For positive pairs, choose an argument whose negative log is minimized as $\cos(z,z^{+})\to 1$, namely $\exp(\cos(z,z^{+})-1)$. For unpaired embeddings, choose an argument whose negative log is minimized as $\cos(z,z^{-})\to 0$, namely $1-\frac12\cos(z,z^{-})^2$. This gives
\[
\mathcal L_{\text{Surrogate}} = -\mathbb{E}_{(z,z^{+})}[\log(\exp(\cos(z,z^{+})-1))] -\mathbb{E}_{(z,z^{-})}[\log(1 - \frac{1}{2}\cos(z,z^{-})^2)].
\]
The positive term simplifies, up to an additive constant, to $-\cos(z,z^{+})$. The negative term has the Taylor expansion
\[
-\log\!\left(1-\frac{1}{2}\cos(z,z^{-})^2\right)
=
\frac12\cos(z,z^{-})^2 + O(\cos(z,z^{-})^4)
\]
near the desired negative optimum $\cos(z,z^{-})=0$.

Thus the local quadratic approximation to the log-based surrogate is the scaled normalized SCL objective. Let $\mathcal L_{\text{SSCL}} = \tfrac12 \, \mathcal L_{\text{SCL}}$:
\[
\mathcal L_{\text{SSCL}}
=
-\mathbb{E}_{(z,z^{+})}\bigl[\cos(z,z^{+})\bigr]
+ \tfrac12 \, \mathbb{E}_{(z,z^{-})}\bigl[\cos(z,z^{-})^{2}\bigr].
\]
This is the fixed-cosine geometry used throughout MABLE: the first term aligns true pairs, while the quadratic penalty pushes unpaired embeddings toward orthogonality, without requiring a learned discriminator or an InfoNCE softmax.

\paragraph{Alignment, Uniformity, and Downstream Separability.}
Empirically, achieving strong intra-class ``alignment'' (similar samples are close in the latent space) and cross-class ``uniformity'' (dissimilar samples are spread out across the latent space, often on the hypersphere) often correlates with improved linear separability on downstream tasks (see, e.g., \citep{Tosh2020ContrastiveLM}; under additional multi-view redundancy assumptions, their Theorem~3 shows that linear functions of suitable embeddings can approximate the Bayes-optimal predictor).
Uniformity on the hypersphere is particularly valuable in a fixed-cosine similarity space because random embeddings then have a near-zero expected dot product. This near-zero baseline for random dissimilar pairs provides a natural separation from concentrated positive pairs that exhibit high similarity. Here, we enforce alignment and uniformity via fixed similarity-based objectives at the graph level (SCL-style alignment/uniformity), while at the node level alignment is induced by reconstruction in the feature-embedding subspace, cross-view consistency is encouraged in the complementary contextual slice via matched-node correspondence across augmentations, and uniformity is encouraged by a squared-cosine repulsion term. This design shifts the representational burden primarily onto the encoder, which must learn node embeddings whose reconstruction-relevant slice preserves local feature similarity and whose contextual slice remains stable across views while still spreading unrelated samples apart, all without requiring a learned discriminator or the InfoNCE objective itself. In practice, the resulting representation spaces (in particular the graph embeddings, the feature-embedding component used for reconstruction, and the contextual component used for cross-view consistency) often become easier to separate with simple downstream models, so a linear classifier or even a $k$-NN rule can perform competitively \citep{Wang2020AlignmentUniformity}. Recent refinements like \emph{MIM-Refiner}~\citep{AlkinEtAl2025MIM} further demonstrate that intermediate-layers' representations can be re-trained with an instance discrimination head to improve cluster coherence and alignment, thereby making the final embedding layers more suitable for low-shot classification and other downstream tasks.

\subsection{From Node Embeddings to Graph Embeddings via Lipschitz Pooling}
\label{sec:pooling-lipschitz}

After obtaining node embeddings $\{z_u\}_{u \in V(G)}$, many applications
require distilling them into a single \emph{graph-level} representation.
In this subsection, we discuss how to preserve alignment in the embedding
space by employing a \emph{Lipschitz} pooling function.

\paragraph{Generic Lipschitz Pooling.}
Fix a norm $\|\cdot\|$ on $\mathbb{R}^d$ and equip $(\mathbb{R}^d)^n$ with the product metric
$\|(z_i)-(w_i)\|_{\mathrm{sum}} := \sum_{i=1}^n \|z_i-w_i\|$.
We say a function $P:(\mathbb{R}^d)^n\to\mathbb{R}^{d_G}$ is \emph{Lipschitz} if there exists a constant $L\ge 0$ such that for all ordered collections $(z_i)_{i=1}^n,(w_i)_{i=1}^n$,
\[
\|P(z_1,\dots,z_n)-P(w_1,\dots,w_n)\|
\;\le\;
L\,\|(z_i)-(w_i)\|_{\mathrm{sum}}.
\]
In the context of pooling, we view a graph as a \emph{multiset} of node embeddings $Z=\{z_i\}_{i=1}^n\subset\mathbb{R}^d$, where node order is arbitrary.

\smallskip
\noindent
\textbf{Remark.} In this section, $P$ denotes a generic pooling map. Our model uses a (possibly learnable) pooling operator $P_\phi$ as defined in \S\ref{sec:node-similarity}, and all results apply to $P_\phi$ whenever it satisfies the stated assumptions (permutation invariance and Lipschitz continuity).

\paragraph{Permutation invariance and matching distance.}
In our setting, the pooling operator acts on the (orderless) multiset of node embeddings produced by the encoder, so we assume $P$ is \emph{permutation-invariant}:
$P(z_1,\dots,z_n)=P(z_{\xi(1)},\dots,z_{\xi(n)})$ for any permutation $\xi\in S_n$.
Accordingly, when $P$ is permutation-invariant we write $P(Z)$ to denote the value of $P$ on any ordering of the multiset $Z$.
To compare two graphs without a predefined node correspondence, we use the optimal matching distance
\[
d_{\mathrm{match}}(Z,Z')\;:=\;\min_{\xi\in S_n}\sum_{i=1}^n\|z_i-z'_{\xi(i)}\|,
\qquad Z=\{z_i\}_{i=1}^n,\; Z'=\{z'_i\}_{i=1}^n.
\]
(When graph sizes differ, as happens under node-dropping or subgraph augmentations, one can pad the smaller set with a fixed null embedding or use a partial-matching variant; in the main text we use Proposition~\ref{prop:lipschitz-pooling} as an idealized equal-cardinality stability result rather than as a full theorem for every augmentation regime.)

\begin{proposition}[Lipschitz Pooling is Stable Under Optimal Matching]
\label{prop:lipschitz-pooling}
Let $P:(\mathbb{R}^d)^n\to\mathbb{R}^{d_G}$ be permutation-invariant and Lipschitz with constant $L$ with respect to $\|\cdot\|_{\mathrm{sum}}$.
Then for any two multisets of node embeddings $Z=\{z_i\}_{i=1}^n$ and $Z'=\{z'_i\}_{i=1}^n$,
\[
 \bigl\|P(Z)-P(Z')\bigr\|
 \;\le\;
 L\, d_{\mathrm{match}}(Z,Z').
 \]
Consequently, for any chosen correspondence, if the matched node embeddings differ only slightly, then the pooled graph embeddings differ proportionally. In applications, correspondences may be inherited from node identity, augmentation bookkeeping, or coordinate-based matching, but the guarantee is in the matched embedding space rather than in coordinate space itself.
\end{proposition}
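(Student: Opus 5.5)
The argument combines permutation invariance with the Lipschitz bound, applied to the optimal correspondence. Write $Z=\{z_i\}_{i=1}^n$ and $Z'=\{z'_i\}_{i=1}^n$ in some fixed orderings. The minimum in $d_{\mathrm{match}}$ is over the finite set $S_n$, so it is attained at some $\xi^\star\in S_n$. Then
\[
d_{\mathrm{match}}(Z,Z')=\sum_{i=1}^n\|z_i-z'_{\xi^\star(i)}\|.
\]

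Next I will use permutation invariance to reorder the second argument. Since $P$ is permutation-invariant,
\[
P(Z')=P(z'_1,\dots,z'_n)=P(z'_{\xi^\star(1)},\dots,z'_{\xi^\star(n)}).
\]
Similarly, $P(Z)=P(z_1,\dots,z_n)$. So the quantity $P(Z)-P(Z')$ equals $P$ evaluated on two \emph{ordered} tuples that are aligned by $\xi^\star$. The notation $P(Z)$ is well defined for exactly this reason: its value does not depend on the ordering chosen.

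Applying the Lipschitz hypothesis with respect to $\|\cdot\|_{\mathrm{sum}}$ to these two ordered tuples gives
\[
\|P(Z)-P(Z')\|\le L\sum_{i=1}^n\|z_i-z'_{\xi^\star(i)}\|=L\,d_{\mathrm{match}}(Z,Z').
\]
I do not need attainment of the minimum. The same inequality holds for every $\xi\in S_n$, so taking the infimum over $\xi$ gives the result directly. That same observation proves the ``consequently'' clause. For any correspondence $\xi$, whether from node identity, augmentation bookkeeping via $\iota^{(a)}$, or coordinate matching, we get $\|P(Z)-P(Z')\|\le L\sum_i\|z_i-z'_{\xi(i)}\|$, and this sum upper-bounds $d_{\mathrm{match}}$.

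There is no substantive obstacle. The only point needing care is making sure the Lipschitz inequality, which is stated for ordered tuples, is applied to the reordered tuple $(z'_{\xi(i)})_i$ and not to $(z'_i)_i$. Permutation invariance is exactly what makes that reordering legitimate. Multiset multiplicities cause no problem, because $S_n$ acts on index positions rather than on distinct values.
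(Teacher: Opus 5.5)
Your proof is correct and matches the paper's argument: choose a minimizing $\xi^\star\in S_n$, use permutation invariance to rewrite $P(Z')$ on the reordered tuple $(z'_{\xi^\star(i)})_i$, and apply the $\|\cdot\|_{\mathrm{sum}}$-Lipschitz bound. Your further remark is also accurate: the inequality holds for every $\xi\in S_n$, so attainment of the minimum is not needed and the ``consequently'' clause follows for any chosen correspondence.
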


\noindent
The proof is given in Appendix~\ref{appendix:pooling-proofs}. We emphasize that Proposition~\ref{prop:lipschitz-pooling} is stated for equal-cardinality multisets; when augmentations change graph size, we use it as the natural equal-cardinality stability result, while practical implementations may instead rely on padding or partial-matching variants as noted above.

\paragraph{Examples of Lipschitz Pooling: Sum, Mean, and Max.}
Many standard pooling operators, such as sum, mean, and coordinate‐wise max, are
Lipschitz transformations. Informally, each uses a linear or piecewise‐linear
aggregation over the embeddings, ensuring bounded output changes when inputs
are perturbed slightly. Moreover, since these operators are permutation-invariant,
their Lipschitz bounds on ordered tuples immediately imply stability for unordered
multisets under the optimal matching distance $d_{\mathrm{match}}$ via
Proposition~\ref{prop:lipschitz-pooling} (with $d_{\mathrm{match}}$ defined using
the same underlying norm as in the input metric).

\begin{proposition}[Sum, Mean, and Max Pooling are Lipschitz]
\label{prop:sum-mean-max-lipschitz}
Consider the following three operators on node embeddings
$\{z_i\}_{i=1}^n \subset \mathbb{R}^d$:
\begin{align*}
\text{SumPool}(\{z_i\}) \; &=\; \sum_{i=1}^n z_i, \\
\text{MeanPool}(\{z_i\}) \; &=\; \frac{1}{n} \sum_{i=1}^n z_i, \\
\text{MaxPool}(\{z_i\})_k \; &=\; \max_{1 \le i \le n} (z_i)_k
\quad \text{for each coordinate } k.
\end{align*}
Then each of these is Lipschitz, with constants that depend on $n$ and
the norm. Specifically:
\begin{itemize}
\item With output norm $\ell_2$ and input metric $\sum_i \|z_i-z'_i\|_2$, SumPool has Lipschitz constant at most $1$.
\item With output norm $\ell_2$ and input metric $\sum_i \|z_i-z'_i\|_2$, MeanPool has Lipschitz constant at most $1/n$.
\item With output norm $\ell_\infty$ and input metric $\sum_{i=1}^n \|z_i-z'_i\|_2$, MaxPool has Lipschitz constant at most $1$
      (since $\max_i\|z_i-z'_i\|_\infty \le \sum_i\|z_i-z'_i\|_2$).
      If one instead measures the output in $\ell_2$, the constant is at most $\sqrt{d}$.
\end{itemize}
\end{proposition}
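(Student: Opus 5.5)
The plan is to verify each bound directly from the definition of Lipschitz continuity with respect to $\|\cdot\|_{\mathrm{sum}}$. For each operator I fix two ordered collections $(z_i)_{i=1}^n$ and $(z'_i)_{i=1}^n$ and bound the output difference by the input metric $\sum_i\|z_i-z'_i\|_2$. Permutation invariance is immediate for all three operators, since sums and coordinatewise maxima do not depend on the order of their arguments. Stability under $d_{\mathrm{match}}$ then follows at once from Proposition~\ref{prop:lipschitz-pooling}, so only the ordered Lipschitz bounds need proof.

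For SumPool, I write the difference as $\sum_i (z_i-z'_i)$ and apply the triangle inequality in $\ell_2$:
\[
\Bigl\|\sum_i z_i-\sum_i z'_i\Bigr\|_2 \le \sum_i\|z_i-z'_i\|_2 ,
\]
which gives constant $1$. MeanPool is SumPool scaled by $1/n$, and homogeneity of the norm gives constant $1/n$.

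For MaxPool, the key scalar fact is that for reals $a_i,b_i$,
\[
\bigl|\max_i a_i-\max_i b_i\bigr| \le \max_i|a_i-b_i| .
\]
To prove it, I take $j$ attaining $\max_i a_i$. Then $\max_i a_i-\max_i b_i\le a_j-b_j$, and the symmetric argument bounds the other direction. Applying this fact coordinatewise with $a_i=(z_i)_k$ and $b_i=(z'_i)_k$, and then maximizing over $k$, gives
\[
\|\mathrm{MaxPool}(z)-\mathrm{MaxPool}(z')\|_\infty \le \max_i\|z_i-z'_i\|_\infty .
\]
I then chain the norm inequalities $\|\cdot\|_\infty\le\|\cdot\|_2$ and $\max_i\le\sum_i$ over nonnegative terms, which yields constant $1$. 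For the $\ell_2$ output, I use $\|v\|_2\le\sqrt d\,\|v\|_\infty$ to get constant $\sqrt d$. A direct coordinatewise $\ell_2$ argument would in fact give constant $1$ here as well, but the stated bound $\sqrt d$ suffices.

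The only step requiring care is the scalar max-difference inequality, specifically choosing the maximizing index correctly in each direction. Everything else is the triangle inequality and standard norm comparisons.
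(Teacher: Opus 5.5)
Your proposal is correct and follows the paper's proof essentially step for step: the triangle inequality for SumPool, scaling by $1/n$ for MeanPool, the scalar inequality $\bigl|\max_i a_i-\max_i b_i\bigr|\le\max_i|a_i-b_i|$ applied coordinatewise and chained with $\|\cdot\|_\infty\le\|\cdot\|_2$ and $\max\le\sum$ for MaxPool, and $\|v\|_2\le\sqrt{d}\,\|v\|_\infty$ for the $\ell_2$-output variant. Your side remark that bounding each coordinate by $\sum_i|(z_i)_k-(z'_i)_k|$ and then applying Minkowski's inequality sharpens the $\ell_2$-output constant to $1$ is also correct, though the paper does not pursue it.
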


\noindent
The detailed proof is given in Appendix~\ref{appendix:proof-sum-mean-max-lipschitz}. The essence is a triangle‐inequality argument. While these operators provide basic Lipschitz aggregation, more expressive mechanisms like Softmax Attention Pooling (detailed in §\ref{sec:softmax-attn-pool}) can offer enhanced representational power while maintaining controlled Lipschitz properties.

\subsection{Adaptive Graph Pooling with Softmax Attention}
\label{sec:softmax-attn-pool}

While the pooling operators discussed in §\ref{sec:pooling-lipschitz} (sum, mean, max) offer simplicity and clear Lipschitz bounds, more adaptive mechanisms can provide enhanced flexibility and empirical performance in generating graph-level embeddings, in the spirit of learned set/graph aggregation methods such as Set Transformer and Graph Multiset Transformer pooling~\citep{Lee2019SetTransformer, Baek2021GMT}. This is particularly relevant for graphs with heterogeneous node densities or complex relational structures where uniform weighting of nodes may be suboptimal. Motivated by these learned aggregation methods, we use Softmax Attention Pooling as a learnable, context-aware aggregation mechanism:
\[
P_{\mathrm{attn}}(\{z_i\}_{i=1}^n) = W_{pool} \left( \sum_{i=1}^n \alpha_i(\{z_k\}_{k=1}^n)\, z_i \right),
\]
where $\{z_i\}$ are the node embeddings, $W_{pool} \in \mathbb{R}^{d_G \times d}$ is a trainable projection matrix, and $\alpha_i \ge 0$ (with $\sum_i \alpha_i = 1$) are attention weights. These weights $\alpha_i$ dynamically depend on the full set of node embeddings $\{z_k\}_{k=1}^n$, often computed via a softmax over scores derived from small neural networks (e.g., MLPs) applied to node embeddings.

For $P_{\mathrm{attn}}$ to serve as a robust Lipschitz pooling operator within the MABLE framework, we state the guarantee on bounded subsets of the node-embedding space rather than globally on all of $(\mathbb{R}^d)^n$. In our implementation, the attention-score network is fed unit-normalized inputs
\[
\bar z_i := \frac{z_i}{\max(\|z_i\|_2,\varepsilon)},
\]
its linear maps are constrained only from above via spectral-norm clamping, and the pre-softmax logit gaps are clipped in a shift-invariant way:
\[
\tilde s_i := \psi(\bar z_i),\qquad
s_i := \mathrm{clip}\!\bigl(\tilde s_i - \max_j \tilde s_j,\,-c,\,0\bigr),
\qquad
\alpha_i := \mathrm{softmax}(s)_i.
\]
Assume additionally that the pooled values lie in a bounded region, $\|z_i\|_2 \le B$ for all $i$ in the domain of interest, that the normalization uses a fixed $\varepsilon>0$, and that the score network $\psi$ is composed of standard Lipschitz activations together with linear maps whose operator norms are clamped from above. Because the score network acts on normalized inputs, its Lipschitz constant is controlled by the product of the clamped operator norms of its linear layers, while clipping places the logits in a compact set on which softmax is Lipschitz. Under these conditions, the attention aggregation map
 \[
A(\{z_k\}_{k=1}^n)\;:=\;\sum_{i=1}^n \alpha_i(\{z_k\}_{k=1}^n)\, z_i
 \]
admits a Lipschitz constant on the bounded domain of interest, with respect to the chosen product metric on $(\mathbb{R}^d)^n$ (e.g., $\|\cdot\|_{\mathrm{sum}}$), depending on the value bound $B$, the logit-gap bound $c$, the set size $n$, and the score network $\psi$. Consequently, if the final linear projection satisfies $\sigma_{\max}(W_{pool})\le L_W$, then the composed pooling operator $P_{\mathrm{attn}}:=W_{pool}\circ A$ is also Lipschitz on that bounded domain. Note that because we do \emph{not} normalize the pooled values $z_i$ themselves, this is a bounded-domain (local) Lipschitz statement, not a global one over unbounded node embeddings.

\begin{lemma}[Bounded-domain Lipschitzness of clipped softmax attention]
\label{lem:attn-lipschitz}
Fix $n$ and consider the bounded domain $\mathcal{D}_B=\{(z_1,\dots,z_n):\|z_i\|_2\le B\}$. Suppose the normalization uses $\varepsilon>0$, the score map $\psi$ is $L_\psi$-Lipschitz, logits are shifted by their maximum and clipped to $[-c,0]$ before softmax, and attention weights are used to form
\[
A(Z)=\sum_{i=1}^n \alpha_i(Z)z_i.
\]
Then $A$ is Lipschitz on $\mathcal{D}_B$ with respect to the product metric $\sum_i\|z_i-z'_i\|_2$. Consequently, if $\sigma_{\max}(W_{pool})\le L_W$, then $P_{\mathrm{attn}}=W_{pool}\circ A$ is also Lipschitz on $\mathcal{D}_B$.
\end{lemma}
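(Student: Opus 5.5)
The proof decomposes $A$ into a chain of Lipschitz maps and then applies a product rule for $\sum_i \alpha_i z_i$. We write $Z=(z_1,\dots,z_n)$ and $Z'=(z'_1,\dots,z'_n)$ in $\mathcal{D}_B$, and use the split
\[
A(Z)-A(Z')=\sum_{i=1}^n \alpha_i(Z)\,(z_i-z'_i)\;+\;\sum_{i=1}^n \bigl(\alpha_i(Z)-\alpha_i(Z')\bigr)\,z'_i .
\]
Since $\alpha_i(Z)\in[0,1]$, the first term is bounded by $\sum_i\|z_i-z'_i\|_2$. Because $\|z'_i\|_2\le B$, the second term is bounded by $B\,\|\alpha(Z)-\alpha(Z')\|_1$. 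The whole argument therefore reduces to showing that $Z\mapsto\alpha(Z)\in\Delta^{n-1}$ is Lipschitz from the product metric to $\ell_1$, with a constant $L_\alpha$. That would give $\mathrm{Lip}(A)\le 1+B L_\alpha$.

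To bound $L_\alpha$, I will chain four maps.

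\emph{Normalization.} The map $z\mapsto z/\max(\|z\|_2,\varepsilon)$ is $(2/\varepsilon)$-Lipschitz on all of $\mathbb{R}^d$. On the region $\|z\|\le\varepsilon$ it is $z/\varepsilon$. On the region $\|z\|\ge\varepsilon$, the radial projection onto the ball satisfies the standard bound $\|z/\|z\|-w/\|w\|\|\le 2\|z-w\|/\max(\|z\|,\|w\|)$. For mixed pairs, I split along the segment at the sphere of radius $\varepsilon$. A constant of $2/\varepsilon$ is crude but sufficient.

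\emph{Score network.} Each $\tilde s_i=\psi(\bar z_i)$ then changes by at most $L_\psi\cdot(2/\varepsilon)\|z_i-z'_i\|_2$, so $\|\tilde s-\tilde s'\|_1\le (2L_\psi/\varepsilon)\sum_i\|z_i-z'_i\|_2$.

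\emph{Shift and clip.} The map $\tilde s\mapsto \tilde s-\max_j\tilde s_j$ is Lipschitz in $\ell_\infty$ with constant $2$, since $\max$ is $1$-Lipschitz in $\ell_\infty$. Clipping is coordinatewise $1$-Lipschitz. Hence $\|s-s'\|_\infty\le 2\|\tilde s-\tilde s'\|_\infty\le 2\|\tilde s-\tilde s'\|_1$.

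\emph{Softmax.} Softmax is globally Lipschitz from $\ell_\infty$ to $\ell_1$ with constant $2$. To see this, note that its Jacobian is $\mathrm{diag}(\alpha)-\alpha\alpha^\top$, whose $\ell_\infty\to\ell_1$ norm is at most $\sum_i\alpha_i(1+1)=2$; then apply the mean value theorem along the segment. The clip to $[-c,0]$ is therefore not even needed for the Lipschitz bound itself. It only guarantees weights bounded below by $e^{-c}/n$, which is what would be needed for a lower or relative bound. I will note this so as not to overclaim how $c$ enters.

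Composing these gives $L_\alpha\le 8L_\psi/\varepsilon$, so
\[
\|A(Z)-A(Z')\|_2\le\Bigl(1+\tfrac{8BL_\psi}{\varepsilon}\Bigr)\sum_i\|z_i-z'_i\|_2 .
\]
The final claim follows from $\|W_{pool}v\|_2\le\sigma_{\max}(W_{pool})\|v\|_2\le L_W\|v\|_2$, which multiplies the constant by $L_W$.

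The main obstacle is the normalization step: getting a clean global constant for $z\mapsto z/\max(\|z\|,\varepsilon)$ across the kink at $\|z\|=\varepsilon$. I will handle it by the segment-splitting argument, using the fact that both pieces agree on the sphere of radius $\varepsilon$. A secondary care point is the $\max$-shift, where the non-differentiability is again handled by the $1$-Lipschitz property of $\max$ rather than by derivatives. Boundedness by $B$ enters only through the second term of the split, which explains why the statement is local.
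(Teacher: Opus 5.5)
Your proposal is correct and follows the paper's proof step for step: the same split of $A(Z)-A(Z')$ into $\sum_i\alpha_i(Z)(z_i-z'_i)$ and $\sum_i(\alpha_i(Z)-\alpha_i(Z'))z'_i$ giving the constant $1+BL_\alpha$, the same normalization--score--shift--clip--softmax chain for $\alpha$, and the final composition with $W_{pool}$. The only differences are that you make the constants explicit ($L_\alpha\le 8L_\psi/\varepsilon$) and correctly note that softmax is globally Lipschitz, so the clip to $[-c,0]$, which the paper uses for a compact domain, is not needed for this upper bound; also, the normalization step you flag as the main obstacle has a one-line fix, since $z/\max(\|z\|_2,\varepsilon)$ is the Euclidean projection of $z/\varepsilon$ onto the unit ball and hence $(1/\varepsilon)$-Lipschitz.
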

\noindent
The proof is a direct composition argument and is given in Appendix~\ref{appendix:attn-lipschitz-proof}.

Beyond this upper-Lipschitz/stability guarantee, one may additionally ask whether the pooling head preserves some pairwise separation. In our implementation, the \emph{final pooling projection} $W_{pool}$ is explicitly conditioned by clamping its singular values to a band $[m,L]$ in the regime where the projection is square or full-column-rank, while the \emph{score-network} matrices used to produce the attention logits are constrained only from above. Accordingly, the corollary below isolates a direct consequence relevant in our setting: once the attention-weighted sum has achieved a non-zero separation for a given pair, the lower singular-value bound on $W_{pool}$ helps ensure that this separation is not collapsed by the final projection, though this does not by itself yield a general lower bound with respect to a standard set metric.

\begin{corollary}[Conditioned Projection Preserves Attention-Induced Separation]
\label{prop:cond-pro-pool}
Let $P_{\mathrm{attn}}(\{z_i\}) = W\!\bigl(\sum_{i=1}^n \alpha_i(\{z_k\}_{k=1}^n)\,z_i\bigr)$ define the pooling operation, where \(\alpha_i(\cdot)\ge 0\) are attention weights satisfying $\sum_{i=1}^n \alpha_i(\{z_k\}_{k=1}^n) = 1$, and $W \in \mathbb{R}^{d_G \times d}$ with $\sigma_{\min}(W)\ge m>0$ (hence co-Lipschitz). Consider two distinct sets of embeddings $\{z_i\}$ and $\{z'_i\}$ from the data distribution. Let $\delta_{\text{pair}}$ denote the actual separation between their attention-weighted sums produced by the mechanism $\alpha_i(\cdot)$:
\[
\delta_{\text{pair}} = \Bigl\|\sum_{i=1}^n \alpha_i(\{z_k\}_{k=1}^n)\, z_i \;-\; \sum_{i=1}^n \alpha_i(\{z'_k\}_{k=1}^n)\, z'_i \Bigr\|.
\]
If this achieved separation is positive ($\delta_{\text{pair}} > 0$), then the pooled embeddings for these specific sets satisfy
\[
\bigl\|P_{\mathrm{attn}}(\{z_i\}) - P_{\mathrm{attn}}(\{z'_i\})\bigr\|
\;\;\ge\;\;
m \cdot \delta_{\text{pair}}.
\]
Hence $P_{\mathrm{attn}}$ preserves the separation induced by the attention mechanism for this pair $\{z_i\}, \{z'_i\}$ up to the factor $m$, provided the attention mechanism yields a non-zero separation $\delta_{\text{pair}}$.
\end{corollary}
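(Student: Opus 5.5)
The plan is to reduce the claim to the linearity of the final projection combined with the variational characterization of $\sigma_{\min}$ from the paragraph on matrix norms and singular values. Write the two attention-weighted sums as
\[
a := \sum_{i=1}^n \alpha_i(\{z_k\}_{k=1}^n)\, z_i,\qquad a' := \sum_{i=1}^n \alpha_i(\{z'_k\}_{k=1}^n)\, z'_i,
\]
so that $P_{\mathrm{attn}}(\{z_i\})=Wa$ and $P_{\mathrm{attn}}(\{z'_i\})=Wa'$ by definition, and $\delta_{\text{pair}}=\|a-a'\|_2$. The key observation is that the attention weights, however nonlinearly they depend on the input sets, have already been absorbed into $a$ and $a'$. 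The only remaining map is the fixed linear $W$, so no Lipschitz or co-Lipschitz property of the attention mechanism itself is needed.

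The steps, in order, are as follows.

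First, by linearity, $P_{\mathrm{attn}}(\{z_i\})-P_{\mathrm{attn}}(\{z'_i\}) = W(a-a')$.

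Second, since $\delta_{\text{pair}}>0$, the vector $v:=(a-a')/\|a-a'\|_2$ is a well-defined unit vector.

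Third, by the definition $\sigma_{\min}(W)=\min_{\|v\|_2=1}\|Wv\|_2$, we get $\|Wv\|_2\ge\sigma_{\min}(W)\ge m$.

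Fourth, multiplying through by $\|a-a'\|_2=\delta_{\text{pair}}$ and using homogeneity of the norm gives
\[
\|W(a-a')\|_2\ge m\,\delta_{\text{pair}}.
\]
This is the claimed bound. The equivalent inequality $\|Wx\|_2^2=x^\top W^\top W x\ge\lambda_{\min}(W^\top W)\|x\|_2^2$ could serve instead of the third step. The positivity hypothesis is only used to make the normalization legitimate; the inequality holds trivially when $\delta_{\text{pair}}=0$ as well.

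There is no real obstacle. The only point needing care is the dimensional caveat already noted in the paper: $\sigma_{\min}(W)\ge m>0$ forces $W$ to have full column rank, hence $d_G\ge d$. I would remark that the hypothesis is vacuous for a wide projection, consistent with the paper's convention that $\sigma_{\min}=0$ in that case. I would also note that norms are Euclidean throughout, matching the definition of $\sigma_{\min}$. Finally, I would emphasize in a closing sentence that the bound is pairwise and conditional on $\delta_{\text{pair}}$, and does not give a lower bound in terms of $d_{\mathrm{match}}$, in line with the discussion preceding the corollary.
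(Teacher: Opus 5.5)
Your proposal is correct and takes the same approach as the paper's proof. The paper likewise absorbs the attention weights into the sums $\Delta_z,\Delta_{z'}$ and applies the $m$-co-Lipschitz property of $W$ implied by $\sigma_{\min}(W)\ge m$; you derive that property explicitly from the variational definition of $\sigma_{\min}$, and your remarks that the bound also holds when $\delta_{\text{pair}}=0$ and that $m>0$ forces $d_G\ge d$ agree with the paper.
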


\noindent
This does not provide a uniform lower bound in terms of a standard set metric; it states only that any separation realized by the attention-weighted sum is not collapsed by the lower-bounded final projection. In our implementation this lower-bound condition is enforced on $W_{pool}$, whereas the attention \emph{scoring} network itself is controlled only through input normalization, upper spectral-norm constraints, and logit clipping (see Appendix~\ref{appendix:bilip-pool-proof}).

\subsection{Condition Number Control for Linear Decoders}
\label{sec:condition-number}

While a general Lipschitz or bi-Lipschitz decoder can be implemented via architectural constraints, which often introduce complexity or complicate training \citep{Miyato2018SpectralNorm, Cisse2017Parseval, Behrmann2019InvertibleResNet, Gulrajani2017WGANGP}, a particularly simple yet powerful approach uses a single linear decoding layer, $g(e) = W_{dec}e + b$, where $W_{dec} \in \mathbb{R}^{d_x \times d_e}$.
For such a linear map from feature-embedding space ($\mathbb{R}^{d_e}$) to feature space ($\mathbb{R}^{d_x}$), the maximum singular value $\sigma_{\max}(W_{dec})=\|W_{dec}\|_2$ dictates its Lipschitz constant (bounding expansion), and the minimum singular value $\sigma_{\min}(W_{dec})$ determines its co-Lipschitz constant (bounding collapse). Specifically, a non-zero $\sigma_{\min}(W_{dec}) > 0$ implies injectivity for the linear map (which in particular requires $d_e \le d_x$), preventing distinct latent vectors from collapsing to the same output. By ensuring both $0 < m \le \sigma_{\min}(W_{dec})$ and $\sigma_{\max}(W_{dec}) \le L < \infty$ (which in particular requires $d_e \le d_x$), the linear decoder becomes bi-Lipschitz, guaranteeing that distances are preserved up to bounded factors: $m\|e_1 - e_2\| \le \|g(e_1) - g(e_2)\| \le L\|e_1 - e_2\|$.

Controlling these singular values in practice adds computational cost. Calculating the full Singular Value Decomposition (SVD) of the $d_x \times d_e$ matrix $W_{dec}$ is typically $O(\min(d_x,d_e) \cdot d_x d_e)$, which can be expensive, especially if performed at every training step for large matrices. However, practical methods exist to approximate or constrain singular values without full SVD. Controlling $\sigma_{\max}(W_{dec})$, the spectral norm, is relatively efficient; techniques like the Power Method are used in Spectral Normalization~\citep{Miyato2018SpectralNorm} to approximate this value iteratively. Controlling $\sigma_{\min}(W_{dec})$ or the condition number $\kappa(W_{dec}) = \sigma_{\max}(W_{dec})/\sigma_{\min}(W_{dec})$ is generally more challenging. Nevertheless, for decoder matrices, whose dimensions $d_x$ and $d_e$ are often manageable (hundreds or low thousands) compared to other parts of a model, the computational cost may be acceptable or can be mitigated by techniques like singular value clamping or projection-based updates. This approach ensures that the primary representational learning burden resides with the encoder $f(\cdot)$, while the well-conditioned linear decoder $W_{dec}$ serves as a predictable mapping that prevents trivial inversions or representational collapse in the output layer.

\subsection{Hypothesis: Output Constraints and Encoder Regularity}
\label{sec:implicit-lip}

Explicitly imposing Lipschitz or bi-Lipschitz constraints layer-by-layer throughout a deep neural network is computationally expensive, often requiring per-layer spectral norm calculations~\citep{Miyato2018SpectralNorm, Cisse2017Parseval}. We treat output-side conditioning as a practical design hypothesis: constraining the feature decoder (§\ref{sec:condition-number}) and using a Lipschitz-controlled graph pooling operation (§\ref{sec:pooling-lipschitz}, §\ref{sec:softmax-attn-pool}) may encourage more stable encoder outputs without requiring layer-wise constraints throughout the network.

This is not a formal guarantee on the encoder. Rather, it motivates the architecture: bounded decoder and pooling stages make the final representation-to-loss map less sensitive, while leaving the encoder expressive enough to learn the data-dependent interactions required by the task. Directly measuring how such output-side constraints affect encoder spectra and optimization dynamics is left for future work.

\subsection{Discussion and Practical Considerations}
\label{sec:discussion}

The choice of pooling operator itself presents several options, each with distinct characteristics. Foundational Lipschitz operators such as mean, sum, and max pooling (detailed in §\ref{sec:pooling-lipschitz}) offer a starting point. Empirically, mean pooling is simple and stable, and it avoids the
scaling issues of sum pooling when $|V(G)|$ varies widely. 
Sum pooling precisely captures node multiplicities (beneficial if
the number of nodes is meaningful), but can become unbounded with large graphs.
Max pooling discards duplicates, effectively capturing the “set” of
embeddings. Each of these simpler methods has advantages depending on how one wishes to
represent the entire graph and whether exact cardinality or
distributional cues are prioritized. See \citep{XuEtAl2019} for a deeper discussion
comparing sum, mean, and max aggregator behaviors in Graph Neural Networks. In MABLE, reconstruction constrains the low-dimensional feature embedding $e_u=\pi(z_u)$, while pooling operates on the full node embedding $z_u$; the complementary contextual slice can therefore contribute to graph-level representations while also being regularized by cross-view matched-node correspondence.

In summary, while these simpler Lipschitz operators (mean, sum, or max) provide robust baseline aggregation, more adaptive mechanisms such as Softmax Attention Pooling (§\ref{sec:softmax-attn-pool}) often yield superior flexibility and empirical performance, especially for graphs with complex or heterogeneous structures. Softmax Attention Pooling, as discussed, can be constructed to maintain overall Lipschitz continuity on bounded embedding domains when the score network uses normalized inputs, upper spectral-norm control, and clipped logits. In our implementation, the final pooling projection is additionally conditioned with both lower and upper singular-value bounds, yielding the partial co-Lipschitz guarantee discussed above.

Furthermore, practical implementations often involve choices about processing the pooled vector before its use in downstream tasks or loss functions. For instance, when angle-based similarity measures are used (such as cosine similarity $\cos(\cdot,\cdot)$ employed in $\mathcal{L}_{\mathrm{graph\_align}}$ §\ref{sec:graph-align}, or normalized SCL variants), the final loss signal primarily depends on embedding direction, not magnitude. However, the magnitude of the pooled vectors can still affect numerical stability and optimization dynamics. Applying a bounded activation function like element-wise $\tanh$ directly after the primary aggregation step (e.g., $s_G = \tanh\!\bigl(P_\phi(\{z_u\}_{u\in V(G)})\bigr)\in\mathbb{R}^{d_G}$) can mitigate these issues by constraining the components of the pre-normalized vectors to the range $(-1, 1)$.
This ensures inputs to the normalization step are well-scaled, potentially improving stability and removing an unconstrained degree of freedom from the optimization. While introducing saturation (which might compress informative large magnitudes), this approach provides component-wise regularization, maintains the overall Lipschitz nature of the graph embedding function (as $\tanh$ is 1-Lipschitz), and produces outputs more readily adaptable for efficient similarity search via binarization techniques. As such, it represents a viable practical strategy within the MABLE toolkit, complementing the choice of pooling operator and decoder constraints.

The considerations for condition-number control on a linear feature decoder $g(e)=W_{dec}e+b$ (§\ref{sec:condition-number}) also involve a conceptual balancing act when setting bounds for its singular values: the minimum $m = \sigma_{\min}(W_{dec})$ and maximum $L = \sigma_{\max}(W_{dec})$. The objective is a decoder that is robustly bi-Lipschitz (requiring $m > 0$ and a finite $L$) while remaining sufficiently expressive for accurate feature reconstruction. In practical terms, bounding $L$ prevents the decoder from hallucinating large feature variances from small latent perturbations, while enforcing a strictly positive $m$ prevents the decoder from ignoring the latent space entirely (a failure mode akin to posterior collapse). Smaller values for $L$, particularly approaching $1.0$, are theoretically appealing as they can enhance training stability by ensuring the decoder is non-expansive or only mildly expansive, fostering more regularized mappings. However, an overly restrictive $L$ might hypothetically curtail the decoder's ability to reconstruct features if significant scaling or transformation from the latent space is necessary, suggesting a need for adequate "slack" or expressiveness. Simultaneously, $m$ must be strictly positive to uphold the bi-Lipschitz property, ensuring distinct latent states can lead to separable reconstructions. The ratio $L/m$ (the condition number) dictates the decoder's allowance for anisotropic distortion; a larger condition number implies more "slack" but less uniform geometric preservation, whereas a smaller condition number (larger $m$ relative to $L$) makes the decoder more rigid, strengthening the geometric coupling outlined in Proposition~\ref{prop:lipschitz}. Thus, the selection of $m$ and $L$ involves a hypothetical trade-off between decoder flexibility for reconstruction, the stability afforded by tighter singular value control, and the desired strength of the geometric link between the latent and feature spaces, with optimal ranges likely being sensitive to specific dataset characteristics and modeling goals.

\section{End-to-End Training Objectives for Node \& Graph Embeddings}
\label{sec:end-to-end-holistic-training}

We now present the end-to-end objectives for learning both node-level and graph-level representations. By training \emph{end-to-end}, the model enforces similarity among similar nodes, encourages consistency of matched nodes across augmented views, aggregates those nodes into a graph-level embedding, and encourages graph-level uniformity among unpaired embeddings. Our system uses in-batch sampling for node embeddings and leverages augmented/un-augmented pairs at the graph level.

\medskip
\noindent
Drawing inspiration from fixed-cosine surrogate principles (alignment, uniformity, avoiding learned discriminators and the InfoNCE objective itself), our framework employs loss functions instantiated differently across scales. At the node level, we combine a masked reconstruction loss with a bi-Lipschitz feature decoder to promote alignment in the reconstruction-relevant feature slice, an explicit cross-view node-correspondence loss to promote consistency in the complementary contextual slice, and a node-level uniformity penalty. At the graph level, we enforce both alignment of augmented graph views and uniformity among unpaired graph embeddings using penalties analogous to those derived for the fixed-cosine surrogate. This design provides a scalable, geometrically interpretable implementation of contrastive principles, adapted to masked graph data.

\subsection{Node Alignment, Node Correspondence, and Node Uniformity}
We start with a \emph{masked} reconstruction objective that encourages the learned representation to capture meaningful dependencies within and across attributes, and to align similar nodes in the reconstruction-relevant embedding space. This loss penalizes the model's ability to accurately reconstruct the original or masked features of a node from its latent embedding:
\[
 \mathcal{L}_{\mathrm{node\_align}}
 =
 \mathbb{E}_{G\sim\mathcal{D}}
 \ \mathbb{E}_{T\sim\mathcal{A}}
 \ \mathbb{E}_{v\sim V(G^{(a)})}
 \Bigl[\ell_{\mathrm{recon}}\bigl(g(\pi(f_\theta(G^{(a)}, X^{(a)})_v)), x_{\iota^{(a)}(v)}; T\bigr)\Bigr],
 \]

where $\mathcal{D}$ is the training distribution over graphs, $T\sim\mathcal{A}$ is the view operator from §\ref{sec:node-similarity}, and $(G^{(a)},X^{(a)},\iota^{(a)})=T(G,X)$.
The encoder $f_\theta$ consumes the augmented input $(G^{(a)},X^{(a)})$ and outputs a contextualized embedding for each retained node $v\in V(G^{(a)})$, while $x_{\iota^{(a)}(v)}$ is the original reconstruction target associated with that retained node identity.
The slice $\pi$ selects the feature-embedding subspace used for reconstruction and $g$ is the (bi-Lipschitz) feature decoder (§\ref{sec:condition-number}).
The reconstruction loss $\ell_{\mathrm{recon}}(\cdot,\cdot;T)$ is a norm-based penalty in feature space and may depend on the view mask (e.g., evaluated only on masked components, only on unmasked components, or on both with feature-type weights).

Masking encourages the encoder to learn feature embeddings that support consistent feature prediction from context for nodes with similar local structure or attributes, thereby promoting grouping in the reconstruction-relevant feature-embedding space. Importantly, due to the bi-Lipschitz nature of $g$ (see Proposition~\ref{prop:lipschitz}), minimizing reconstruction error acts as a geometric regularizer in the learned feature-embedding space. If two nodes have similar reconstruction targets and both are reconstructed accurately, then their feature-embedding slices are forced to be close up to the reconstruction-error slack in Proposition~\ref{prop:lipschitz}. Without the lower-Lipschitz component of the decoder constraint, accurate reconstruction alone would not rule out arbitrarily arranged latent codes.

\medskip
\noindent
\paragraph{Cross-View Node Correspondence in the Contextual Slice.}
Reconstruction constrains only the feature slice $e_u=\pi(z_u)$. To encourage the remaining contextual slice $c_u=\rho(z_u)$ to encode view-stable information rather than augmentation-specific variation, we additionally align matched nodes across two augmented views of the same graph, in the spirit of bootstrap-style cross-view consistency methods such as BGRL~\citep{Thakoor2021BGRL}. Let $(G^{(a)},X^{(a)},\iota^{(a)})=T_a(G,X)$ for $a\in\{1,2\}$ denote two masked/augmented views of the same underlying graph $G$. We define the common retained identity set
\[
\mathcal{C}(G,T_1,T_2)
:=
\iota^{(1)}(V(G^{(1)}))\cap \iota^{(2)}(V(G^{(2)})) \subseteq V(G).
\]
For $u\in\mathcal{C}(G,T_1,T_2)$, write $v_a(u):=(\iota^{(a)})^{-1}(u)$ for the corresponding node in view $a$.\footnote{In practice, $\mathcal{C}(G,T_1,T_2)$ is the set of nodes that survive the two augmentations and can be aligned by identity or inherited indexing; for efficiency we may evaluate the loss on a random subset of $\mathcal{C}(G,T_1,T_2)$ in each minibatch.}
We define
\[
\mathcal{L}_{\mathrm{node\_corr}}
\;=\;
\mathbb{E}_{G\sim\mathcal{D}}
\ \mathbb{E}_{T_1,T_2\sim\mathcal{A}}
\ \mathbb{E}_{u\sim \mathcal{C}(G,T_1,T_2)}
\Bigl[
  \ell_{\mathrm{corr}}\!\bigl(
    \rho(f_\theta(G^{(1)},X^{(1)})_{v_1(u)}),\,
    \rho(f_\theta(G^{(2)},X^{(2)})_{v_2(u)})
  \bigr)
\Bigr],
\]
where $\ell_{\mathrm{corr}}$ is a similarity-matching loss on the contextual slice (e.g., squared $\ell_2$ distance between normalized contextual vectors, or equivalently a cosine-alignment penalty up to constants). This term encourages matched nodes across augmentations to remain close in the contextual subspace while leaving the reconstruction-relevant feature slice to be shaped primarily by the masked autoencoding objective.

\medskip
\noindent
\paragraph{In-Batch Node Sampling for Node Uniformity.}
We then promote uniformity among node embeddings by randomly pairing node instances \emph{within the same batch} and penalizing the squared cosine similarity between embeddings of unpaired nodes. This term is analogous to the negative term in the Spectral Contrastive Loss (SCL) and the log-based surrogate, which encourages orthogonality among negative pairs:
\[
\mathcal{L}_{\mathrm{node\_uniform}}
\;=\;
\mathbb{E}_{\eta,\eta'\sim \mathcal{Z}_{\mathrm{node}},\ \eta\neq \eta'}
\Bigl[
  \cos(\eta,\eta')^2
\Bigr].
\]
Here $\mathcal{Z}_{\mathrm{node}}$ denotes the current minibatch pool of valid retained node embeddings, distinct from the node-instance set $\mathcal{B}_{\mathrm{node}}$ defined above. In our implementation, $\mathcal{Z}_{\mathrm{node}}$ contains token embeddings from both augmented views of all graphs in the minibatch, sampled uniformly from this pooled token set. Exact cross-view matches of the same retained node identity are rare under this uniform sampling in large token pools, so we use this term as a lightweight isotropy/decorrelation regularizer rather than as a strict semantic-negative objective. This objective penalizes large absolute dot products between sampled token embeddings, pushing their cosine similarity towards orthogonality (zero).

\subsection{Graph-Level Pooling and Positive Pair Augmentation}
\label{sec:graph-align}
Next, we consider the graph-level embedding $s_G$. For a given view $(G^{(a)},X^{(a)},\iota^{(a)})$, this embedding is typically generated using the Adaptive Graph Pooling with Softmax Attention mechanism ($P_{\mathrm{attn}}$), as detailed in §\ref{sec:softmax-attn-pool}, applied to the set of retained node embeddings:
\[
s_G^{(a)} = P_{\mathrm{attn}}\bigl(\{f_\theta(G^{(a)},X^{(a)})_v:\, v \in V(G^{(a)})\}\bigr),
\]
yielding a single vector per graph view. This adaptive mechanism is often favored for its flexibility in handling varied graph structures and for its potential empirical performance benefits. While simpler Lipschitz operators like $\mathrm{MeanPool}$ (cf.~§\ref{sec:pooling-lipschitz}) can serve as alternatives or for comparative studies, any chosen Lipschitz pooling operator helps control sensitivity to changes in graph size or node embedding scale. However, pooling alone is insufficient to guarantee invariance to structured perturbations such as node masking, feature noise, or irregular subgraph sampling. The role of positive pair augmentation is thus critical: by explicitly training the encoder and pooling to align perturbed views of the same graph, the model learns to ignore incidental variations and focus on stable, semantically meaningful patterns. We create positive graph-view pairs by sampling $T_1,T_2\sim\mathcal{A}$ from the same underlying $(G,X)$, e.g.:
\begin{enumerate}
    \item \textbf{Augmented copy:} a view might keep the same node set but apply partial node masking, node dropping, or slight feature corruption.
    \item \textbf{Subgraph sampling:} If $G$ is large, we can sample subgraphs (e.g., via node dropping) to form augmented views, treating the embedding of a subgraph as a positive view of the entire graph, provided they remain semantically related.
\end{enumerate}

Using a penalty based on cosine similarity (for nonzero vectors $a,b$, writing $\hat a := a/\|a\|_2$ and $\hat b := b/\|b\|_2$, so that the usual identity $1-\cos(a,b)=\tfrac12\|\hat a-\hat b\|_2^2$ holds), we define the inter-graph alignment loss:
\[
\mathcal{L}_{\mathrm{graph\_align}} = \mathbb{E}_{G\sim\mathcal{D}}\ \mathbb{E}_{T_1,T_2\sim\mathcal{A}}\left[ 1 - \cos\!\bigl(s_G^{(1)}, s_G^{(2)}\bigr) \right].
\]
Here $s_G^{(a)}$ is computed from the view $(G^{(a)},X^{(a)},\iota^{(a)})=T_a(G,X)$ as above.

Minimizing this loss is equivalent to maximizing the cosine similarity between the two view embeddings. When one view is a subgraph of the other (a possible outcome of our augmentation strategy, as previously described), this alignment objective specifically encourages the larger-view embedding to be predictive of the representation of its own local (subgraph) structure. Such an alignment between global and local views of the same data instance is a core tenet of mutual information maximization strategies like Deep InfoMax~\citep{Hjelm2018DeepInfoMax, Velickovic2019DeepGraphInfomax}. This alignment is particularly important in geospatial and geological datasets, where differences in node count or spatial density often reflect sampling artifacts rather than true semantic divergence. Together, the pooling and augmentation strategy is designed to improve robustness to cardinality mismatches, topological edits, and local feature inconsistencies.

\subsection{Graph Uniformity}
Lastly, to promote inter-graph uniformity, for an anchor graph identity $G$, a view embedding $s_G^{(a)}$ is pushed away from embeddings of views of other underlying graphs sampled from the current batch. Concretely, for a minibatch $\mathcal{B}_{\mathrm{graph}}$ of graphs and their augmented views, the anchor identity is the underlying graph $G$; its own views include all embeddings $s_G^{(a)}$ derived from that same graph. We take $\mathrm{NegPool}(G)$ to be the empirical distribution over all graph-level view embeddings in the batch excluding those views of $G$ itself. In particular, it may comprise full-graph or subgraph view embeddings derived from other graphs $H\ne G$. Analogous to the SCL negative term, we penalize the squared cosine similarity:
\[
\mathcal{L}_{\mathrm{graph\_uniform}} = \mathbb{E}_{G,\,T_a,\, Z^- \in \mathrm{NegPool}(G)}\bigl[ \cos\!\bigl(s_G^{(a)}, Z^-\bigr)^2 \bigr].
\]
Here, for each anchor graph $G$, $s_G^{(a)}$ is computed from a sampled anchor view $T_a(G,X)$, and $Z^-$ is sampled from the empirical distribution that is uniform over elements of $\mathrm{NegPool}(G)$. The importance of such explicit uniformity-promoting terms for enhancing feature diversity and preventing issues like representation collapse has also been highlighted in masked autoencoder frameworks such as U-MAE~\citep{Zhang2022UMAE}. In practice, these unpaired in-batch graphs are a heuristic source of contrast rather than guaranteed semantic negatives. When $Z^-$ includes subgraph view embeddings from another graph $H$, this objective contributes to making views of $G$ discriminable from local views of other graphs, complementing the positive local-global alignment encouraged by $\mathcal{L}_{\mathrm{graph\_align}}$.

\subsection{Combined Objective}
Putting it all together, we train \emph{end-to-end} with a single combined objective:
\[
\mathcal{L}
\;=\;
\alpha\,\mathcal{L}_{\mathrm{node\_align}}
\;+\;
\beta\,\mathcal{L}_{\mathrm{node\_corr}}
\;+\;
\gamma\,\mathcal{L}_{\mathrm{node\_uniform}}
\;+\;
\lambda\,\mathcal{L}_{\mathrm{graph\_align}}
\;+\;
\mu\,\mathcal{L}_{\mathrm{graph\_uniform}}.
\]

By tuning the above scalars, we balance local (node) alignment via reconstruction, cross-view node correspondence in the contextual slice, node-level uniformity, graph-level augmentation alignment, and graph uniformity across the dataset.

\subsection{Holistic Benefits}
This fully integrated pipeline yields robust node and graph embeddings with the following advantages:
\begin{itemize}
\item \textbf{Principled Surrogates:} By using fixed-cosine based penalties analogous to normalized SCL, we avoid learned discriminators and their associated stability and bias issues. The normalized/cosine SCL-style surrogate also admits a heuristic second-order connection to a DV-style mutual-information objective under boundedness, approximate symmetry, and small-variance assumptions (\Cref{appendix:mi-bound}).
\item \textbf{Local Quality:} Masked node autoencoding plus bi-Lipschitz decoding encourages the \emph{feature-embedding slice} used for reconstruction to reflect local similarity: similar node attributes lead to nearby feature embeddings up to the reconstruction-error slack in Proposition~\ref{prop:lipschitz}. A complementary node-correspondence loss encourages the remaining contextual slice to stay consistent across matched nodes in augmented views. The node uniformity penalty shapes the full node embeddings by encouraging unpaired in-batch node embeddings to stay near-orthogonal. Together, these terms encourage node representations that are locally faithful, cross-view stable, and well spread.
\item \textbf{Reliable Graph Embeddings:} The use of Lipschitz-controlled pooling mechanisms, particularly adaptive Softmax Attention Pooling (§\ref{sec:softmax-attn-pool}) as well as simpler operators like mean or sum (cf.~\Cref{prop:sum-mean-max-lipschitz}), ensures that small changes in node embeddings lead to proportionally bounded changes in $s_G$. Furthermore, the graph alignment penalty ($\mathcal{L}_{\mathrm{graph\_align}}$) contributes to the robustness of these embeddings against augmentation variations.
\item \textbf{Global Uniformity:} By drawing data from the same batch or dataset, both at node and graph level, the uniformity penalties encourage a well-spread, approximately isotropic embedding distribution across the collection of graphs, complementing local alignment and helping prevent representation collapse.
\item \textbf{Implicit Deep InfoMax-Style Learning:} By leveraging subgraph augmentations for positive pairs in $\mathcal{L}_{\mathrm{graph\_align}}$ (aligning two views of the same underlying graph, potentially including full-graph and local-subgraph views) and employing a broad in-batch contrastive strategy for $\mathcal{L}_{\mathrm{graph\_uniform}}$ (contrasting views of one graph against full-graph or subgraph views of other graphs), MABLE encourages a learning dynamic similar to Deep InfoMax. It learns global representations that are predictive of their own local structures while being distinct from the local structures of other graphs, all achieved without a learned discriminator due to the SCL-based fixed-cosine surrogates.
\item \textbf{End-to-End Efficiency:} Because node- and graph-level objectives are co-trained, we do not need separate staging or complex hard-negative selection steps. In-batch sampling, straightforward augmentation, and sampled matched-node correspondence provide the necessary positive and regularizing signals within a single training loop.
\item \textbf{Output-Side Stability:} Constraining the bi-Lipschitz decoder and Lipschitz pooling layers provides controlled final-stage mappings from embeddings to reconstruction and graph-level objectives, while leaving encoder-level regularity as a design hypothesis rather than a formal guarantee.
\item \textbf{Conditioned Decoder Control:} If $g$ is a single linear layer with constrained singular values (§\ref{sec:condition-number}), then the final reconstruction map is protected against trivial collapse or unbounded expansion, reducing the risk of poorly conditioned latent-to-feature mappings.
\end{itemize}

Overall, this holistic design encourages similarity, correspondence, and uniformity at both the node and graph levels in one end-to-end pass. Crucially, by leveraging principled fixed-cosine surrogates, simple augmentations, in-batch sampling, sampled matched-node correspondence, and a well-conditioned decoder, the burden of representation is shifted to the encoder, which must learn meaningful and robust features. This principle is strongly supported by findings like MIM-Refiner~\citep{AlkinEtAl2025MIM}, which show that lightweight decoders lead to more discriminative intermediate representations, ultimately improving performance in downstream tasks.

\section{Experiments}
\label{sec:experiments}

We evaluate MABLE at two spatial scales (prospect and regional; details deferred to subsequent subsections). Here we first describe the shared backbone and training procedure used across both experiments, including masking, augmentation, and the instantiation of the losses introduced in §\ref{sec:end-to-end-holistic-training}.

\subsection{GeoReformer backbone, tokenization, and training procedure}
\label{sec:georeformer-exp}

\paragraph{Backbone (GeoReformer).}
All experiments use a Reformer-style encoder~\citep{Kitaev2020} with LSH self-attention, adapted to geospatial node sets and referred to as the \textbf{GeoReformer}. GeoReformer is the encoder instantiation used in our experiments; the MABLE objectives only require an encoder that maps a graph or node collection to node embeddings, so other GNN, graph-transformer, or set-transformer backbones could be substituted. Each graph is encoded as a padded sequence of node tokens and processed by a stack of LSH attention blocks to support thousands of nodes per sample. We remove sequence/axial positional encodings because node order is arbitrary.

\paragraph{Design rationale: edge-free graphs via induced connectivity.}
We represent each sample as an \emph{edge-free} set of node observations and avoid constructing an explicit adjacency matrix. In mineral prospectivity settings, heuristic edges (e.g., $k$-NN or radius graphs) can inject strong, dataset-specific priors and become brittle under heterogeneous sampling densities and multi-sensor geometry \citep{Sihombing2024Improved,Daruna2024GFM4MPM, Zuo2023NewGeneration}. Instead, we let the model \emph{induce} an example-specific sparse connectivity pattern through its attention mechanism: the Reformer-style LSH routing performs a hard partition of tokens into candidate neighborhoods, yielding a block-sparse attention graph that scales to tens of thousands of nodes while remaining agnostic to arbitrary node ordering. This shifts the burden of discovering useful spatial interactions from a hand-crafted graph construction rule to learned attention over data-dependent neighborhoods.

\paragraph{Node tokenization and input projections.}
Each node token concatenates (i) a feature-type identifier \(\tau_u\in\{1,\dots,K\}\) (stored as a one-hot vector over \(K\) feature IDs, and optionally mapped to a fixed unit-norm code vector \(q(\tau_u)\in\mathbb{R}^{K}\) for training stability), (ii) an optional scalar feature value \(r_u\) (or a sentinel when not applicable), and (iii) relative geometric features (e.g., coordinates expressed relative to the graph center, with dataset-specific scaling). The resulting encoder token augments the reconstructible feature vector with relative geometric information. Concretely, the reconstructible feature vector \(x_u\) comprises the feature-type identifier together with the scalar value when present, while relative geometry is provided to the encoder as auxiliary positional/context information concatenated to this feature vector. The full token is then projected into the model hidden dimension \(d\) by a learned linear embedder, \(t_u = E(\cdot) \in \mathbb{R}^{d}\), providing a minimal and modality-agnostic input interface for heterogeneous geospatial observations.

\paragraph{Relative geometry only; no absolute spatial identity.}
To avoid trivial location-based shortcuts and to improve transfer across regions, we do not inject absolute or global position embeddings into the sequence. Geometric information enters only through \emph{relative} positional features and through augmentation, which preserves the local spatial relationships while discouraging memorization of absolute coordinates.

\paragraph{LSH self-attention and routing regularization.}
GeoReformer uses LSH self-attention to approximate full self-attention with sub-quadratic cost. Concretely, tokens are assigned to buckets by hashing projected representations; these bucket assignments define the \emph{support} of the attention computation (i.e., which token pairs are eligible to interact) and thus act as a hard, example-specific sparsification of the attention adjacency. During training, we regularize this routing by sampling bucket assignments from a temperature-scaled distribution over the hash projection scores, rather than using a deterministic argmax. This stochastic routing reduces brittleness of the induced block structure and improves robustness. At evaluation time, routing reverts to deterministic argmax assignment, recovering the standard Reformer behavior.

\paragraph{Masking with feature-type specific mask tokens.}
We use a feature-type aware masking scheme with a dedicated learned mask token (a trainable replacement vector) per feature type. When a node is masked, we keep its relative geometric features unchanged but replace both the feature-ID block and the scalar value (when present) with the corresponding mask token. In this experimental tokenization, the model is trained to reconstruct the original feature-type identifier (represented either as a one-hot vector or its fixed code \(q(\tau_u)\)) together with the scalar value for feature types that admit one; MABLE itself only requires a reconstructible target appropriate to the domain. For feature types without an associated scalar value, the scalar reconstruction term is omitted. Because the mask token is feature-type specific by design, the nontrivial learning signal comes primarily from reconstructing masked values/content from cross-feature context and relative spatial structure, rather than from inferring the feature type itself.

\paragraph{Two-view augmentation (mask-only vs drop--rotate--mask).}
Training forms positive view pairs from the same underlying graph. The \emph{original} view applies node masking only. The \emph{positive} view combines (i) stochastic node dropping, (ii) random Z-axis rotations around a local reference point (e.g., the graph center), and (iii) masking. Dropping is applied both \emph{globally} and in \emph{spatial patches}: with some probability we sample a patch (random center and radius) and remove nodes within that region. Patch dropping can be \emph{uniform} (the entire patch is dropped) or \emph{feature-conditional}, where each feature type is independently retained or removed with a sampled probability, effectively simulating missing modalities or incomplete acquisition within localized areas. A minimum-retained-node constraint avoids degenerate views under aggressive dropping. Rotation augmentation is used specifically to encourage the learned representations, especially graph-level embeddings, to be insensitive to local orientation, i.e., to favor rotational invariance in the embedding space. This is desirable in the mineral-exploration setting considered here, where local patch orientation is typically not itself the quantity of interest, whereas relative spatial configuration and cross-modal context are.

\paragraph{Decoder conditioning and bounded attention scores.}
We use a bi-Lipschitz feature decoder for reconstruction by clamping the singular values of the decoder matrix to $[\sigma_{\min}, \sigma_{\max}]$. For graph pooling, we use standard softmax attention: the attention MLP receives unit-normalized node embeddings, its internal linear maps are clamped only in \emph{maximum} singular value, and the pre-softmax logits are shifted by their within-graph maximum and clipped to $[-c,0]$, thereby bounding relative logit gaps in a shift-invariant way. The final pooling projection $W_{pool}$ is separately conditioned by clamping its singular values to $[m,L]$. Thus, the score network is controlled for upper-Lipschitz stability, while the output projection additionally provides the partial co-Lipschitz guarantee stated in §\ref{sec:softmax-attn-pool}.

\paragraph{Losses and optimization.}
The training objective combines reconstruction with representation-shaping terms: node-level correspondence and uniformity, graph-level alignment between two views of the same underlying graph, and graph-level uniformity within a batch. Throughout, the decoder is trained to predict the full node reconstruction target $x_u$.\footnote{We use $x_u$ to denote the reconstruction target; the encoder input is formed by masking/corrupting the reconstructible components of $x_u$ (where selected components are replaced by learned mask tokens) and concatenating the resulting vector with the relative geometric context.}
Masking is applied to the \emph{encoder input} by corrupting a subset of feature dimensions/types with learned mask tokens, so that predicting the corrupted entries is inherently context-dependent and does not admit trivial copy-through shortcuts. In training we use (i) a \emph{masked} reconstruction loss, evaluated on the masked entries, and (ii) an auxiliary reconstruction loss on the unmasked entries; together these form a weighted reconstruction loss on the full vector $x_u$, with weights chosen as a task- and modality-dependent implementation choice. We use an $\ell_2$-style reconstruction loss (MSE) on all reconstructed components, including the feature-ID representation (one-hot or fixed code) and the scalar value when present. This keeps the reconstruction target in a common Euclidean space consistent with the geometric role of the decoder. In addition, for two augmented views of the same graph we apply a node-correspondence loss on the \emph{contextual slice} of matched nodes retained in both views; for efficiency and to avoid over-constraining the representation, this term is evaluated on a random subset of valid matched nodes per graph in each minibatch. Uniformity terms use squared cosine similarity surrogates, and node-level uniformity is likewise estimated by sampling valid tokens (optionally from both views) for efficiency. Optimization uses Adam with a cosine learning-rate schedule and linear warmup; some experiment-specific hyperparameters are reported with each experimental setting.

\paragraph{Implementation-detail availability.}
The experiments were run in a proprietary commercial environment. We report the architectural choices, data splits, embedding dimensions, downstream feature construction, and evaluation protocols needed to interpret the results, but most internal low-level implementation details and training hyperparameters are not fully disclosed here. All reported comparisons use the same internal training configuration unless an ablation explicitly states otherwise, so the experimental claims should be read as controlled comparisons within that fixed implementation rather than as a complete public reproduction recipe.

\subsection{Experiment: Regional-scale Arabian Shield dataset}
\label{sec:exp2-arabian-shield}

\paragraph{Study area and geological context.}
Our second experiment targets the \emph{Arabian Shield}, a structurally complex Neoproterozoic shield comprising diverse accreted terranes, sutures, and intrusive bodies. This setting provides a large, heterogeneous multi-modal corpus for node and graph-level pretraining at regional scale, where robustness to uneven coverage and strong geological non-stationarity are central.

\paragraph{Data modalities.}
We assemble a multi-sensor stack over the Arabian Shield using:
\begin{itemize}
    \item \textbf{Surface geochemistry:} Saudi Geological Survey (SGS) surface geochemistry.
    \item \textbf{Optical imagery:} Sentinel-2 raw optical channels (Channel~1 removed), with invalid pixels masked using the Sentinel-2 Scene Classification Layer.
    \item \textbf{Geophysics:} Total Magnetic Intensity (TMI) Reduced-to-Pole (RTP), with partial coverage over the Shield.
    \item \textbf{Regional seismic:} KSA seismic $V_S$ data resampled to 600\,m horizontal spacing, truncated to 10\,km depth, and spatially filtered to the Shield footprint.
    \item \textbf{Topography:} DEM data (used for dataset collation but \emph{not} used as an explicit feature).
\end{itemize}

\paragraph{Graph construction (random-patch sampling).}
Graphs are constructed by randomly selecting a horizontal center $(y_0,x_0)$ within the dataset footprint and extracting a fixed-radius patch. Each sample is an \emph{edge-free} graph whose nodes correspond to observations from the available modalities falling inside a 200\,m radius disc in the horizontal plane:
\[
\sqrt{(y-y_0)^2 + (x-x_0)^2} \le 200~\text{m}.
\]
We retain each observation's vertical coordinate $z$ as part of the node token, but do not use $z$ for inclusion, i.e., the receptive field is a cylinder in $(y,x)$ with full retained vertical structure. Node geometry is encoded in a local relative frame centered at $(y_0,x_0)$, consistent with the relative-geometry-only design of §\ref{sec:georeformer-exp}.

\paragraph{Dataset size and structure.}
The resulting dataset contains approximately $145$k graphs and $951$M total nodes. Graphs contain about $6.6$k nodes on average (std.\ $\approx 1.3$k), ranging from roughly $50$ to $8.0$k nodes per graph. Across these modalities there are 71 distinct feature types, which are tokenized using the feature-type identifier scheme described in §\ref{sec:georeformer-exp}.

\paragraph{Train/validation splits.}
We use a 90\% / 10\% split into training and validation graphs for model selection. Subsequent subsections report embedding diagnostics and downstream analyses under this split.

\subsubsection{Study 1: Regional Embedding Structure and Emergent Zonation Patterns}
\label{sec:exp2-embeddings}

We evaluate the structure of the 256-dimensional embeddings learned by MABLE over the Arabian Shield using a complementary set of spatial, statistical, spectral, and graph-based analyses. Although the model is trained without access to absolute/global geographic coordinates or geological labels, these diagnostics indicate that the learned representation contains spatially coherent, qualitatively geologically plausible structure across multiple scales.

\paragraph{Dimension-Level Spatial Structure and Latent Geometry.}
We begin by visualizing individual embedding dimensions geospatially (Fig.~\ref{fig:dim_maps}). Most dimensions exhibit clear spatial coherence, large contiguous patches, gradients, and sharp transitions, rather than unstructured noise. Several dimensions appear to respond to broad terrane-scale domains, while others emphasize finer lithological texture, suggesting meaningful specialization across embedding channels and a genuinely distributed representation.

A complementary geometric sanity check is the cosine-similarity distribution between 5{,}000 randomly sampled embedding pairs (Fig.~\ref{fig:dot_product_distribution}). Many representation learning pipelines suffer from \emph{anisotropy} (a ``cone effect''), where embeddings concentrate around a dominant direction and cosine similarity becomes less discriminative. Here, the near-zero mean ($\mu \approx 0.054$) is consistent with low anisotropy and broad angular coverage, and there is no evident collapse in the learned embedding distribution. The pronounced positive tail, in turn, reflects dense regional domains in latent space: randomly sampled pairs occasionally fall within the same large-scale terrane or lithological regime, yielding high similarity.

\begin{figure}[h]
    \centering
    \includegraphics[height=0.95\textheight]{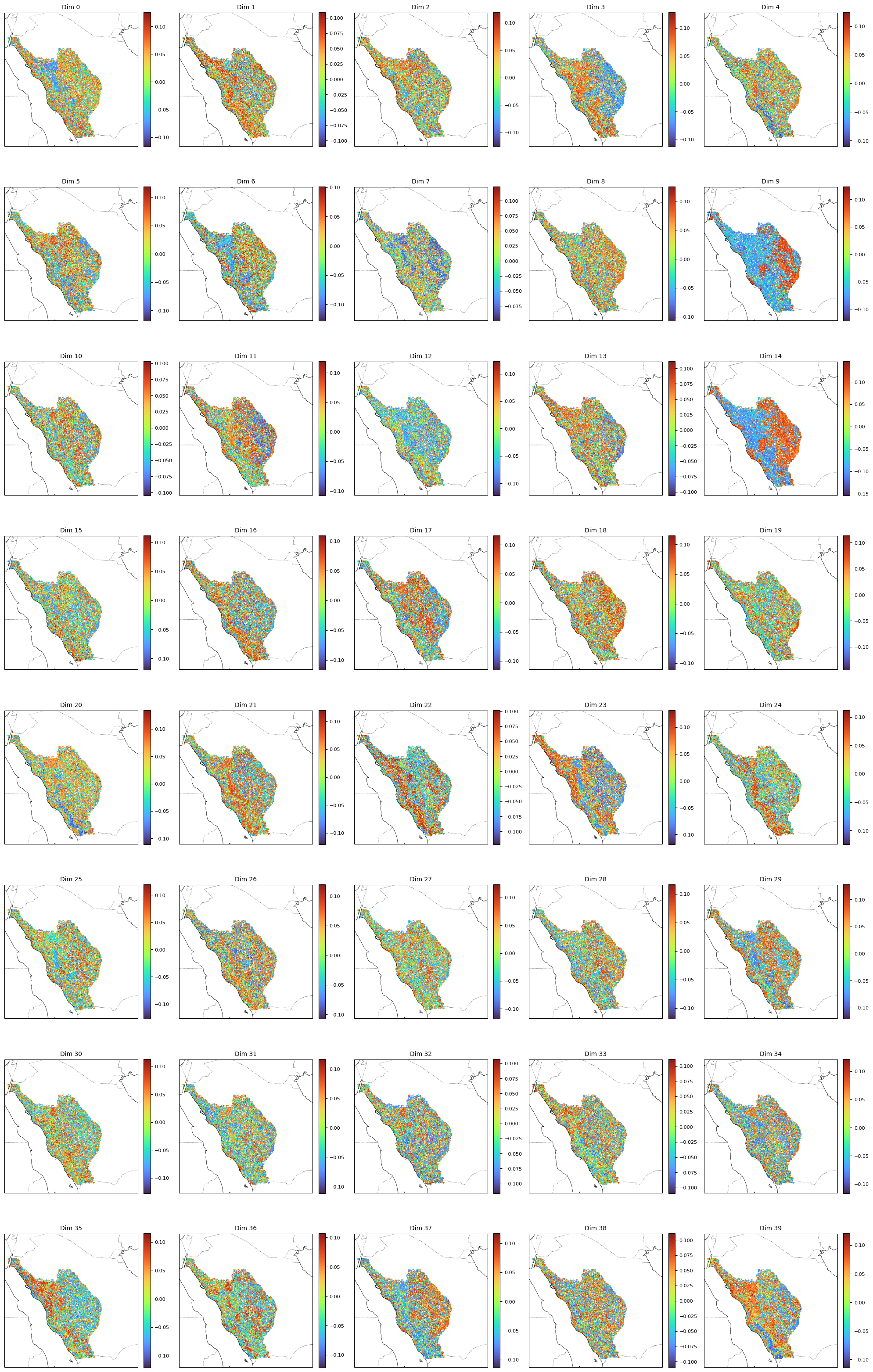}
    \caption{Spatial activation maps for selected Arabian Shield graph-embedding dimensions (Experiment~2). Most dimensions exhibit coherent, non-random spatial structure, consistent with structured geoscience signal.}
    \label{fig:dim_maps}
\end{figure}

\begin{figure}[h]
    \centering
    \includegraphics[width=0.5\textwidth, trim={0 0 0 1.2cm},clip]{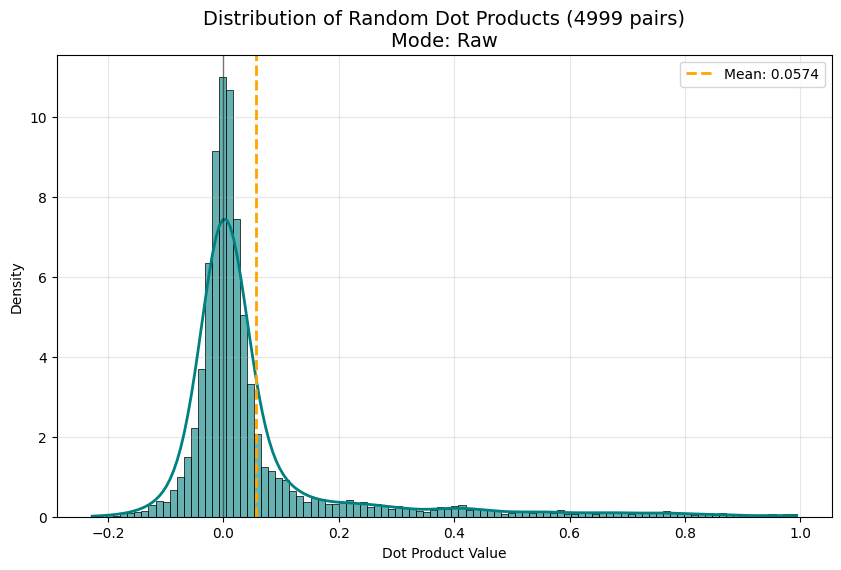}
\caption{
Cosine-similarity distribution for 5{,}000 randomly sampled Arabian Shield graph embedding pairs (Experiment~2, Study 1). The near-zero mean ($\mu \approx 0.054$) is consistent with low anisotropy in the learned embedding distribution, avoiding the common failure mode in self-supervised and contrastive models where representations collapse toward a dominant direction.
The approximately symmetric central mass suggests broad angular coverage of the latent space, while the pronounced long tail reflects dense regional clusters (e.g., terrane-scale domains) encoded without sacrificing global diversity.
}

    \label{fig:dot_product_distribution}
\end{figure}

\paragraph{Manifold Projections Reveal Regional Organization.}
Low-dimensional projections provide a compact view of how the embedding space is organized at regional scale (Fig.~\ref{fig:umap_pca_rgb}). The UMAP RGB map (cosine metric, $n_{\mathrm{neighbors}}{=}30$) exhibits sharply bounded, spatially coherent domains, consistent with terrane-scale segmentation emerging from signal content alone (no absolute/global coordinates were used during training). In contrast, the PCA RGB map captures a broad west--east gradient but appears substantially more diffuse. This combination suggests that the embedding geometry is strongly non-linear: regional structure is present in the representation, but it is not well described by a small number of linear modes. Consistent with this, the first three principal components explain only $\sim$17\% of the variance, indicating that information is distributed across many embedding dimensions rather than concentrated into a few dominant axes.

\begin{figure}[t]
    \centering
    \begin{subfigure}[t]{0.48\textwidth}
        \centering
        \includegraphics[width=\textwidth, trim={0 0 0 1.2cm},clip]{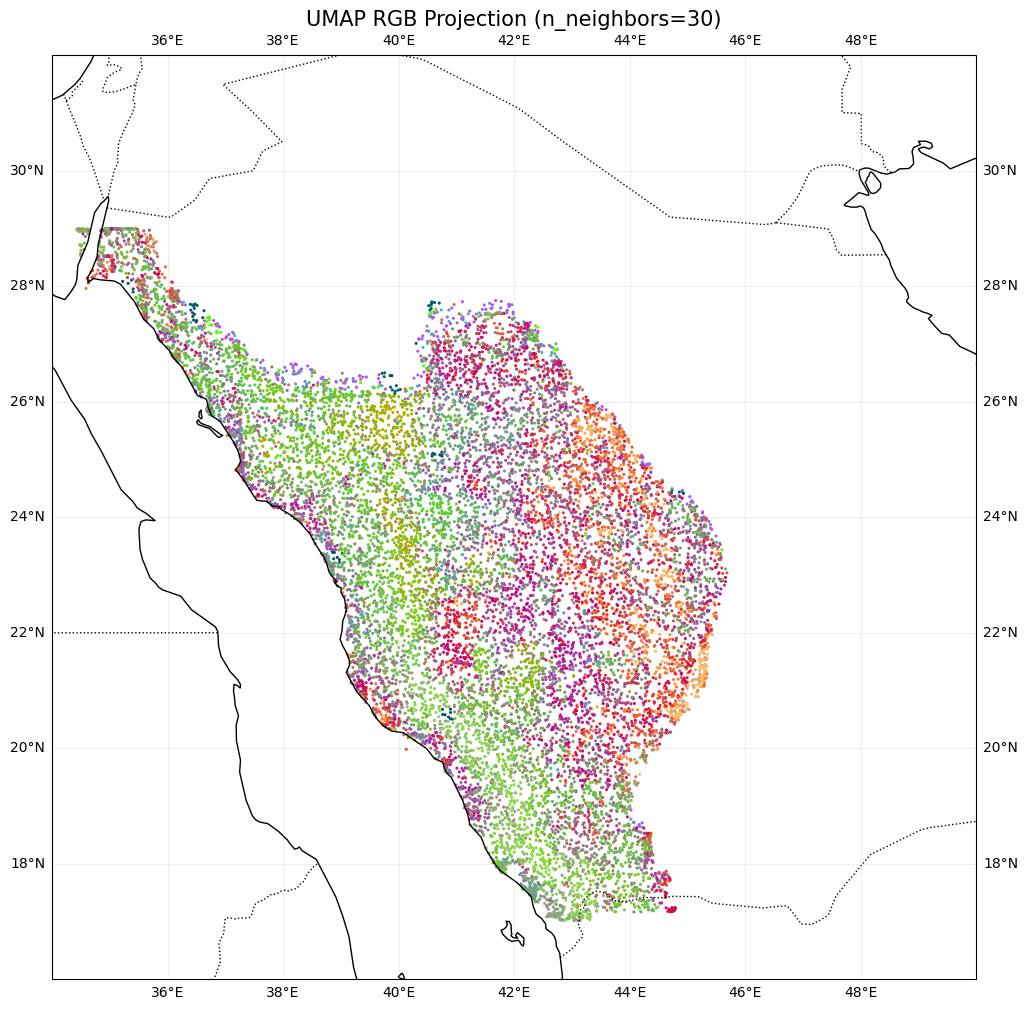}
        \caption{UMAP (cosine), $n_\mathrm{neighbors}=30$.}
        \label{fig:umap_rgb}
    \end{subfigure}\hfill
    \begin{subfigure}[t]{0.48\textwidth}
        \centering
        \includegraphics[width=\textwidth, trim={0 0 0 1.2cm},clip]{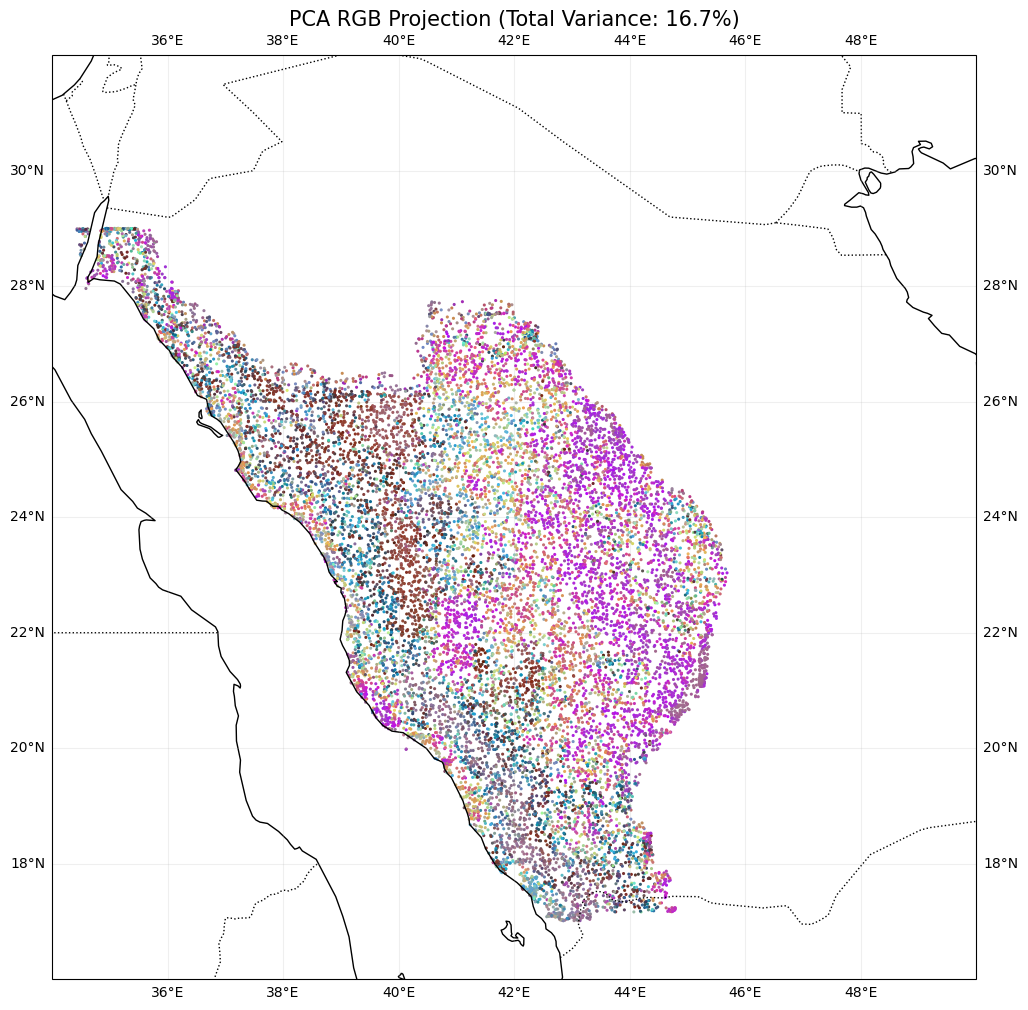}
        \caption{PCA (top 3 components).}
        \label{fig:pca_rgb}
    \end{subfigure}

    \caption{
Low-dimensional projections of the 256-dimensional Arabian Shield graph embeddings (Experiment~2, Study 1), visualized by mapping the first three coordinates of each projection to RGB. UMAP reveals sharply bounded, spatially coherent domains, consistent with terrane-scale segmentation emerging without absolute/global coordinate input during training.
PCA provides a linear baseline that recovers a broad west--east gradient but appears noisier, consistent with variance being distributed across many embedding dimensions rather than concentrated in a few linear modes.}
    \label{fig:umap_pca_rgb}
\end{figure}

\paragraph{Similarity Search Supports Anomalies as Structured Signals.}
To test whether outlier scores reflect repeatable structure rather than preprocessing artifacts, we run a cosine-similarity retrieval over the full validation set using representative \emph{anomalous} and \emph{typical} query embeddings selected by Isolation Forest (global rarity) and Local Outlier Factor (local density). For each query, the similarity-score distribution is sharply concentrated near zero with a heavy right tail (high kurtosis), indicating that high-similarity neighbors are sparse and highly selective. Importantly, the spatial footprint of the top matches differs by query type: anomaly queries concentrate into a small number of compact, geographically coherent patches, whereas normal queries return broader, higher-coverage regions consistent with membership in a dense manifold region. Because ranking uses \emph{only} embedding similarity (coordinates are used solely for plotting), the emergence of contiguous match regions supports the interpretation that the detected anomalies correspond to rare but repeatable, spatially structured signatures in the learned representation (Fig.~\ref{fig:similarity_search}).

\begin{figure*}[t]
    \centering
    \begin{subfigure}[t]{0.49\textwidth}
        \centering
        \includegraphics[width=\textwidth, trim={0 0 0 1.3cm},clip]{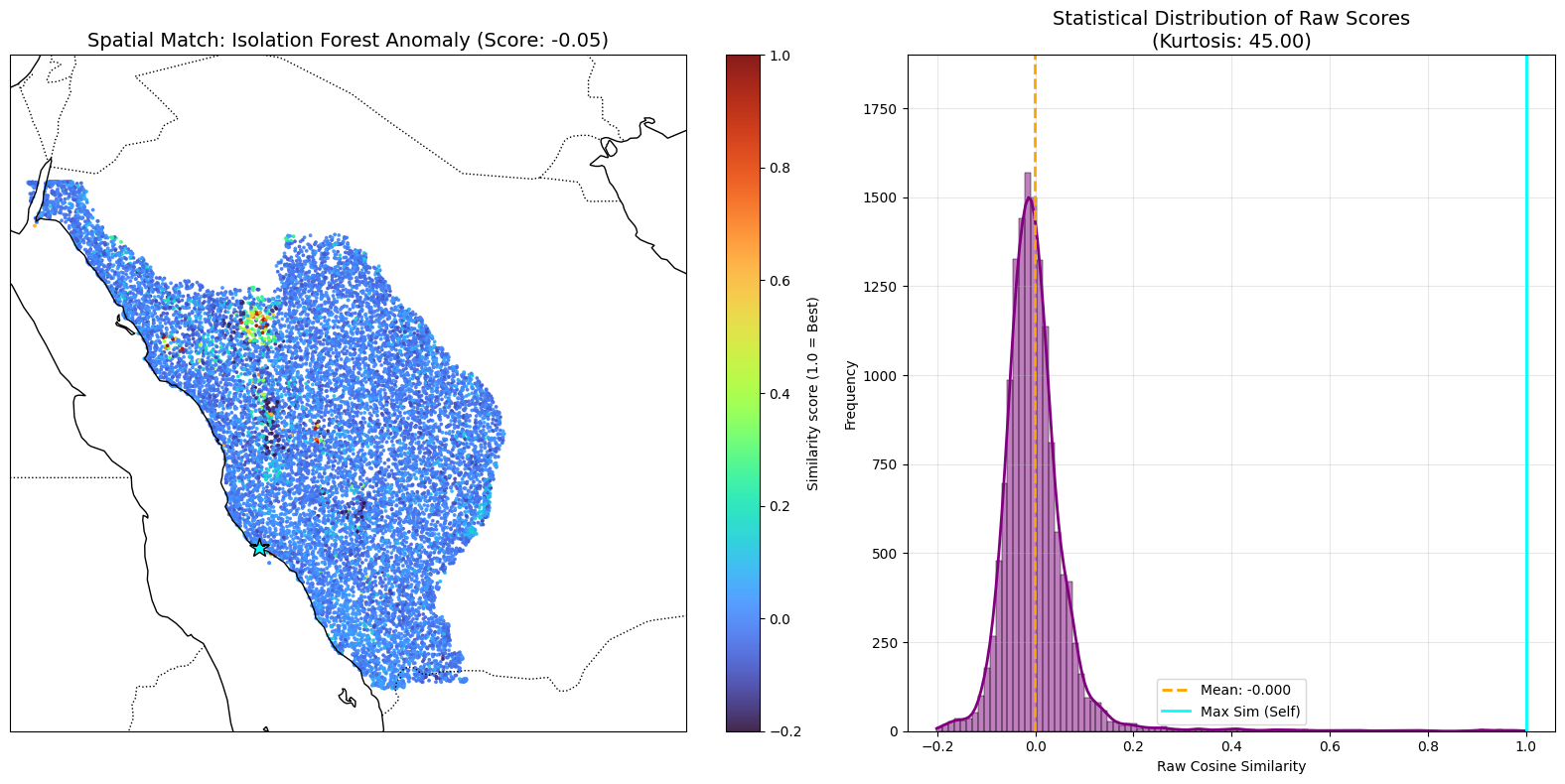}
        \caption{Isolation Forest: anomaly query.}
        \label{fig:sim_if_anom}
    \end{subfigure}\hfill
    \begin{subfigure}[t]{0.49\textwidth}
        \centering
        \includegraphics[width=\textwidth, trim={0 0 0 1.3cm},clip]{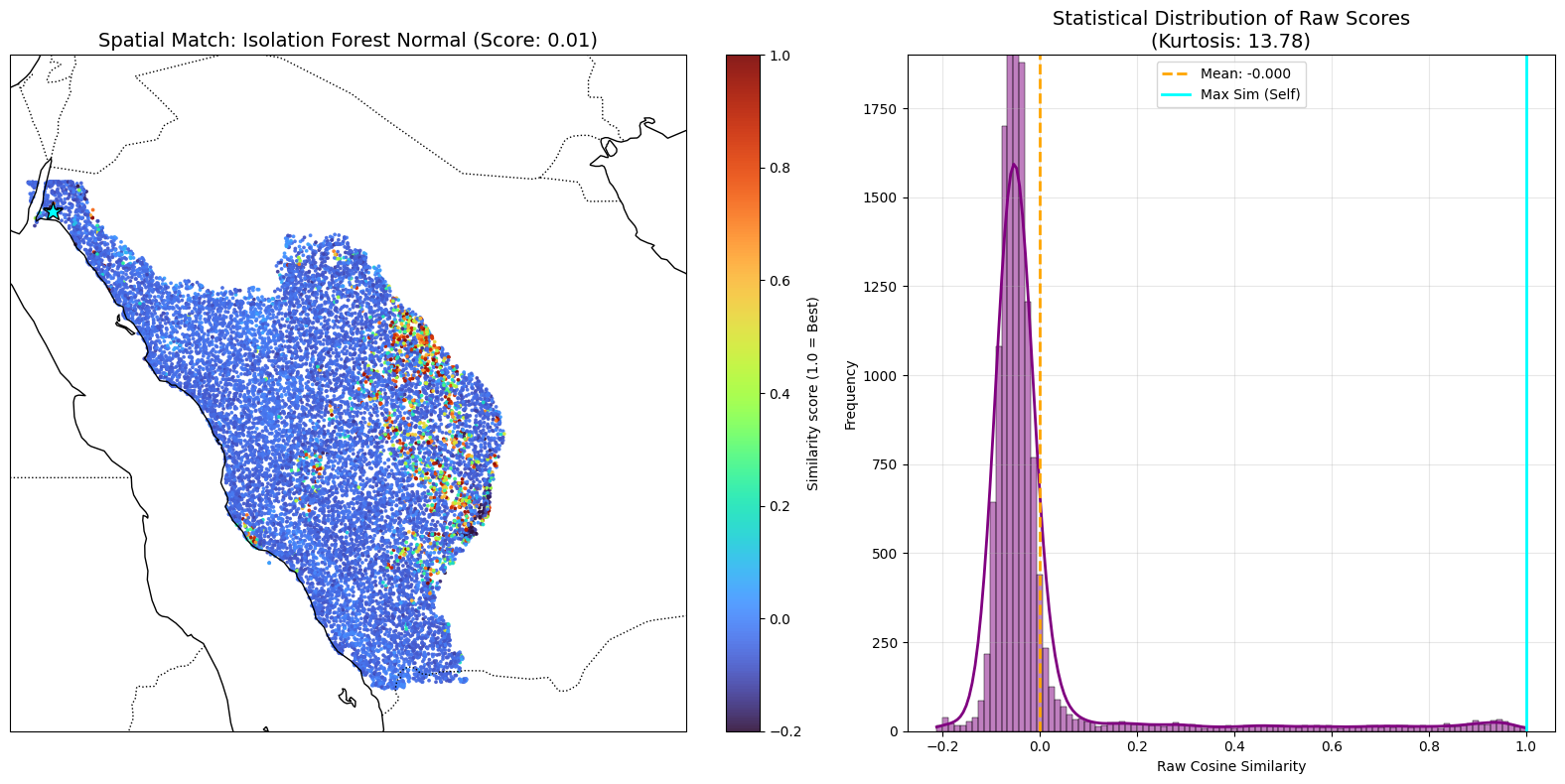}
        \caption{Isolation Forest: normal query.}
        \label{fig:sim_if_norm}
    \end{subfigure}

    \vspace{0.6em}

    \begin{subfigure}[t]{0.49\textwidth}
        \centering
        \includegraphics[width=\textwidth, trim={0 0 0 1.3cm},clip]{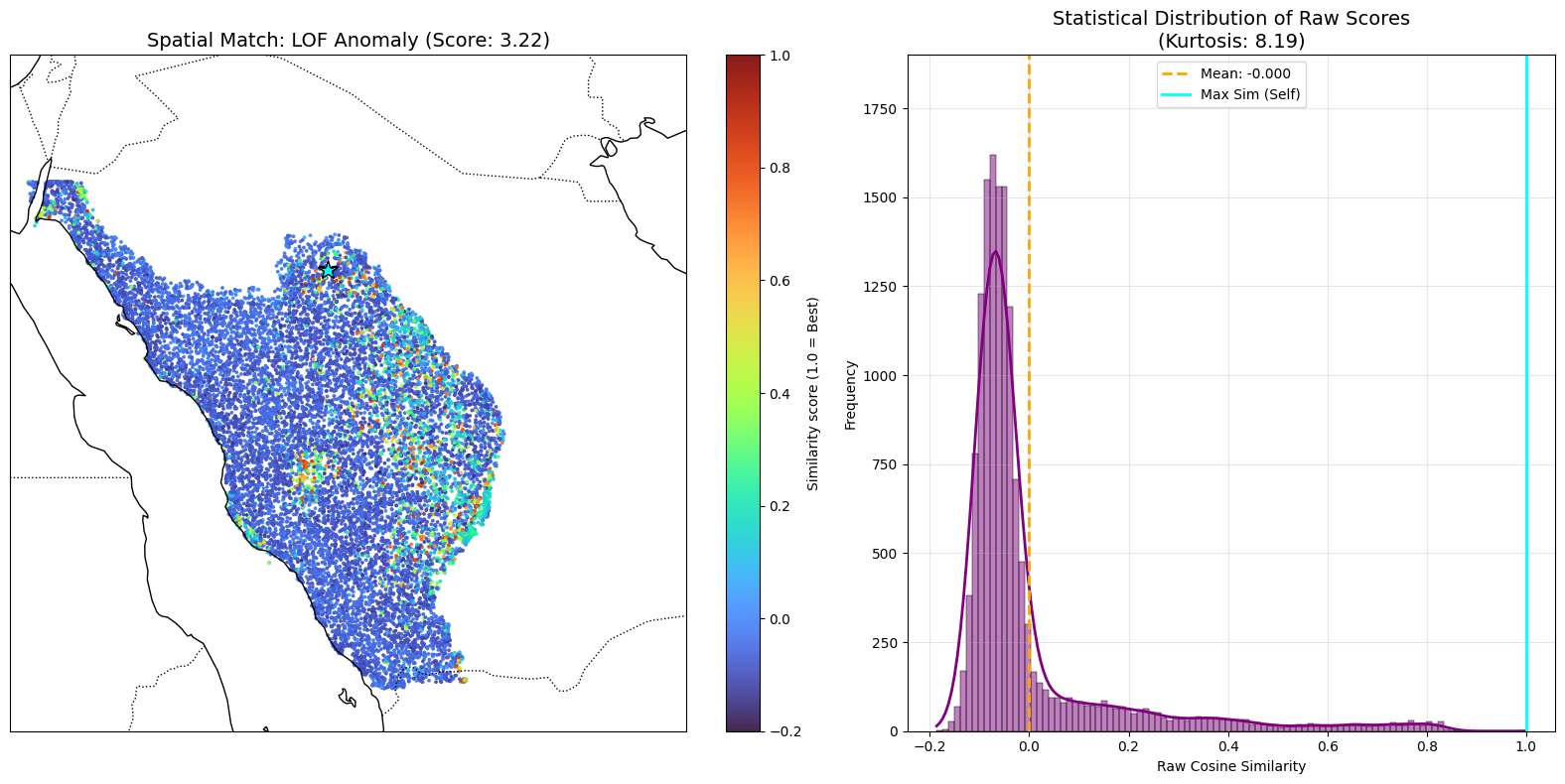}
        \caption{LOF: anomaly query.}
        \label{fig:sim_lof_anom}
    \end{subfigure}\hfill
    \begin{subfigure}[t]{0.49\textwidth}
        \centering
        \includegraphics[width=\textwidth, trim={0 0 0 1.3cm},clip]{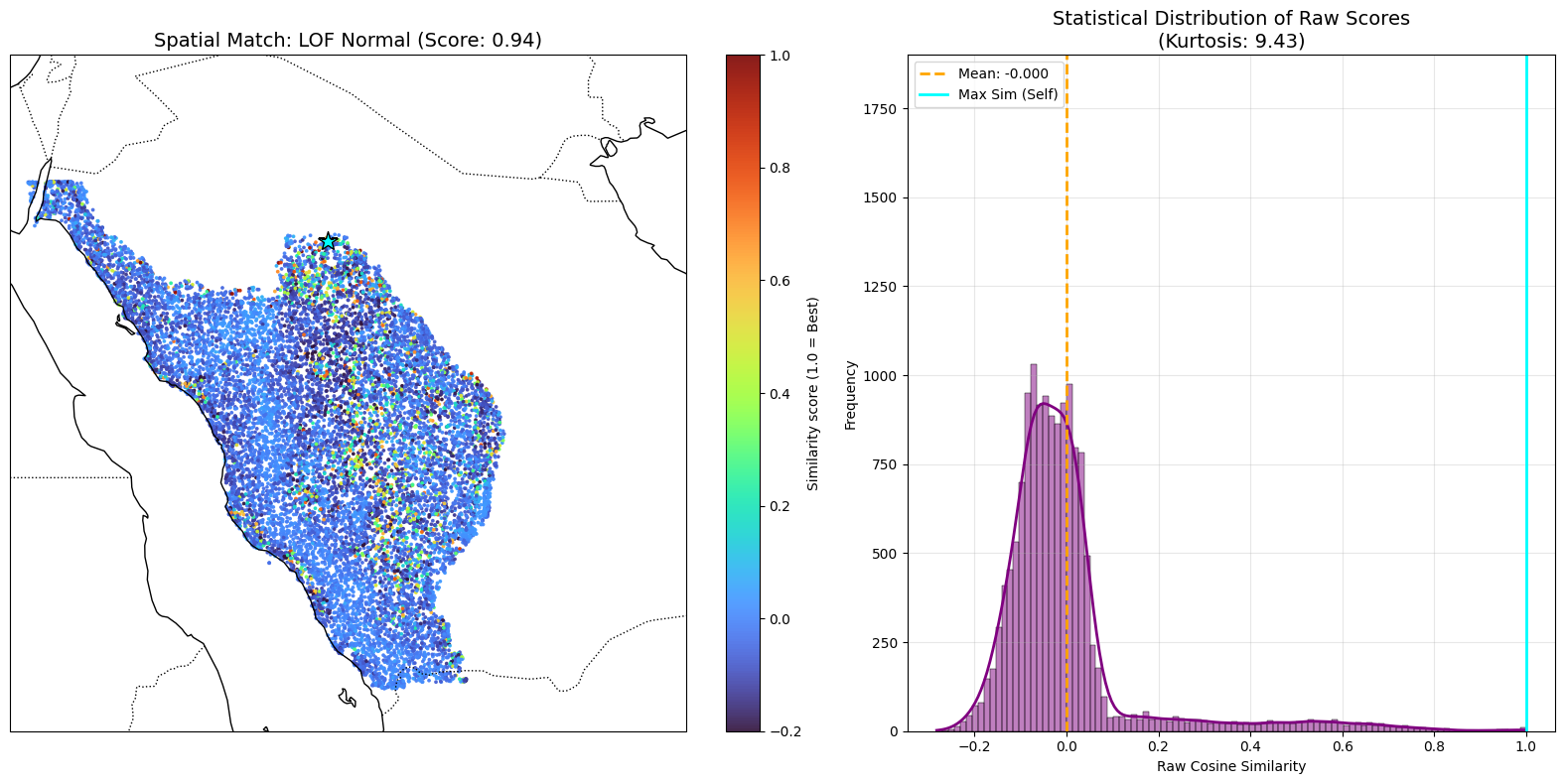}
        \caption{LOF: normal query.}
        \label{fig:sim_lof_norm}
    \end{subfigure}

\caption{
Similarity-search diagnostics for Arabian Shield graph embedding-space outliers (Experiment~2, Study 1). For each query (anomalous vs.\ typical; selected by Isolation Forest or LOF), nearest neighbors are retrieved by cosine similarity in the 256-dimensional embedding space and visualized as (left) the spatial footprint of similarity scores and (right) the similarity-score histogram. In all cases the score distributions are strongly leptokurtic (kurtosis $\approx 45.0$ for IF anomaly, $13.8$ for IF normal, $8.2$ for LOF anomaly, and $9.4$ for LOF normal), with means close to zero and a heavy right tail, indicating that high-similarity neighbors are sparse and highly selective. Anomaly queries exhibit weaker near-duplicate mass in the extreme-high similarity range and concentrate into compact, spatially coherent patches, whereas normal queries show broader high-similarity support consistent with dense manifold regions. Coordinates are not used for retrieval, only for visualization, so spatial coherence reflects structure learned in the embedding space rather than an explicit spatial prior.}

    \label{fig:similarity_search}
\end{figure*}

\paragraph{Spatial Correlation and Characteristic Scale.}
A global spatial correlogram computed over random embedding pairs shows a clear decay of cosine similarity with physical distance, crossing zero at about 300 km and flattening at approximately 400–600 km (Fig.~\ref{fig:global_correlogram}). This provides an estimate of the characteristic correlation length of the learned representation, plausibly associated with regional geological structure. Local radial profiles further differentiate anomalies from normal points: anomalous embeddings exhibit rapid decay to near-orthogonality, while normal points retain weak correlation over longer distances (Fig.~\ref{fig:radial_profiles}).

\begin{figure}[h]
    \centering
    \includegraphics[width=0.65\textwidth, trim={0 0 0 0.8cm},clip]{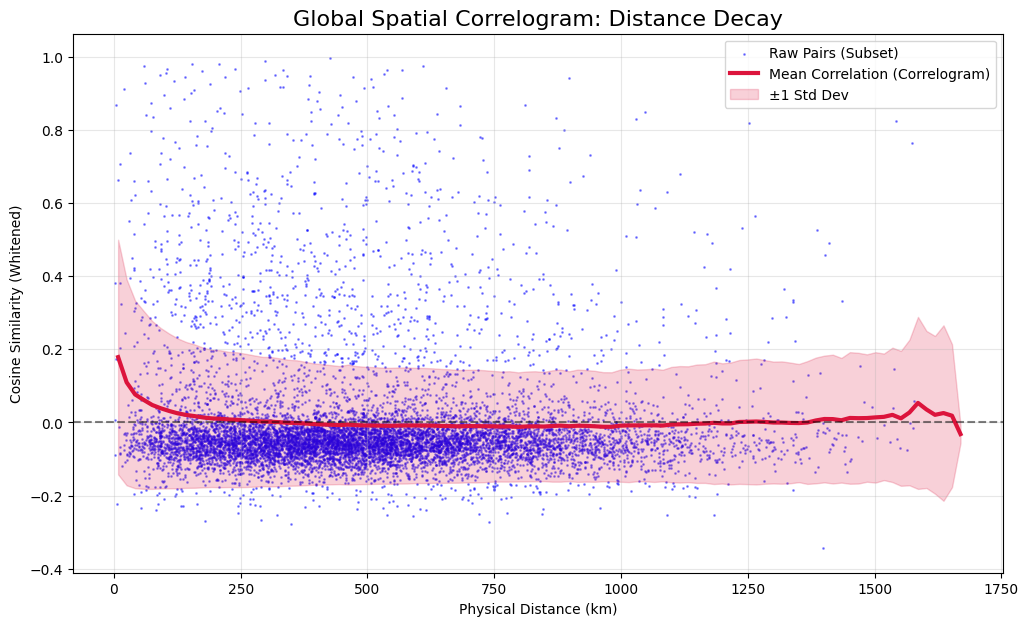}
    \caption{Global spatial correlogram of Arabian Shield graph embedding similarity versus geographic distance (Experiment~2, Study 1). Each blue point is the whitened cosine similarity of a sampled pair; the red curve shows the binned mean similarity, with shaded band indicating $\pm 1$ standard deviation. Similarity decays rapidly with distance, crossing $\approx 0$ near $\sim$300~km and approaching a near-flat regime by $\sim$400--600~km, consistent with terrane-scale continuity captured by the learned representation.}

    \label{fig:global_correlogram}
\end{figure}

\begin{figure}[h]
    \centering
    \begin{subfigure}[t]{0.49\textwidth}
        \centering
        \includegraphics[width=\textwidth, trim={0 0 0 0.8cm},clip]{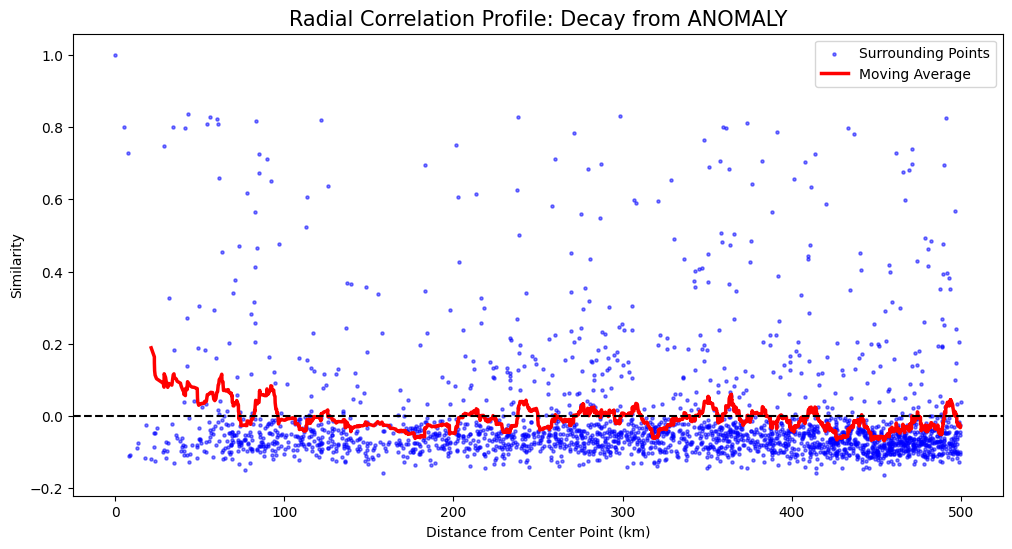}
        \caption{Radial similarity decay from a LOF-selected anomaly reference point.}
        \label{fig:radial_anomaly}
    \end{subfigure}\hfill
    \begin{subfigure}[t]{0.49\textwidth}
        \centering
        \includegraphics[width=\textwidth, trim={0 0 0 0.8cm},clip]{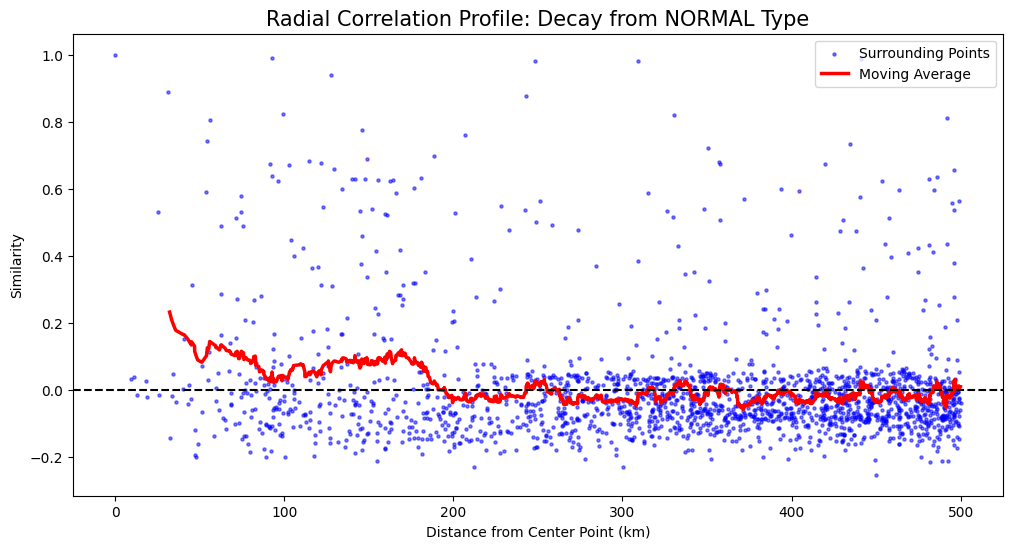}
        \caption{Radial similarity decay from a LOF-selected normal reference point.}
        \label{fig:radial_normal}
    \end{subfigure}
    \caption{Local radial correlograms comparing an Arabian Shield graph embedding outlier versus a typical embedding (Experiment~2, Study 1). Blue points show whitened cosine similarity between the reference embedding and surrounding locations as a function of distance; the red curve is a moving-average trend (dashed line denotes zero similarity). The anomaly decorrelates rapidly to near-orthogonality with many near-zero similarities, indicating a spatially localized signature, whereas the normal point retains weak positive correlation over longer ranges, consistent with membership in a broader regional embedding mode.}
    \label{fig:radial_profiles}
\end{figure}

\paragraph{Spectral Analysis: Scale, Texture, and Anisotropy.}
To quantify the spatial scales encoded by the embeddings, we analyze their frequency content in both 1D and 2D. Along a latitudinal transect (Lat $\approx 20.30^\circ$), the mean embedding power spectrum follows an approximate power law with slope close to $1/f^{0.75}$ over the resolvable band (Fig.~\ref{fig:psd_1d}), consistent with multi-scale geological organization rather than white noise or an overly smoothed field. Because the transect sampling is sparse (grid spacing $\Delta x \approx 2.05$~km), this result is band-limited by the Nyquist wavelength ($\lambda_N \approx 2\Delta x \approx 4.1$~km) and should be interpreted as a regional-scale signature. A complementary 2D Fourier analysis over a dense local patch reveals structured texture in the latent RMS field and direction-dependent spectral decay (Fig.~\ref{fig:fft_2d}), indicating that the embeddings preserve both spatial texture and a weak-to-moderate anisotropic fabric aligned with structural trends. Such approximate power-law spectra are commonly observed in natural spatial fields; here we treat it as evidence of multi-scale organization rather than a strict fractal model.

\begin{figure}[t]
    \centering
    \includegraphics[width=0.8\textwidth, trim={0 0 0 2.0cm},clip]{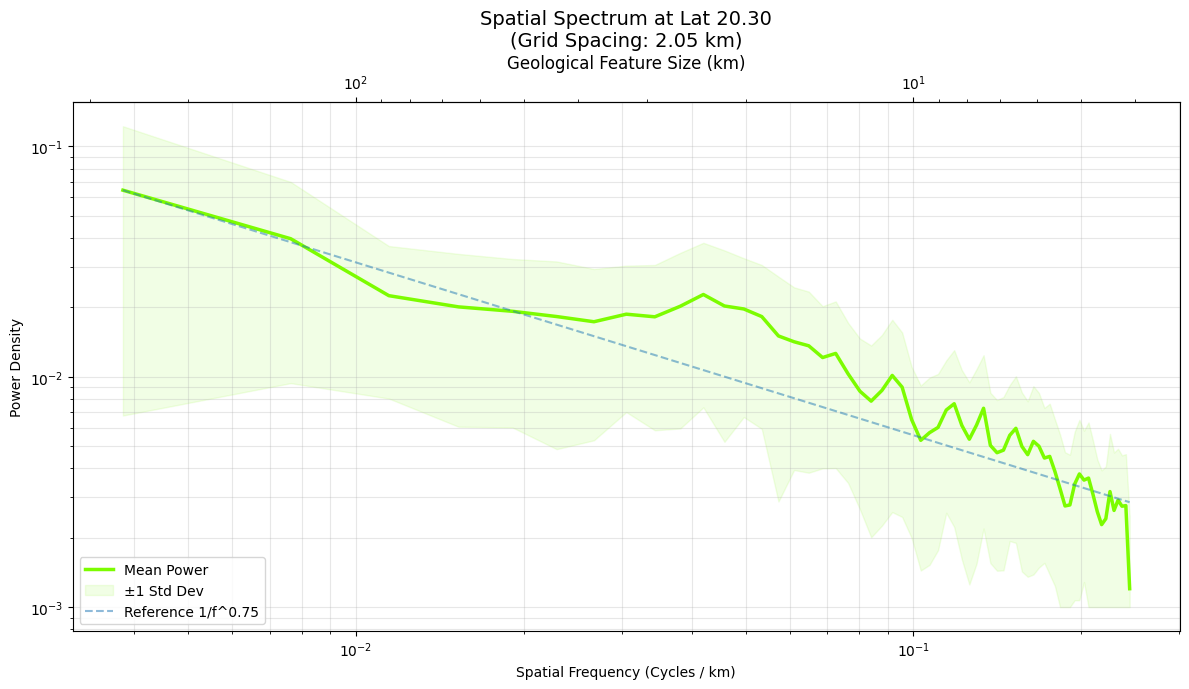}
    \caption{1D power spectral density (PSD) of the Arabian Shield 256-dimensional graph embeddings along a latitudinal transect (Lat $\approx 20.30^\circ$; Experiment~2, Study 1). The green curve shows mean power across embedding dimensions (shaded: $\pm1$ s.d.), compared to a $1/f^{0.75}$ reference trend. Over the resolved band the spectrum exhibits an approximately linear decay in log--log space, consistent with scale-rich (pink-noise-like) structure rather than white noise. The analysis is limited by the transect sampling ($\Delta x \approx 2.05$~km; Nyquist wavelength $\lambda_N \approx 4.1$~km), so sub-kilometre texture learned by the encoder is not directly observable here.}
    \label{fig:psd_1d}
\end{figure}

\begin{figure}[h]
    \centering
    \includegraphics[width=0.85\textwidth]{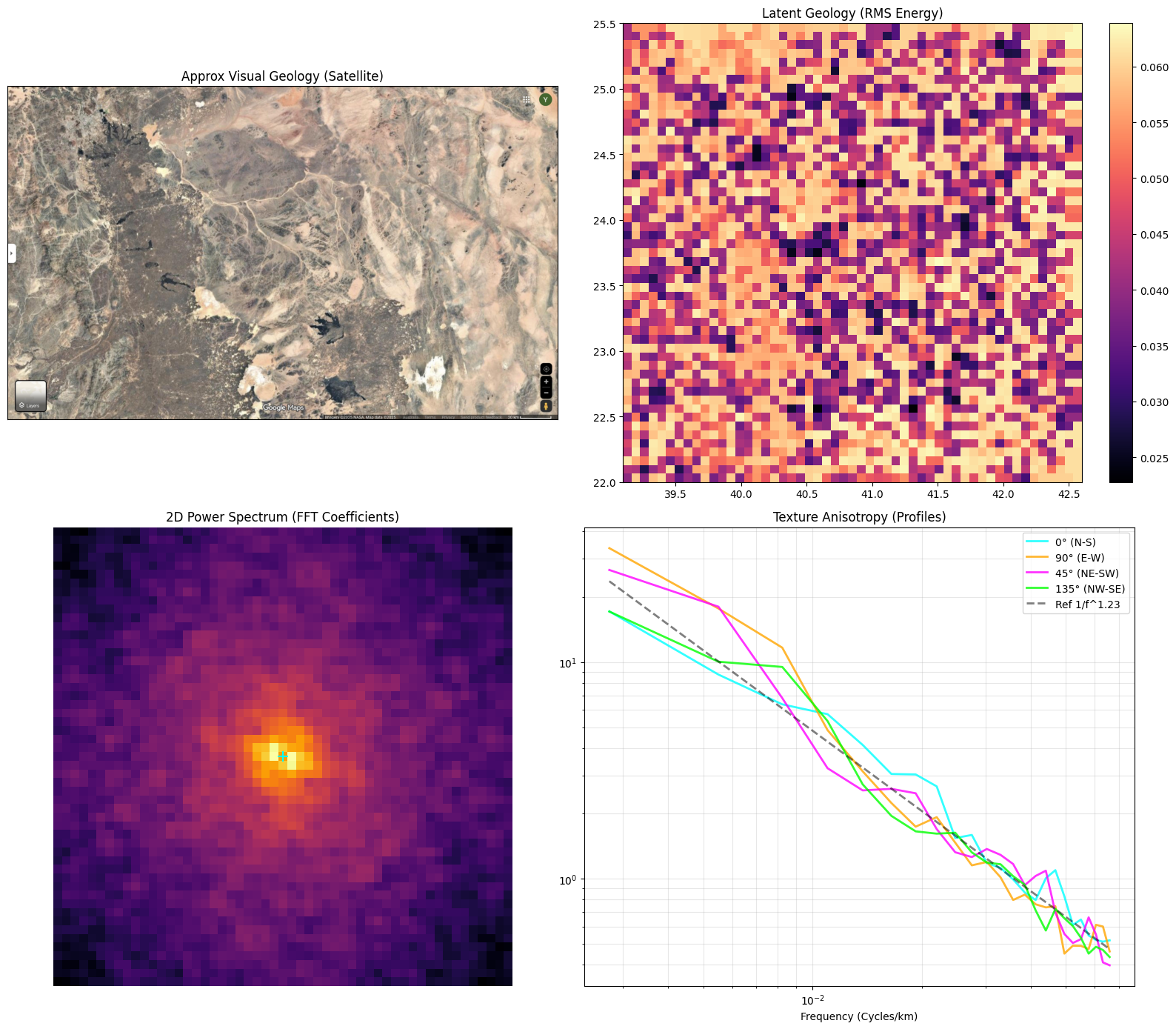}
    \caption{2D spectral analysis of a dense local patch of Arabian Shield graph embeddings (Experiment~2, Study 1). \textbf{Top-left:} approximate surface-domain reference from satellite imagery. \textbf{Top-right:} latent RMS energy map (aggregated embedding amplitude), showing coherent spatial texture in feature space. \textbf{Bottom-left:} 2D FFT power spectrum of the latent field, highlighting non-uniform frequency content. \textbf{Bottom-right:} directional radial PSD profiles (0$^\circ$, 45$^\circ$, 90$^\circ$, 135$^\circ$) with a power-law reference; systematic separation between directions indicates anisotropy (directional fabric) rather than isotropic texture, while the broad-band decay indicates multi-scale structure.}
    \label{fig:fft_2d}
\end{figure}

\paragraph{Graph Spectral Structure and Distributed Representation.}
Finally, we analyze embedding smoothness using Dirichlet energy over a spatial nearest-neighbor graph. Sorting dimensions by energy reveals a near-linear ramp (Fig.~\ref{fig:dirichlet_energy}), with no sharp separation between low- and high-frequency channels. Even the smoothest dimensions exhibit substantial variation, while the highest-energy dimensions encode fine-scale texture. This confirms a distributed, multi-frequency representation rather than a small set of dominant components.

\begin{figure}[h!]
    \centering
    \includegraphics[width=0.65\textwidth, trim={0 0 0 0.8cm},clip]{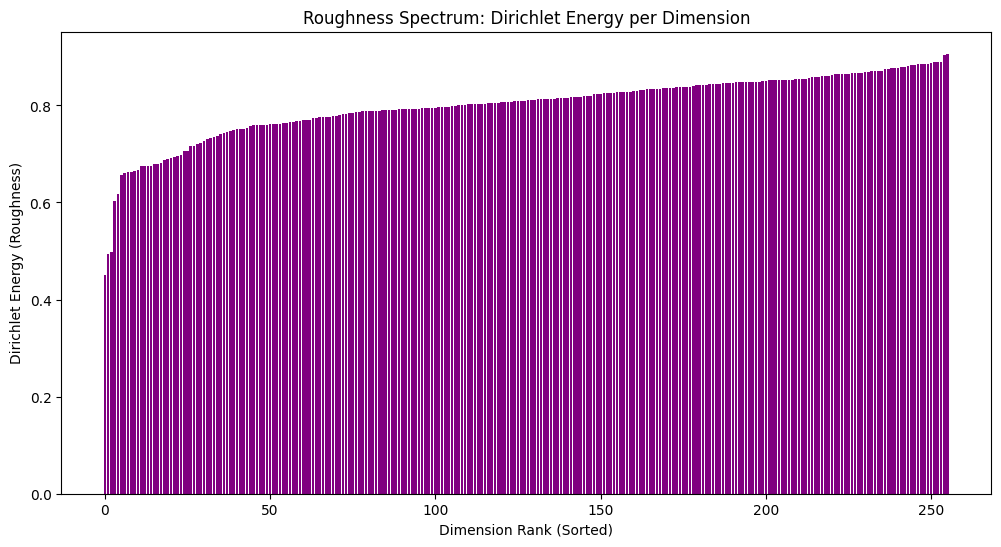}
    \caption{Dirichlet energy (graph Laplacian quadratic form) computed for each Arabian Shield graph embedding dimension on a $k$-NN spatial graph, sorted from smoothest to roughest (Experiment~2, Study 1). The absence of a sharp elbow and the near-linear ramp indicate that spatial frequency content is distributed across channels rather than concentrated in a small subset; even the lowest-energy dimensions retain nontrivial spatial structure, while the highest-energy dimensions capture localized texture.}

    \label{fig:dirichlet_energy}
\end{figure}

\paragraph{Qualitative Comparison via Unsupervised Segmentation.}
To qualitatively compare embedding clusters with visible surface domains, we apply K-Means clustering ($k=3$) to the embeddings within a dense local patch and compare the result to satellite imagery (Fig.~\ref{fig:kmeans_ground_truth}). The segmentation aligns closely with visible domains: a dark, rugged western belt consistent with mafic lithologies; a high-albedo eastern domain consistent with felsic or sandy cover; and an intermediate unit concentrated near contacts. Importantly, this spatial coherence emerges from the embedding vectors alone, with coordinates and imagery used only for visualization, suggesting that the embedding space captures surface-domain structure.

\begin{figure}[h]
    \centering
    \includegraphics[width=0.95\textwidth, trim={0 0 0 0.7cm},clip]{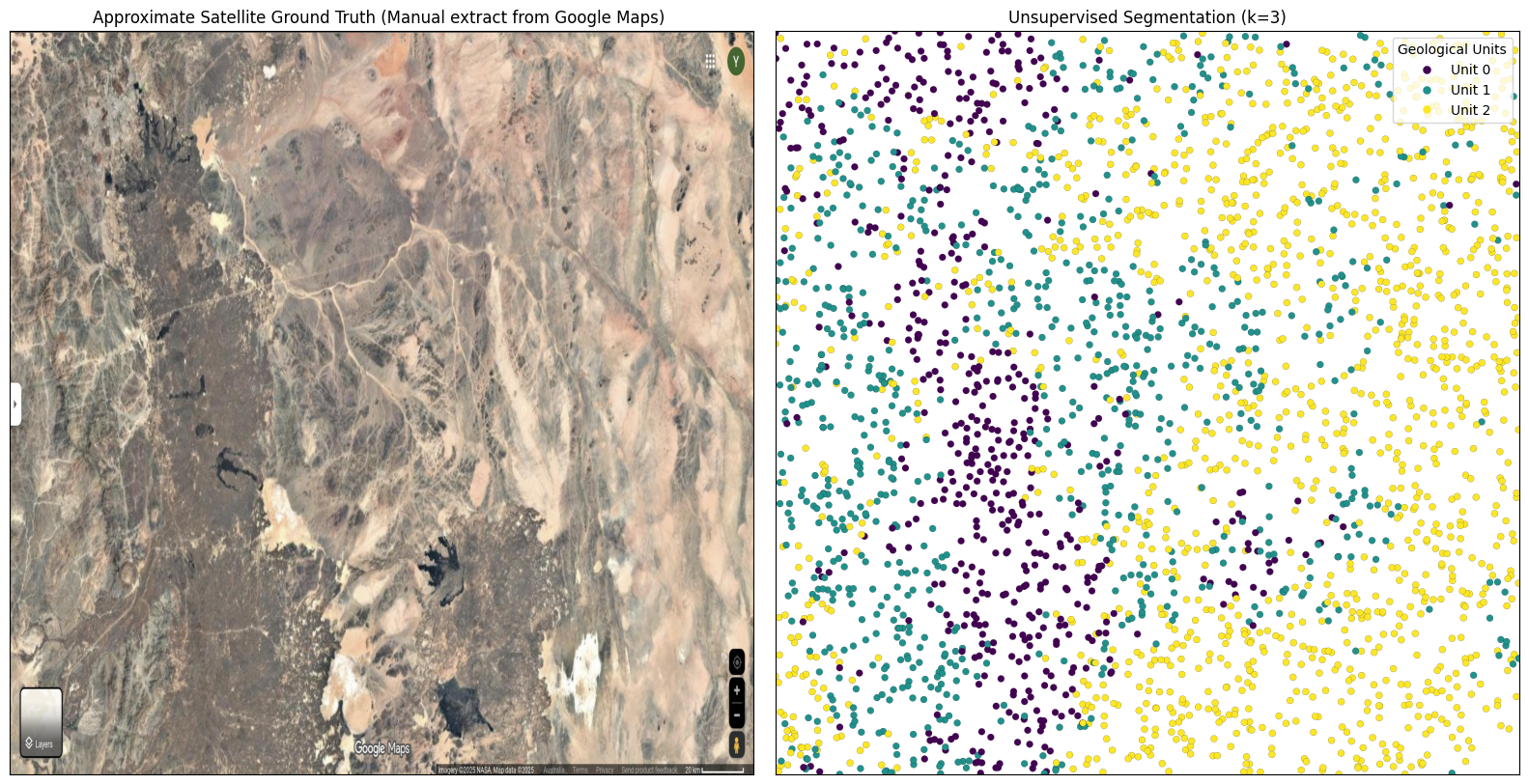}
\caption{
Left: optical satellite imagery of the patch, used here as a qualitative surface reference (Experiment~2, Study 1).
Right: unsupervised K-Means clustering ($k{=}3$) applied to the 256-dimensional Arabian Shield graph embeddings for the same area.
The resulting segments are spatially coherent and align with major visible domains, including a dark, rugged western belt, a lighter high-albedo eastern domain, and an intermediate/contact-associated unit.
From an ML perspective, the agreement is consistent with improved \emph{clusterability} of the learned representation: a simple centroid-based method (Euclidean Voronoi partition in embedding space) recovers coherent, separable groups without supervision.
Clustering is performed purely in embedding space (coordinates and imagery are used only for visualization).
}
    \label{fig:kmeans_ground_truth}
\end{figure}

\paragraph{Summary.}
Across independent diagnostics, the Arabian Shield embeddings exhibit consistent, spatially coherent and qualitatively geologically plausible structure without using absolute/global coordinates or labels during training. Individual dimensions form coherent spatial fields (Fig.~\ref{fig:dim_maps}), while random-pair similarities have near-zero mean ($\mu\approx0.054$), consistent with no strong embedding collapse and with a heavy right tail reflecting clustered terrane-scale domains (Fig.~\ref{fig:dot_product_distribution}). Nonlinear organization is evident in UMAP, which yields sharply bounded regional zones compared to a diffuse linear PCA baseline (only $\sim$17\% variance in the first three PCs; Fig.~\ref{fig:umap_pca_rgb}). In a local patch, even a simple K-Means partition ($k{=}3$) recovers spatially coherent units visually aligned with visible surface domains (Fig.~\ref{fig:kmeans_ground_truth}). Outliers selected by IF/LOF are supported by selective, leptokurtic similarity distributions and geographically compact match footprints (Fig.~\ref{fig:similarity_search}). Finally, similarity decays with distance at terrane scales (zero-crossing near $\sim$300~km; Fig.~\ref{fig:global_correlogram}) and spectral/graph analyses indicate multi-scale texture with distributed spatial frequencies across embedding channels (Figs.~\ref{fig:psd_1d}--\ref{fig:dirichlet_energy}).

\subsubsection{Study 2: Prospect-scale transfer via a dense embedding field over Suwaj}
\label{sec:exp2-suwaj-dense}

Study~1 characterized embedding structure at regional scale on the Arabian Shield. Here we test whether the same pretrained encoder \emph{transfers across scale} when evaluated densely over a focused area of interest (AOI) on a new dense evaluation grid, and whether the resulting embedding field supports reproducible, GIS-consumable \emph{unsupervised domain maps} and \emph{novelty proxies} (Figure~\ref{fig:suwaj_layers}).

Because the model is trained without explicit absolute/global geographic coordinates or geological labels, the key scientific question is whether the learned representation nevertheless induces a \emph{spatially coherent latent field} at prospect scale. Concretely, we test whether dense evaluation yields: (i) stable, contiguous domains rather than fragmented ``salt-and-pepper'' structure; (ii) sharp transitions localized in space (consistent with boundaries in the underlying signals); and (iii) controllable granularity under explicit analysis knobs (e.g., varying $K$ in K-Means) instead of unstable partitioning.

\paragraph{Dense embedding grid (100\,m).}
We evaluate the encoder trained in Study~1 (trained on random-patch samples distributed across the Arabian Shield) on a regular 2D grid over the Suwaj AOI at 100\,m spacing. The resulting field contains $1{,}250{,}414$ embedding vectors of dimension 256, defining a dense latent field $E(x,y)\in\mathbb{R}^{256}$ over the AOI. The AOI spans approximately 23.26--24.40$^\circ$N and 43.55--44.51$^\circ$E (CRS: EPSG:4979), with vertical extent $z \in [-10{,}000,\, 1{,}372]$\,m. Importantly, the model and preprocessing are held fixed; only the evaluation sampling density and spatial support change, making this a controlled test of representation stability under a new spatial regime.

\paragraph{Derived layers as controlled probes of representation structure.}
To probe the organization of the dense embedding field, we derive several families of raster layers (exported as Cloud-Optimized GeoTIFFs for reproducible GIS analysis).
(1) \textbf{Manifold projections} (PCA, UMAP) and corresponding RGB composites provide qualitative visualization of neighborhood structure in embedding space (Figure~\ref{fig:suwaj_layers}a--b), while individual principal components provide interpretable scalar fields (Figure~\ref{fig:suwaj_layers}e--f).
(2) \textbf{Unsupervised zonation} via K-Means across $K\in\{4,8,16,32,64\}$ supplies an explicit control knob on domain granularity; categorical maps at fixed $K$ summarize domain partitions (Figure~\ref{fig:suwaj_layers}c).
(3) \textbf{Novelty / boundary proxies} include distance-to-centroid under K-Means (Figure~\ref{fig:suwaj_layers}d) and multi-scale anomaly scores (Isolation Forest pyramid; Figure~\ref{fig:suwaj_layers}g) that emphasize signatures that are rare in embedding space and/or concentrated near transitions. Finally, texture proxies such as gray-level co-occurrence matrix (GLCM) statistics computed on selected components provide an additional controlled view of spatial heterogeneity (Figure~\ref{fig:suwaj_layers}h).

\paragraph{Qualitative evidence of coherent domains and multi-scale structure.}
The Suwaj rasters indicate that the embedding-derived layers are not spatially random: both nonlinear (UMAP) and linear (PCA) composites form contiguous regions separated by sharp boundaries (Figure~\ref{fig:suwaj_layers}a--b). Consistent with the strongly non-linear geometry observed at regional scale (Study~1), the UMAP composite tends to emphasize discrete, sharply bounded domains, while PCA captures smoother large-scale variation. K-Means zonation exhibits the expected coarse-to-fine behavior as $K$ increases: low $K$ recovers macro-domains, whereas higher $K$ refines these into more detailed subdomains (example at $K{=}8$ in Figure~\ref{fig:suwaj_layers}c). The distance-to-centroid layers provide a complementary continuous view, highlighting mixed/transition zones and locally atypical signatures that are difficult to capture with categorical maps alone (Figure~\ref{fig:suwaj_layers}d). The anomaly and texture proxies further concentrate on spatially localized deviations and heterogeneous structure (Figure~\ref{fig:suwaj_layers}g--h), serving as hypothesis-generating layers for downstream validation rather than geological labels.

\paragraph{Summary.}
This study demonstrates a scale-transfer property of the learned representation: a regional pretrained encoder, trained without absolute/global coordinate input, can be evaluated densely on a 2D grid at 100\,m spacing to yield a spatially coherent embedding field over Suwaj (Figure~\ref{fig:suwaj_layers}). While these layers are not geological labels, their consistency across projection methods, the controllable granularity via $K$, and the agreement between categorical (zonation) and continuous (distance/anomaly/texture) probes provide evidence that the embedding space captures structured, multi-scale variation suitable for downstream interpretation and targeting.

\begin{figure*}[t]
\captionsetup{font=small,skip=2pt}
\centering

\begin{subfigure}[t]{0.24\textwidth}
  \includegraphics[width=\linewidth]{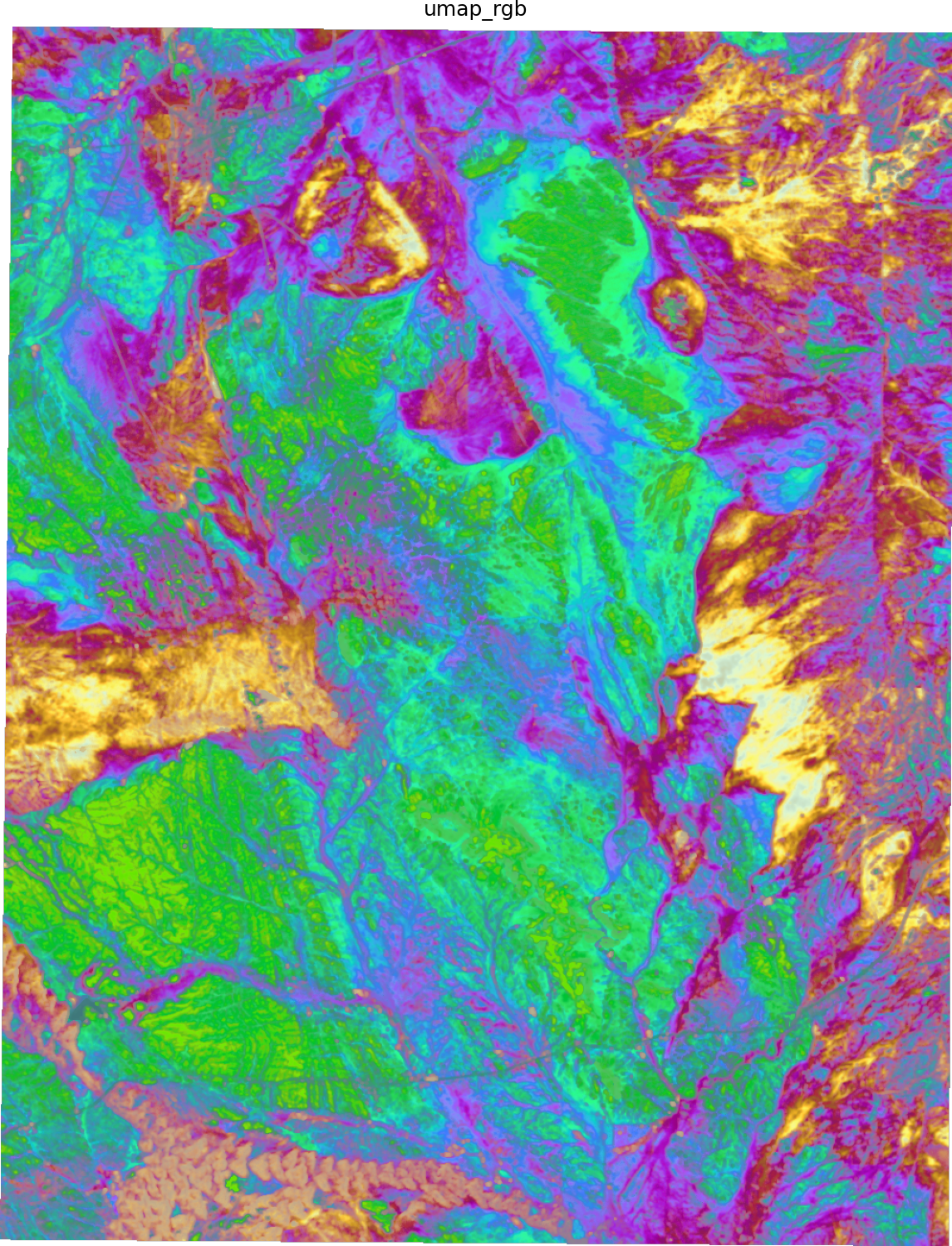}
  \caption{UMAP RGB}
\end{subfigure}\hfill
\begin{subfigure}[t]{0.24\textwidth}
  \includegraphics[width=\linewidth]{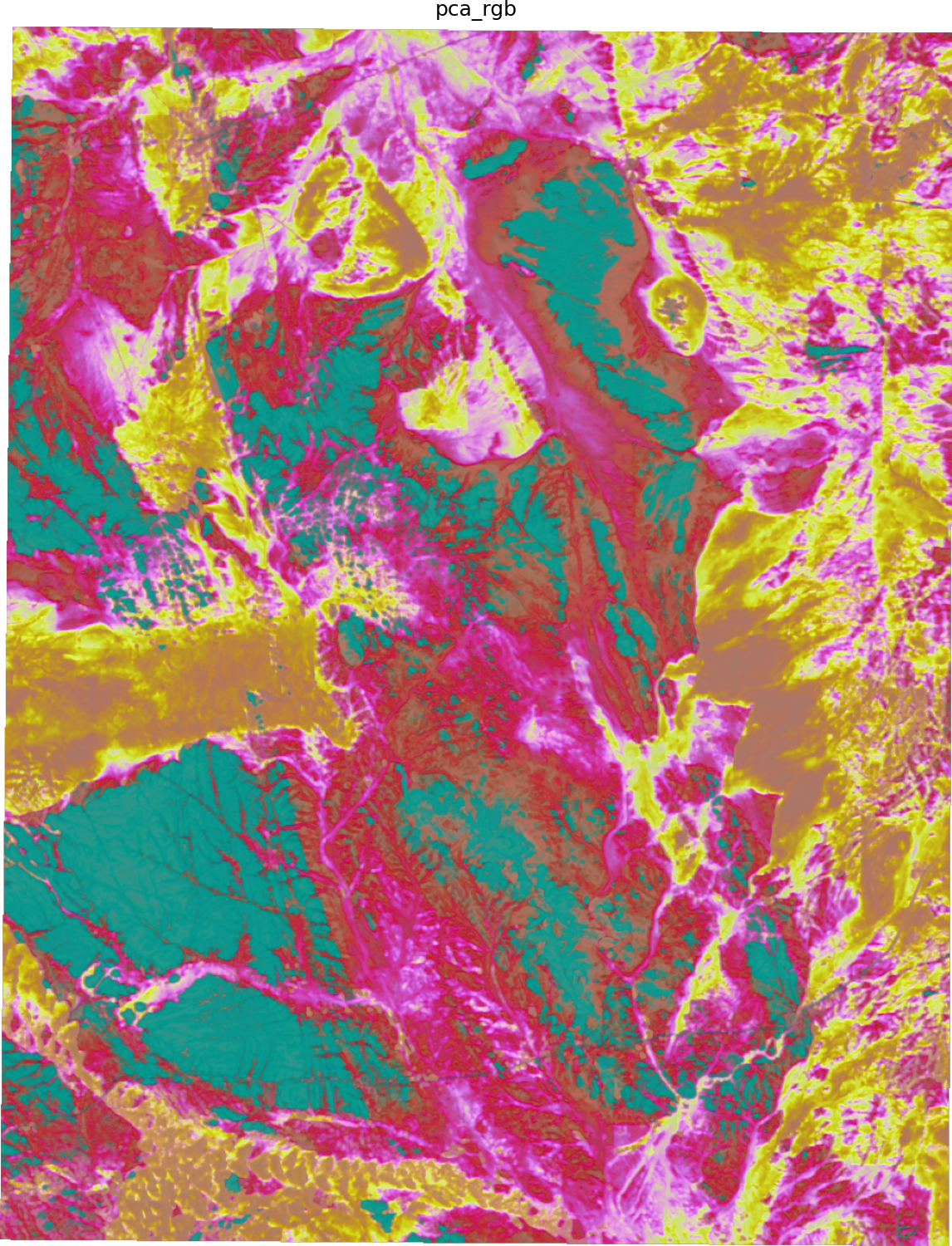}
  \caption{PCA RGB}
\end{subfigure}\hfill
\begin{subfigure}[t]{0.24\textwidth}
  \includegraphics[width=\linewidth]{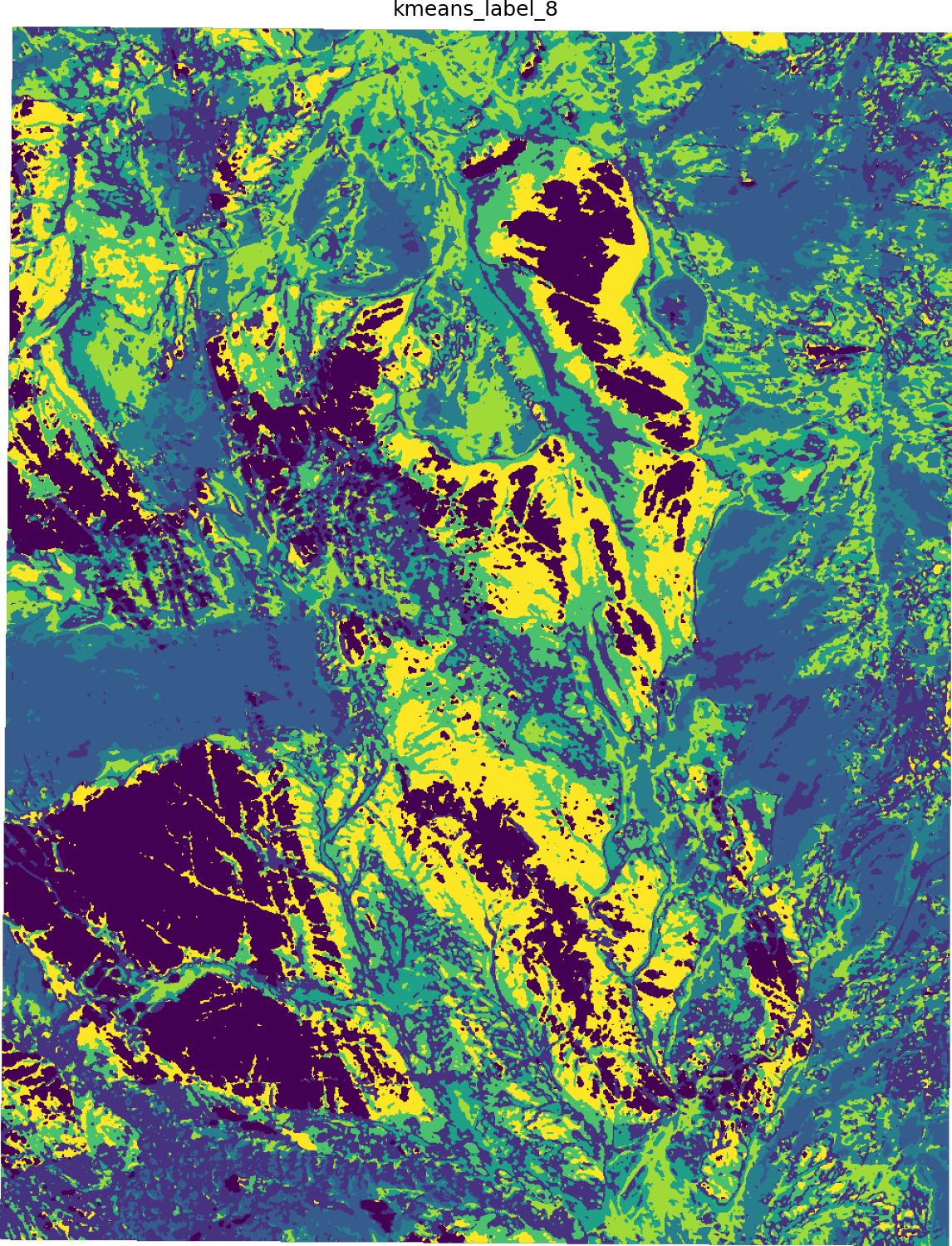}
  \caption{$K$-Means labels ($K{=}8$)}
\end{subfigure}\hfill
\begin{subfigure}[t]{0.24\textwidth}
  \includegraphics[width=\linewidth]{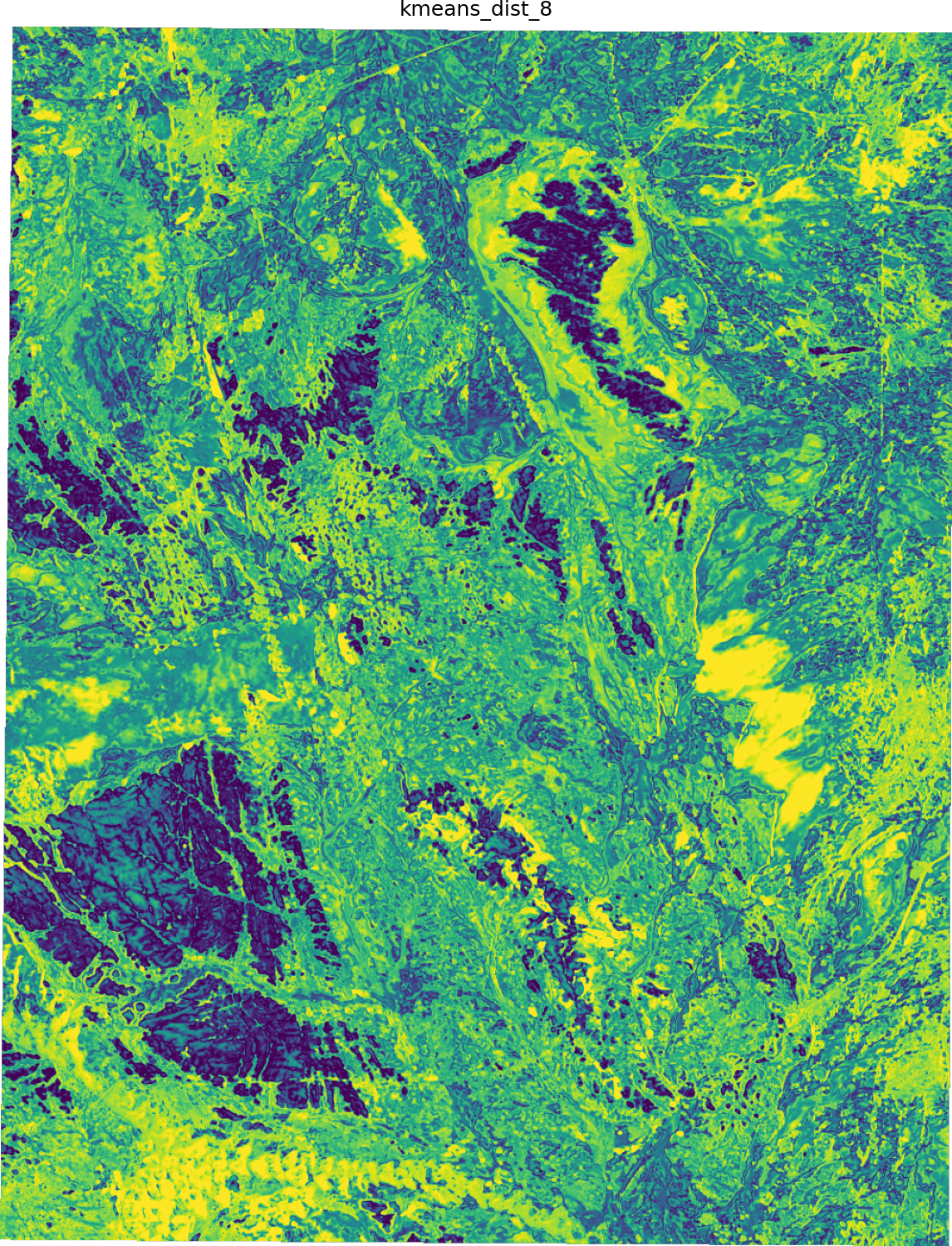}
  \caption{Dist. to centroid ($K{=}8$)}
\end{subfigure}

\vspace{0.2em} 

\begin{subfigure}[t]{0.24\textwidth}
  \includegraphics[width=\linewidth]{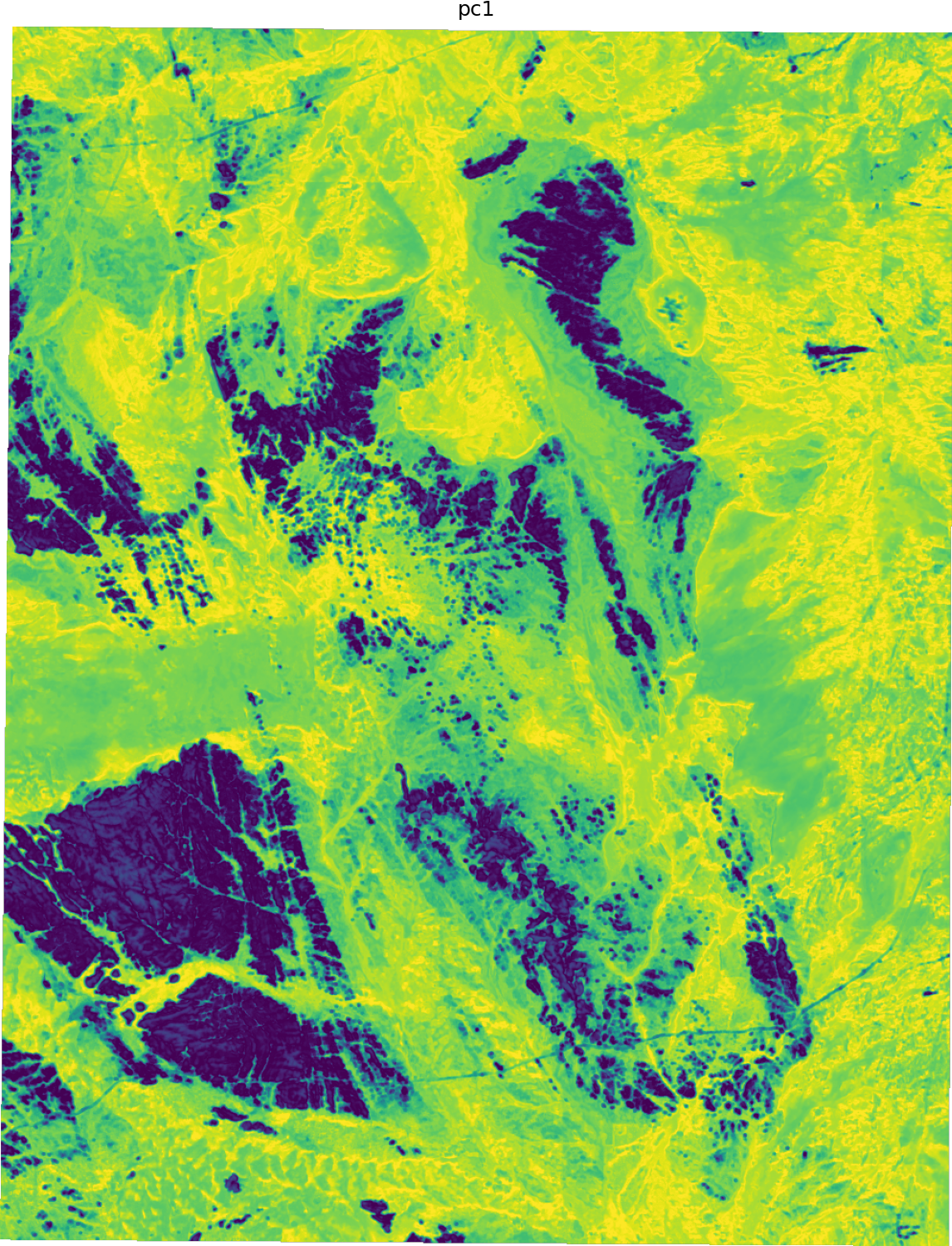}
  \caption{PC1}
\end{subfigure}\hfill
\begin{subfigure}[t]{0.24\textwidth}
  \includegraphics[width=\linewidth]{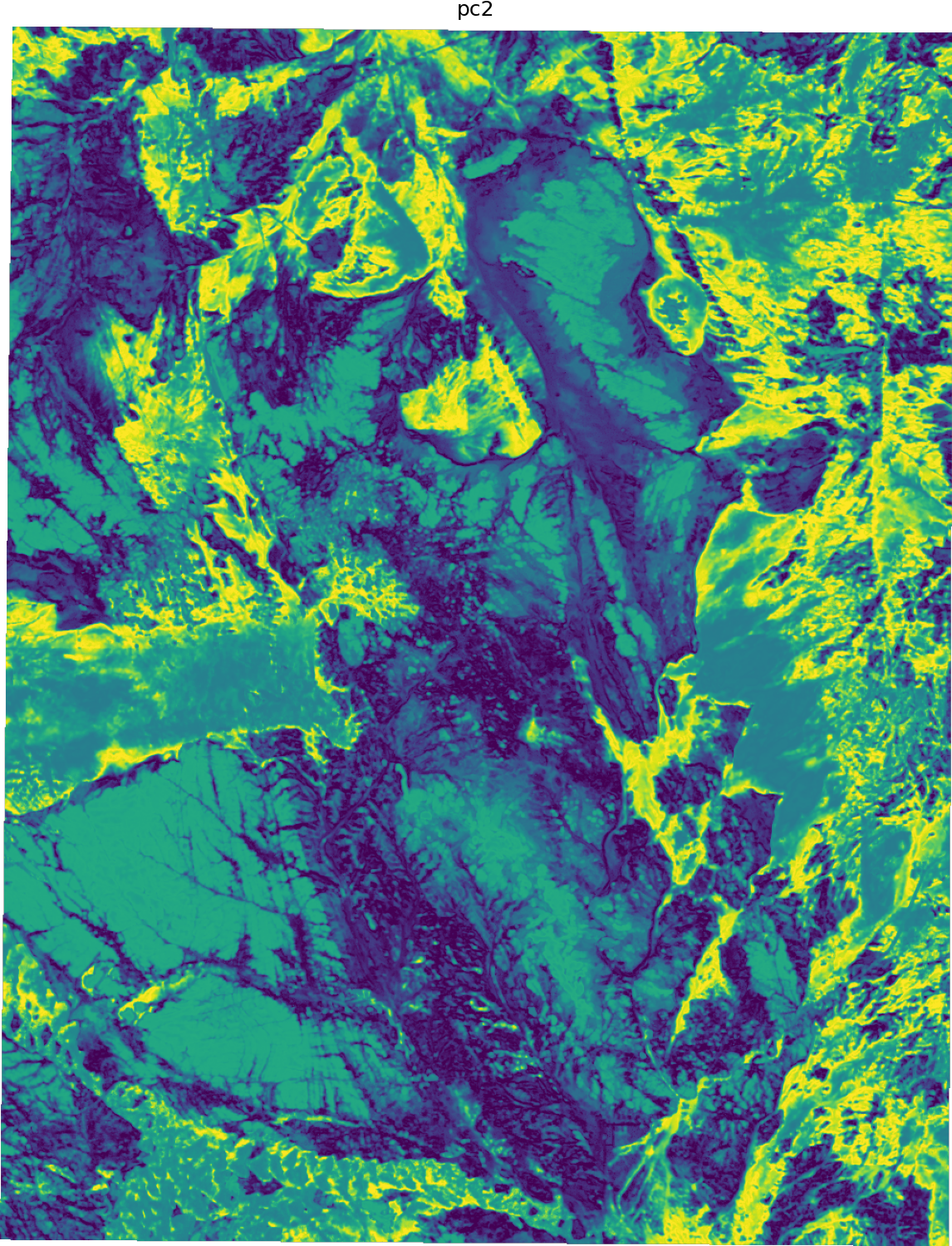}
  \caption{PC2}
\end{subfigure}\hfill
\begin{subfigure}[t]{0.24\textwidth}
  \includegraphics[width=\linewidth]{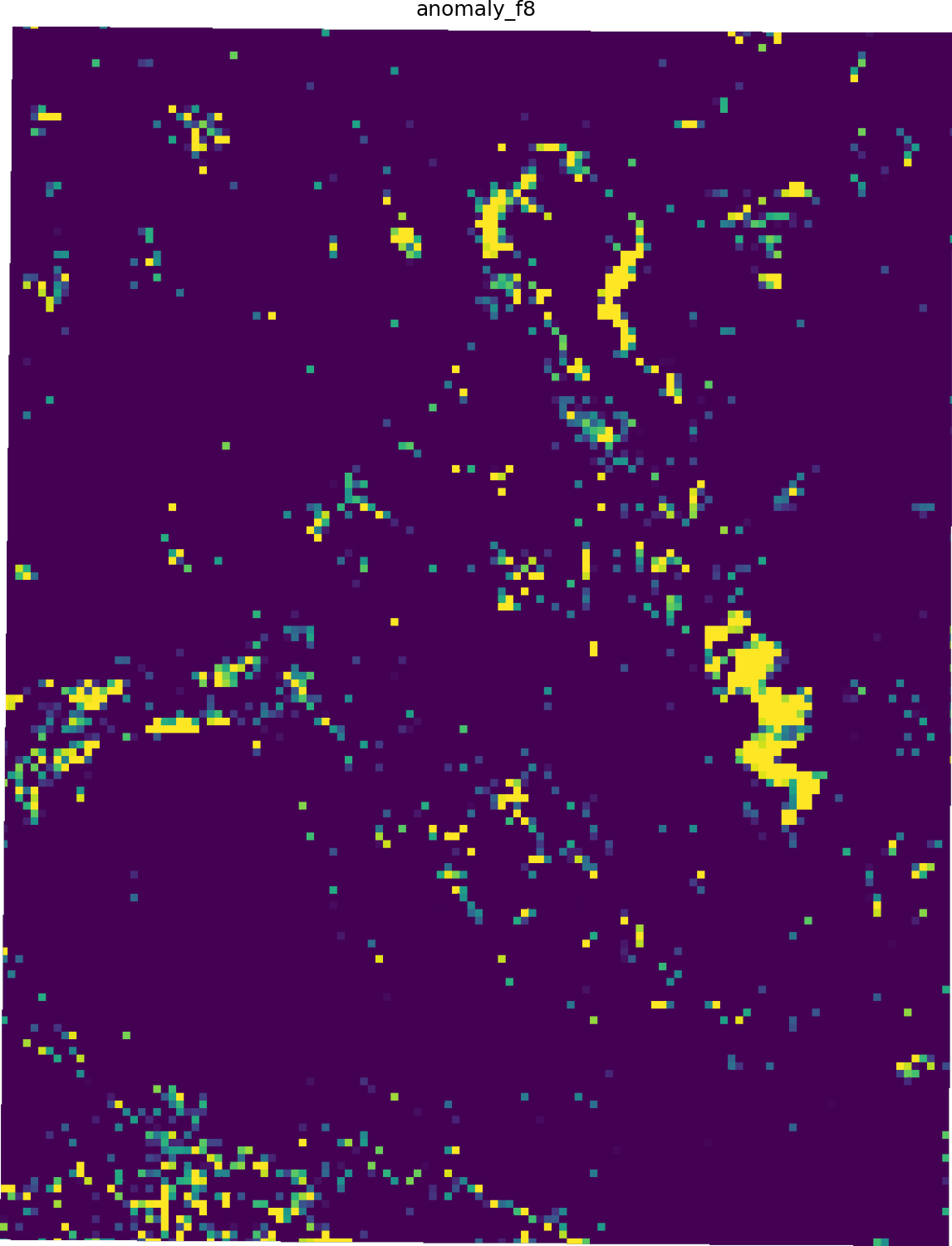}
  \caption{Anomaly ($f{=}8$)}
\end{subfigure}\hfill
\begin{subfigure}[t]{0.24\textwidth}
  \includegraphics[width=\linewidth]{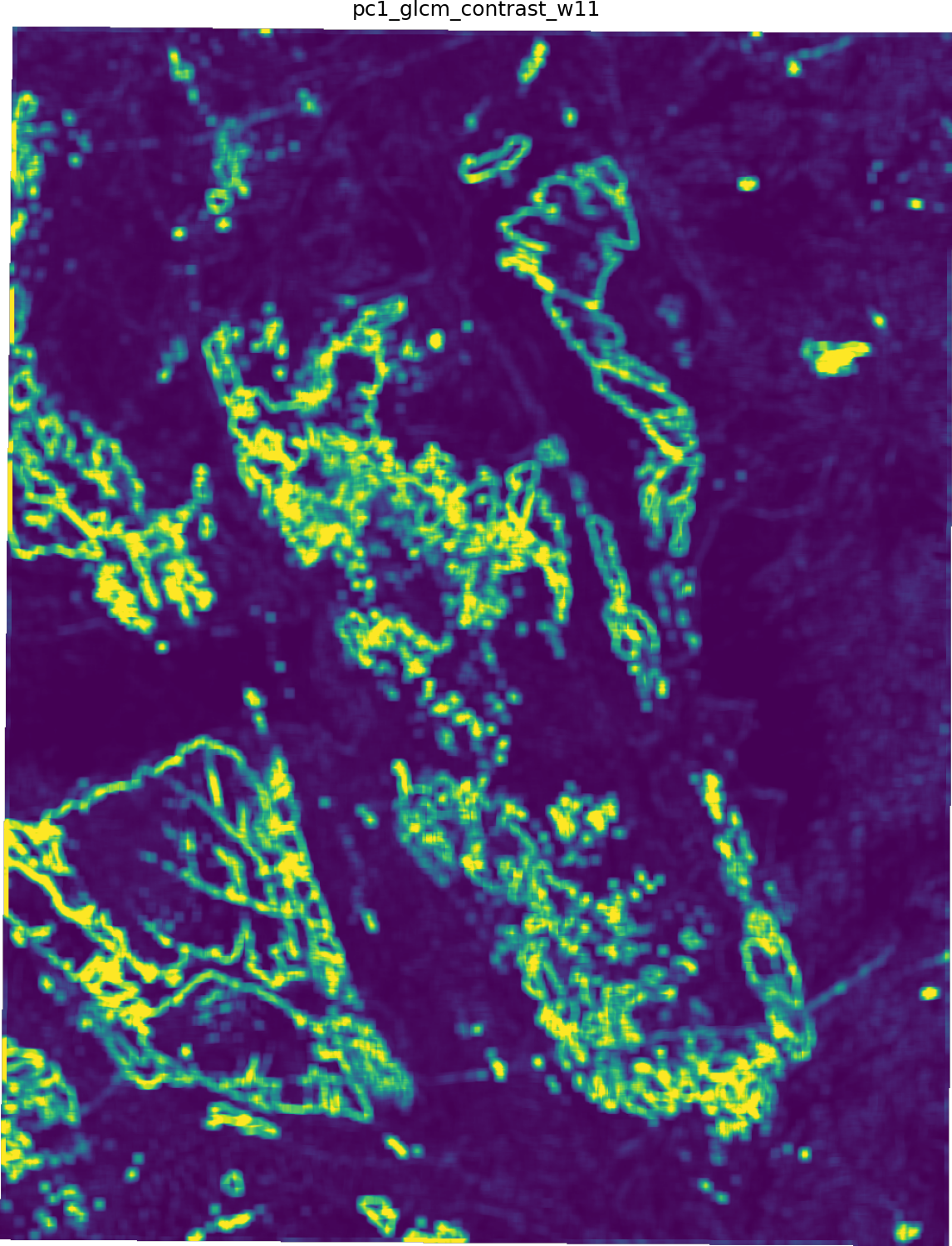}
  \caption{PC1 GLCM contrast}
\end{subfigure}

\caption{Dense graph embedding-derived layers over the Suwaj AOI evaluated on a 2D regular grid at 100\,m spacing (Experiment~2, Study 2). Top row: RGB composites of the embedding manifold under UMAP and PCA, and $K$-Means zonation with $K{=}8$ with the corresponding continuous distance-to-centroid proxy (highlighting mixed/transition areas and locally atypical signatures). Bottom row: principal component rasters (PC1--PC2), a multi-scale Isolation Forest anomaly score at window factor $f{=}8$ (the displayed anomaly-pyramid scale), and a texture probe computed from PC1 (GLCM contrast, window $w{=}11$), emphasizing local heterogeneity and edge/structural texture.}
\label{fig:suwaj_layers}
\end{figure*}

\section{Conclusion}
\label{sec:conclusion}

We presented \textbf{MABLE}, a self-supervised representation learning pipeline for large, heterogeneous geospatial graphs that encourages robust \emph{node} and \emph{graph}-level embeddings using a tightly integrated set of objectives and stability constraints. The method couples masked node reconstruction with representation-shaping losses that explicitly encourage useful geometry: (i) node correspondence and node uniformity to promote cross-view consistency while discouraging collapse and improving feature diversity, (ii) graph alignment to enforce invariance across augmented views (including subgraph views), and (iii) graph uniformity to maintain global spread across graphs within a batch. On the architectural side, we combine a modality-agnostic token interface with a Lipschitz-controlled pooling operator and a well-conditioned (bi-Lipschitz) reconstruction head, linking practical stability to the theoretical sensitivity bounds established for the reconstruction slice and pooling components discussed throughout the paper.

Across the mineral-prospectivity experiments, MABLE embeddings are most useful as complementary contextual representations rather than as standalone predictors under strong geographic distribution shift. On the local-scale copper benchmark (redacted Experiment 1), embeddings alone are not sufficient to extrapolate into a geographically distinct holdout region, but compact embedding components improve coordinate-only and engineered geophysical baselines, indicating transferable context not fully summarized by local MAG/ANT statistics. On the regional-scale Arabian Shield corpus, the learned 256-dimensional graph embeddings exhibit strong spatial coherence without using absolute/global coordinates or geological labels during training: manifold projections, similarity search, anomaly detection, and graph/spectral diagnostics consistently indicate a distributed, multi-scale representation that organizes the Shield into stable domains and highlights rare signatures. Finally, a dense 100\,m grid evaluation over Suwaj demonstrates \emph{scale transfer}: a regional pretrained encoder can be evaluated prospect-scale to produce GIS-consumable, embedding-derived layers (zonation, novelty proxies, and texture probes) that are spatially coherent and controllably granular. These regional and prospect-scale products should be interpreted as hypothesis-generating latent layers rather than geological labels.

A core architectural differentiator explored in this work is the \emph{feature-embedding slice} for reconstruction with a constrained decoder, together with a complementary contextual slice regularized by cross-view node correspondence. Ablations indicate that this architectural choice, when properly tuned, improves training dynamics and reduces overfitting pressure. These results support the practical use of the slice+bi-Lipschitz decoder together with contextual cross-view consistency as stability-oriented design choices that need not materially sacrifice predictive utility when appropriately parameterized.

\paragraph{Limitations and future work.}
Several directions remain open. First, while embeddings can complement engineered context under spatial distribution shift, stronger downstream models (e.g., residual or hierarchical predictors, or end-to-end fine-tuning) may better exploit the learned context in the strictest extrapolation regimes. Second, the current construction uses edge-free graphs; incorporating lightweight, uncertainty-aware relational structure (e.g., learned sparse edges or physics-informed neighborhoods) is a natural extension when explicit connectivity is meaningful. Third, prospect-scale products derived from embeddings (zonation, anomalies, textures) should be validated more directly against mapped geology and known mineral systems to quantify correspondence beyond qualitative alignment. More broadly, evaluating transfer across regions, sensor stacks, sampling densities and characterizing failure modes under missing modalities or strong coverage gaps will further clarify the operational envelope of MABLE for exploration workflows.

Overall, MABLE provides a principled, end-to-end approach to learning stable geospatial graph representations that (i) remain well-conditioned under augmentation and sampling artifacts, (ii) support both node-level reconstruction fidelity and cross-view node consistency together with global embedding robustness, and (iii) yield coherent, multi-scale latent structure that can be operationalized as GIS-ready layers for interpretation and targeting in the settings studied here.

\newpage

\bibliographystyle{plainnat}
\bibliography{refs}

@inproceedings{Zhang2022UMAE,
  author       = {Zhang, Qi and Wang, Yifei and Wang, Yisen},
  title        = {How Mask Matters: Towards Theoretical Understandings of Masked Autoencoders},
  booktitle    = {Proceedings of the 36th International Conference on Neural Information Processing Systems},
  series       = {NeurIPS '22},
  year         = {2022},
  publisher    = {Curran Associates Inc.},
  address      = {Red Hook, NY, USA},
  articleno    = {1967},
  numpages     = {13},
  location     = {New Orleans, LA, USA},
  isbn         = {9781713871088}
}

@inproceedings{LiEtAl2023Mask,
  author       = {Li, Jintang and Wu, Ruofan and Sun, Wangbin and Chen, Liang and Tian, Sheng and Zhu, Liang and Meng, Changhua and Zheng, Zibin and Wang, Weiqiang},
  title        = {What's Behind the Mask: Understanding Masked Graph Modeling for Graph Autoencoders},
  booktitle    = {Proceedings of the 29th ACM SIGKDD Conference on Knowledge Discovery and Data Mining},
  series       = {KDD '23},
  year         = {2023},
  publisher    = {Association for Computing Machinery},
  address      = {New York, NY, USA},
  doi          = {10.1145/3580305.3599546},
  url          = {https://doi.org/10.1145/3580305.3599546},
  pages        = {1268--1279},
  isbn         = {979-8-4007-0103-0}
}

@inproceedings{XuEtAl2019,
  author       = {Xu, Keyulu and Hu, Weihua and Leskovec, Jure and Jegelka, Stefanie},
  title        = {How Powerful are Graph Neural Networks?},
  booktitle    = {Proceedings of the 7th International Conference on Learning Representations},
  series       = {ICLR '19},
  year         = {2019},
  publisher    = {OpenReview.net},
  address      = {New Orleans, LA, USA},
  url          = {https://openreview.net/forum?id=ryGs6iA5Km}
}

@inproceedings{AlkinEtAl2025MIM,
  author       = {Alkin, Benedikt and Miklautz, Lukas and Hochreiter, Sepp and Brandstetter, Johannes},
  title        = {{MIM-Refiner}: A Contrastive Learning Boost from Intermediate Pre-Trained Representations},
  booktitle    = {International Conference on Learning Representations (ICLR)},
  year         = {2025},
  url          = {https://openreview.net/forum?id=F01HbewbwU}
}

@inproceedings{Daruna2024GFM4MPM,
  author = {Daruna, Angel and Zadorozhnyy, Vasily and Lukoczki, Georgina and Chiu, Han-Pang},
  title = {GFM4MPM: Towards Geospatial Foundation Models for Mineral Prospectivity Mapping},
  booktitle = {Proceedings of the 32nd ACM International Conference on Advances in Geographic Information Systems},
  series = {SIGSPATIAL '24},
  year = {2024},
  pages = {565--568},
  publisher = {Association for Computing Machinery},
  address = {New York, NY, USA},
  url = {https://doi.org/10.1145/3678717.3691268},
  doi = {10.1145/3678717.3691268}
}

@inproceedings{He2022MAE,
  title     = {Masked Autoencoders Are Scalable Vision Learners},
  author    = {Kaiming He and Xinlei Chen and Saining Xie and Yanghao Li and Piotr Doll{\'a}r and Ross B. Girshick},
  booktitle = {Proceedings of the IEEE/CVF Conference on Computer Vision and Pattern Recognition (CVPR)},
  year      = {2022},
  pages     = {15979--15988},
  doi       = {10.1109/CVPR52688.2022.01551},
  url       = {https://doi.org/10.1109/CVPR52688.2022.01551}
}

@article{Oord2018CPC,
  author    = {A{\"{a}}ron van den Oord and Yazhe Li and Oriol Vinyals},
  title     = {Representation Learning with {Contrastive Predictive Coding}},
  journal   = {CoRR},
  volume    = {abs/1807.03748},
  year      = {2018},
  url       = {http://arxiv.org/abs/1807.03748},
  eprinttype= {arXiv},
  eprint    = {1807.03748}
}

@inproceedings{Chen2020SimCLR,
  author    = {Ting Chen and Simon Kornblith and Mohammad Norouzi and Geoffrey Hinton},
  title     = {A Simple Framework for Contrastive Learning of Visual Representations},
  booktitle = {Proceedings of the 37th International Conference on Machine Learning},
  series    = {ICML'20},
  year      = {2020},
  pages     = {1597--1607},
  publisher = {JMLR.org},
  url       = {https://proceedings.mlr.press/v119/chen20j.html}
}

@inproceedings{HaoChen2021SCL,
  author    = {Jeff Z. HaoChen and Colin Wei and Adrien Gaidon and Tengyu Ma},
  title     = {Provable Guarantees for Self-Supervised Deep Learning with Spectral Contrastive Loss},
  booktitle = {Advances in Neural Information Processing Systems},
  editor    = {M. Ranzato and A. Beygelzimer and Y. Dauphin and P.S. Liang and J. Wortman Vaughan},
  volume    = {34},
  pages     = {5000--5011},
  year      = {2021},
  publisher = {Curran Associates, Inc.},
  url       = {https://proceedings.neurips.cc/paper_files/paper/2021/file/27debb435021eb68b3965290b5e24c49-Paper.pdf}
}

@article{Tosh2020ContrastiveLM,
  author      = {Christopher Tosh and Akshay Krishnamurthy and Daniel J. Hsu},
  title       = {Contrastive Learning, Multi-View Redundancy, and Linear Models},
  journal     = {CoRR},
  volume      = {abs/2008.10150},
  year        = {2020},
  eprinttype  = {arXiv},
  eprint      = {2008.10150},
  url         = {https://arxiv.org/abs/2008.10150}
}

@inproceedings{Wu2024TorchSpatial,
  author    = {Nemin Wu and Qian Cao and Zhangyu Wang and Zeping Liu and Yanlin Qi and Jielu Zhang and Joshua Ni and Xiaobai Yao and Hongxu Ma and Lan Mu and Stefano Ermon and Tanuja Ganu and Akshay Nambi and Ni Lao and Gengchen Mai},
  title     = {{TorchSpatial}: A Location Encoding Framework and Benchmark for Spatial Representation Learning},
  booktitle = {Advances in Neural Information Processing Systems},
  editor    = {Amir Globerson and Lester Mackey and Dianna Belgrave and Angela Fan and Ulrich Paquet and Jakub Tomczak and Cheng Zhang},
  volume    = {37},
  pages     = {81437--81460},
  year      = {2024},
  publisher = {Curran Associates, Inc.},
  url       = {https://proceedings.neurips.cc/paper_files/paper/2024/file/9449c2d5b0cc8c9a445752f3ff195a1c-Paper-Datasets_and_Benchmarks_Track.pdf}
}

@article{Hjelm2018DeepInfoMax,
  author      = {R. Devon Hjelm and Alex Fedorov and Samuel Lavoie-Marchildon and Karan Grewal and Phil Bachman and Adam Trischler and Yoshua Bengio},
  title       = {Learning Deep Representations by Mutual Information Estimation and Maximization},
  journal     = {CoRR},
  volume      = {abs/1808.06670},
  year        = {2018},
  eprinttype  = {arXiv},
  eprint      = {1808.06670},
  url         = {https://arxiv.org/abs/1808.06670}
}

@inproceedings{Velickovic2019DeepGraphInfomax,
  author    = {Petar Veli{\v{c}}kovi{\'{c}} and William Fedus and William L. Hamilton and Pietro Li{\`o} and Yoshua Bengio and R. Devon Hjelm},
  title     = {Deep Graph Infomax},
  booktitle = {International Conference on Learning Representations (ICLR)},
  year      = {2019},
  address   = {New Orleans, USA},
  url       = {https://openreview.net/forum?id=rklz9iAcKQ}
}

@inproceedings{Gutmann2010NCE,
  author    = {Michael Gutmann and Aapo Hyv{\"a}rinen},
  title     = {Noise‑Contrastive Estimation: A New Estimation Principle for Unnormalized Statistical Models},
  booktitle = {Proceedings of the 13th International Conference on Artificial Intelligence and Statistics (AISTATS)},
  series    = {Proceedings of Machine Learning Research},
  volume    = {9},
  pages     = {297--304},
  year      = {2010},
  address   = {Chia Laguna Resort, Sardinia, Italy},
  publisher = {PMLR},
  url       = {https://proceedings.mlr.press/v9/gutmann10a.html}
}

@article{He2019MoCo,
  author      = {Kaiming He and Haoqi Fan and Yuxin Wu and Saining Xie and Ross Girshick},
  title       = {Momentum Contrast for Unsupervised Visual Representation Learning},
  journal     = {CoRR},
  volume      = {abs/1911.05722},
  year        = {2019},
  eprinttype  = {arXiv},
  eprint      = {1911.05722},
  note        = {CVPR~2020 camera-ready},
  url         = {https://arxiv.org/abs/1911.05722}
}

@inproceedings{Wang2020AlignmentUniformity,
  author    = {Tongzhou Wang and Phillip Isola},
  title     = {Understanding Contrastive Representation Learning through Alignment and Uniformity on the Hypersphere},
  booktitle = {Proceedings of the 37th International Conference on Machine Learning (ICML)},
  series    = {Proceedings of Machine Learning Research},
  volume    = {119},
  pages     = {9929--9939},
  year      = {2020},
  address   = {Virtual Conference},
  publisher = {PMLR},
  url       = {https://proceedings.mlr.press/v119/wang20k.html}
}

@inproceedings{Miyato2018SpectralNorm,
  author    = {Takeru Miyato and Toshiki Kataoka and Masanori Koyama and Yuichi Yoshida},
  title     = {Spectral Normalization for Generative Adversarial Networks},
  booktitle = {International Conference on Learning Representations (ICLR)},
  year      = {2018},
  address   = {Vancouver, Canada},
  url       = {https://openreview.net/forum?id=B1QRgziT-}
}

@inproceedings{Cisse2017Parseval,
  author    = {Moustapha Cisse and Piotr Bojanowski and {\'E}douard Grave and Yann Dauphin and Nicolas Usunier},
  title     = {Parseval Networks: Improving Robustness to Adversarial Examples},
  booktitle = {Proceedings of the 34th International Conference on Machine Learning (ICML)},
  series    = {Proceedings of Machine Learning Research},
  volume    = {70},
  pages     = {854--863},
  year      = {2017},
  address   = {Sydney, Australia},
  publisher = {PMLR},
  url       = {https://proceedings.mlr.press/v70/cisse17a.html}
}

@inproceedings{Behrmann2019InvertibleResNet,
  author    = {Jens Behrmann and Will Grathwohl and Ricky T. Q. Chen and David Duvenaud and J{\"o}rn-Henrik Jacobsen},
  title     = {Invertible Residual Networks},
  booktitle = {Proceedings of the 36th International Conference on Machine Learning (ICML)},
  series    = {Proceedings of Machine Learning Research},
  volume    = {97},
  pages     = {573--582},
  year      = {2019},
  address   = {Long Beach, USA},
  publisher = {PMLR},
  url       = {https://proceedings.mlr.press/v97/behrmann19a.html}
}

@inproceedings{Gulrajani2017WGANGP,
  author    = {Ishaan Gulrajani and Faruk Ahmed and Martin Arjovsky and Vincent Dumoulin and Aaron Courville},
  title     = {Improved Training of Wasserstein {GANs}},
  booktitle = {Advances in Neural Information Processing Systems (NeurIPS)},
  volume    = {30},
  pages     = {5767--5777},
  year      = {2017},
  address   = {Long Beach, USA},
  publisher = {Curran Associates, Inc.},
  url       = {https://papers.nips.cc/paper/2017/hash/892c3b60920f561af15af1b7f4ab8af4-Abstract.html}
}

@inproceedings{Wang2021PosteriorCollapse,
  author    = {Yixin Wang and David M. Blei and John P. Cunningham},
  title     = {Posterior Collapse and Latent Variable Non-identifiability},
  booktitle = {Advances in Neural Information Processing Systems (NeurIPS)},
  volume    = {34},
  pages     = {5443--5455},
  year      = {2021},
  address   = {Virtual Conference},
  publisher = {Curran Associates, Inc.},
  url       = {https://proceedings.neurips.cc/paper_files/paper/2021/file/2b6921f2c64dee16ba21ebf17f3c2c92-Paper.pdf}
}

@article{Sihombing2024Improved,
  author    = {Sihombing, Felix M. H. and Palin, Richard M. and Hughes, Hannah S. R. and Robb, Laurence J.},
  title     = {Improved Mineral Prospectivity Mapping Using Graph Neural Networks},
  journal   = {Ore Geology Reviews},
  volume    = {172},
  year      = {2024},
  month     = {September},
  pages     = {106215},
  doi       = {10.1016/j.oregeorev.2024.106215},
  publisher = {Elsevier},
  issn      = {0169-1368}
}

@article{Zuo2023NewGeneration,
  author = {Zuo, Renguang and Xiong, Yihui and Wang, Ziye and Wang, Jian and Kreuzer, Oliver P.},
  title = {A New Generation of Artificial Intelligence Algorithms for Mineral Prospectivity Mapping},
  journal = {Natural Resources Research},
  volume = {32},
  number = {5},
  pages = {1859--1869},
  year = {2023},
  month = {October},
  doi = {10.1007/s11053-023-10237-w},
  url = {https://doi.org/10.1007/s11053-023-10237-w},
  issn = {1573-8981}
}

@misc{Kitaev2020,
  author = {Kitaev, Nikita and Kaiser, {\L}ukasz and Levskaya, Anselm},
  title = {Reformer: The Efficient Transformer},
  year = {2020},
  url = {http://arxiv.org/abs/2001.04451},
  note = {arXiv preprint arXiv:2001.04451, ICLR 2020}
}

@inproceedings{Hou2022GraphMAE,
  author    = {Zhoujing Hou and Fei Liu and Yuxiao Dong and Chao Huang and Mingxuan Ju and Zhen Zhang and Yu Qiao and Yingyan Lin},
  title     = {GraphMAE: Self-Supervised Masked Graph Autoencoders},
  booktitle = {Proceedings of the 28th ACM SIGKDD Conference on Knowledge Discovery and Data Mining},
  year      = {2022},
  pages     = {594--604},
  publisher = {Association for Computing Machinery},
  doi       = {10.1145/3534678.3539321},
  url       = {https://doi.org/10.1145/3534678.3539321}
}

@article{Hou2023GraphMAE2,
  author  = {Zhoujing Hou and Xiaolin Zhang and Linna Wang and Zhiwei Zhu and Yushun Dong and Mingxuan Ju and Fei Liu and Yi Yang and Chuan Shi},
  title   = {GraphMAE2: A Decoding-Enhanced Masked Self-Supervised Graph Learner},
  journal = {Proceedings of the ACM Web Conference 2023 Companion},
  year    = {2023},
  pages   = {737--746},
  doi     = {10.1145/3543873.3587651},
  url     = {https://doi.org/10.1145/3543873.3587651}
}

@inproceedings{You2020GraphCL,
  author    = {Yuning You and Tianlong Chen and Yongduo Sui and Ting Chen and Zhangyang Wang and Yang Shen},
  title     = {Graph Contrastive Learning with Augmentations},
  booktitle = {Advances in Neural Information Processing Systems},
  volume    = {33},
  year      = {2020},
  pages     = {5812--5823},
  url       = {https://proceedings.neurips.cc/paper/2020/hash/3fe230348e9a12c13120749e3f9fa4cd-Abstract.html}
}

@inproceedings{Zhu2020GRACE,
  author    = {Yanqiao Zhu and Yichen Xu and Feng Yu and Qiang Liu and Shu Wu and Liang Wang},
  title     = {Deep Graph Contrastive Representation Learning},
  booktitle = {ICML Workshop on Graph Representation Learning and Beyond},
  year      = {2020},
  url       = {https://arxiv.org/abs/2006.04131}
}

@inproceedings{Thakoor2021BGRL,
  author    = {Shantanu Thakoor and Corentin Tallec and Mohammad Gheshlaghi Azar and Petar Veli{\v{c}}kovi{\'{c}}},
  title     = {Bootstrapped Representation Learning on Graphs},
  booktitle = {International Conference on Learning Representations (ICLR)},
  year      = {2022},
  url       = {https://openreview.net/forum?id=QrzVRAA49Ud}
}

@inproceedings{Zaheer2017DeepSets,
  author    = {Manzil Zaheer and Satwik Kottur and Siamak Ravanbakhsh and Barnab{\'a}s P{\'o}czos and Ruslan Salakhutdinov and Alexander J. Smola},
  title     = {Deep Sets},
  booktitle = {Advances in Neural Information Processing Systems},
  volume    = {30},
  year      = {2017},
  url       = {https://proceedings.neurips.cc/paper/2017/hash/f22e4747da1aa27e363d86d40ff442fe-Abstract.html}
}

@inproceedings{Lee2019SetTransformer,
  author    = {Juho Lee and Yoonho Lee and Jungtaek Kim and Adam R. Kosiorek and Seungjin Choi and Yee Whye Teh},
  title     = {Set Transformer: A Framework for Attention-based Permutation-Invariant Neural Networks},
  booktitle = {Proceedings of the 36th International Conference on Machine Learning},
  year      = {2019},
  pages     = {3744--3753},
  url       = {https://proceedings.mlr.press/v97/lee19d.html}
}

@inproceedings{Baek2021GMT,
  author    = {Jinheon Baek and Minki Kang and Sung Ju Hwang},
  title     = {Accurate Learning of Graph Representations with Graph Multiset Pooling},
  booktitle = {International Conference on Learning Representations (ICLR)},
  year      = {2021},
  url       = {https://openreview.net/forum?id=JHcqXGaqiGn}
}

\newpage
\appendix

\section{Appendix}

\subsection{Proof of Proposition~\ref{prop:lipschitz} - Node Feature Similarity Preservation.}
\label{appendix:proof-lipschitz}

\begin{proof}
Recall $h(u)=g(e_u)$ with $e_u=\pi(f(u))$, and assume $g$ is bi-Lipschitz: for all $e_1,e_2$,
\[
m\|e_1-e_2\| \le \|g(e_1)-g(e_2)\| \le L\|e_1-e_2\|.
\]
In particular, for $e_u,e_v$ we have
\begin{equation}\label{eq:bilip-h}
m\|e_u-e_v\| \le \|h(u)-h(v)\| \le L\|e_u-e_v\|.
\end{equation}
By the reconstruction assumption, $\|h(u)-x_u\|\le \varepsilon$ and $\|h(v)-x_v\|\le \varepsilon$.

\smallskip\noindent
\textbf{Upper bound on $\|e_u-e_v\|$.}
By the triangle inequality,
\[
\|h(u)-h(v)\|
\le \|h(u)-x_u\| + \|x_u-x_v\| + \|x_v-h(v)\|
\le \|x_u-x_v\| + 2\varepsilon.
\]
Combining with the left inequality in~\eqref{eq:bilip-h} gives
\[
m\|e_u-e_v\| \le \|x_u-x_v\| + 2\varepsilon
\quad\Longrightarrow\quad
\|e_u-e_v\| \le \frac{1}{m}\|x_u-x_v\| + \frac{2\varepsilon}{m}.
\]

\smallskip\noindent
\textbf{Lower bound on $\|e_u-e_v\|$.}
Using the reverse triangle inequality (equivalently, a two-term triangle bound),
\[
\|h(u)-h(v)\|
\ge \|x_u-x_v\| - \|h(u)-x_u\| - \|h(v)-x_v\|
\ge \|x_u-x_v\| - 2\varepsilon.
\]
Combining with the right inequality in~\eqref{eq:bilip-h} yields
\[
L\|e_u-e_v\| \ge \|x_u-x_v\| - 2\varepsilon
\quad\Longrightarrow\quad
\|e_u-e_v\| \ge \frac{\|x_u-x_v\| - 2\varepsilon}{L}.
\]
Since norms are nonnegative, one can equivalently write
$\|e_u-e_v\| \ge \max\!\left\{0,\frac{\|x_u-x_v\|-2\varepsilon}{L}\right\}$.

\smallskip
The same argument applies to any $\ell_p$ norm by replacing $\|\cdot\|$ with $\|\cdot\|_p$
and taking $m$ and $L$ to be the corresponding co-Lipschitz/Lipschitz constants for $g$
under $\ell_p$.
\end{proof}

\subsection{Relationship Between the Log-Based Surrogate and Normalized SCL}
\label{app:scl-deriv}

This appendix details the local relationship between the proposed log-based surrogate and the normalized/cosine form of Spectral Contrastive Loss (SCL). Assume embeddings are compared by cosine similarity, equivalently by inner product after unit normalization, so $\cos(z,z') \in [-1,1]$.

\paragraph{Log-Based Surrogate and Approximation by SCL.}
We define a log-based surrogate loss designed to explicitly target positive alignment ($\cos(z,z^{+}) = 1$) and negative orthogonality ($\cos(z,z^{-}) = 0$). The loss takes the form:
\[
\mathcal L_{\text{Surrogate}} = -\mathbb{E}_{(z,z^{+})}[\log (\text{PositiveArg})] - \mathbb{E}_{(z,z^{-})}[\log (\text{NegativeArg})].
\]
To ensure the arguments are in $(0,1]$ and achieve the value 1 at the desired optimal cosine similarities, we choose:
\[
\text{PositiveArg} = \exp(\cos(z,z^{+})-1) \quad \text{and} \quad \text{NegativeArg} = 1 - \frac{1}{2}\cos(z,z^{-})^2.
\]
The log-based surrogate is then:
\[
\mathcal L_{\text{Surrogate}} = -\mathbb{E}_{(z,z^{+})}[\log(\exp(\cos(z,z^{+})-1))] - \mathbb{E}_{(z,z^{-})}[\log(1 - \frac{1}{2}\cos(z,z^{-})^2)].
\]
Simplifying the positive term:
\[
\mathcal L_{\text{Surrogate}} = -\mathbb{E}_{(z,z^{+})}[\cos(z,z^{+})-1] - \mathbb{E}_{(z,z^{-})}[\log(1 - \frac{1}{2}\cos(z,z^{-})^2)].
\]
Dropping the constant $-1$ from the positive term (which does not affect minimization):
\[
\mathcal L_{\text{Surrogate}}' = -\mathbb{E}[\cos(z,z^{+})] - \mathbb{E}[\log(1 - \frac{1}{2}\cos(z,z^{-})^2)].
\]
Let $t^{+} = \cos(z,z^{+})$ and $t^{-} = \cos(z,z^{-})$ and $\tfrac12 \, \mathcal L_{\text{SCL}} = -\mathbb E[t^{+}] +\tfrac12\mathbb E[(t^{-})^{2}]$. The positive term of $\mathcal L_{\text{Surrogate}}'$ is exactly the positive term of $\tfrac12 \, \mathcal L_{\text{SCL}}$.

We show that the negative term of $\mathcal L_{\text{Surrogate}}'$ is approximated by the negative term of $\tfrac12 \, \mathcal L_{\text{SCL}}$ around the desired negative optimum $t^{-} = 0$. We perform a Taylor expansion of $-\log(1 - \frac{1}{2}t^2)$ around $t=0$:
The Taylor expansion of $-\log(1-x)$ around $x=0$ is $x + \frac{x^2}{2} + \frac{x^3}{3} + \dots$ for $|x|<1$.
Substitute $x = \frac{1}{2}t^2$. For $t \in [-1, 1]$, $t^2 \in [0, 1]$, so $\frac{1}{2}t^2 \in [0, 1/2]$, which means $|x|<1$ is satisfied.
\[
-\log(1 - \frac{1}{2}t^2) = (\frac{1}{2}t^2) + \frac{(\frac{1}{2}t^2)^2}{2} + \frac{(\frac{1}{2}t^2)^3}{3} + \dots = \frac{1}{2}t^2 + \frac{1}{8}t^4 + \frac{1}{24}t^6 + \dots
\]
So, the expectation of the negative term of $\mathcal L_{\text{Surrogate}}'$ is:
\[
-\mathbb{E}[\log(1 - \frac{1}{2}\cos(z,z^{-})^2)] = \mathbb{E}[\frac{1}{2}\cos(z,z^{-})^2 + \frac{1}{8}\cos(z,z^{-})^4 + \frac{1}{24}\cos(z,z^{-})^6 + \dots].
\]
Taking only the leading term from this expansion around $\cos(z,z^{-})=0$, we get $\mathbb{E}[\frac{1}{2}\cos(z,z^{-})^2]$.
Thus, the negative term of $\mathcal L_{\text{Surrogate}}'$ is approximated by the negative term of SCL around $\cos(z,z^{-})=0$.

Combining the exact positive term and the approximated negative term, we see that $\mathcal L_{\text{Surrogate}}'$ is approximated by scaled normalized SCL around the desired cosine geometry ($\cos(z,z^{+}) = 1, \cos(z,z^{-}) = 0$):
\[
\mathcal L_{\text{Surrogate}}' \approx -\mathbb{E}[\cos(z,z^{+})] + \mathbb{E}\!\left[\frac{1}{2}\cos(z,z^{-})^2\right] = \tfrac12 \, \mathcal L_{\text{SCL}}.
\]
This demonstrates that normalized SCL is a computationally simpler form that captures the leading behavior of a principled log-based surrogate near the desired cosine geometry.

\paragraph{Conclusion.}
Through the construction of a log-based surrogate whose terms explicitly target the desired cosine geometry, we obtain normalized SCL as the leading local quadratic approximation: positive pairs are driven toward angular alignment and unpaired embeddings toward angular decorrelation. This is the fixed-cosine geometry used by MABLE without requiring a learned discriminator.

\subsection{A DV-Style Interpretation of the Normalized/Cosine SCL Surrogate}
\label{appendix:mi-bound}

This appendix concerns the normalized/cosine SCL-style surrogate used in MABLE, not unconstrained raw inner-product SCL except when embeddings are unit-normalized or otherwise tightly norm-controlled. The original spectral contrastive loss of~\citet{HaoChen2021SCL} is written using raw inner products, whereas MABLE uses cosine similarity to remove scale ambiguity and keep scores bounded in $[-1,1]$. The DV-style argument below should therefore be read as a heuristic justification for the normalized/cosine surrogate used here.

Mutual information, $I(Z, Z^+)$, quantifies the statistical dependence between the latent representation $Z=f(X)$ and its augmented view $Z^+=f(X^+)$. A standard method to optimize MI is through variational lower bounds like the Donsker--Varadhan (DV) representation. The DV bound states that for any measurable function $T(z, z')$, a lower bound on MI is given by:
\[
I(Z,Z^+) \ge \mathbb{E}_{(Z,Z^+)}\bigl[T(Z,Z^+)\bigr] - \log \mathbb{E}_{Z,Z^- \sim p(Z)^2}\Bigl[e^{T(Z,Z^-)}\Bigr].
\]
Here, $Z^-$ denotes an independent sample from the marginal $p(Z)$ (so the second expectation is under the product distribution $p(Z)p(Z^-)$; we write $p(Z)^2$ for brevity). To relate this to the normalized/cosine SCL-style surrogate used in MABLE, we choose the test function $T(z,z') = \cos(z,z')$. Let $t^+ = \cos(Z,Z^+)$ and $t^- = \cos(Z,Z^-)$. Assume the latent embeddings $Z$ are unit-normalized, so $\cos(z,z') \in [-1,1]$ (and equals the dot product), and that $p(Z)$ is approximately antipodally symmetric (e.g., close to isotropic on the unit sphere $\mathbb{S}^{d-1}$), so that $t^-=\cos(Z,Z^-)$ is approximately symmetric about $0$ (hence $\mathbb{E}[t^-]\approx 0$ and odd moments are negligible). The bound becomes:
\[
I(Z,Z^+) \ge \mathbb{E}[t^+] - \log \mathbb{E}[e^{t^-}].
\]

We analyze the second term, $-\log \mathbb{E}[e^{t^-}]$, where $t^-=\cos(Z,Z^-)$ and $Z^-\sim p(Z)$ is independent of $Z$. Under this approximate symmetry, the expectation of any odd power of $t^-$ is approximately zero: $\mathbb{E}[(t^-)^{2k+1}] \approx 0$ for $k \ge 0$.

We perform a Taylor expansion of $e^x$ around $x=0$: $e^x = \sum_{n=0}^\infty \frac{x^n}{n!}$.
Substituting $x = t^-$ and taking the expectation:
\[
\mathbb{E}[e^{t^-}] = \mathbb{E}\left[\sum_{n=0}^\infty \frac{(t^-)^n}{n!}\right] = \sum_{n=0}^\infty \frac{\mathbb{E}[(t^-)^n]}{n!}.
\]
Using the property that odd moments are approximately zero:
\[
\mathbb{E}[e^{t^-}] = \frac{\mathbb{E}[(t^-)^0]}{0!} + \frac{\mathbb{E}[(t^-)^1]}{1!} + \frac{\mathbb{E}[(t^-)^2]}{2!} + \frac{\mathbb{E}[(t^-)^3]}{3!} + \dots
\]
\[
\mathbb{E}[e^{t^-}] \approx 1 + 0 + \frac{\mathbb{E}[(t^-)^2]}{2} + 0 + \dots = 1 + \sum_{k=1}^\infty \frac{\mathbb{E}[(t^-)^{2k}]}{(2k)!}.
\]

Now, we take the logarithm using the Taylor expansion of $\log(1+x)$ around $x=0$, which is $x - \frac{x^2}{2} + O(x^3)$. Let $x = \sum_{k=1}^\infty \frac{\mathbb{E}[(t^-)^{2k}]}{(2k)!} = \frac{\mathbb{E}[(t^-)^2]}{2} + \frac{\mathbb{E}[(t^-)^4]}{24} + \dots$.

Assuming $\mathbb{E}[(t^-)^2]$ is small (typically the case in high dimensions under near-uniform hyperspherical marginals), we can approximate:
\[
\log \mathbb{E}[e^{t^-}] = \log(1+x) \approx x - \frac{x^2}{2}
\]
\[
\log \mathbb{E}[e^{t^-}] \approx \left(\frac{\mathbb{E}[(t^-)^2]}{2} + \frac{\mathbb{E}[(t^-)^4]}{24} + \dots\right) - \frac{1}{2}\left(\frac{\mathbb{E}[(t^-)^2]}{2} + \dots\right)^2
\]

Keeping only the leading terms:
\[
\log \mathbb{E}[e^{t^-}] \approx \frac{1}{2}\mathbb{E}[(t^-)^2] + \text{higher order terms}.
\]

For the raw inner-product choice $T(z,z')=z^\top z'$, the same DV representation can formally be written, but the second-order reduction to a quadratic negative term requires stronger assumptions: finite exponential moments, approximate zero mean, small variance, negligible higher-order cumulants, and explicit norm control. Without norm control, $\log \mathbb E[\exp(Z^\top Z^-)]$ can be dominated by vector norms rather than angular dependence. This is the scale ambiguity avoided by the cosine surrogate used in MABLE.

Substituting this approximation back into the DV lower bound:
\[
I(Z,Z^+) \gtrsim \mathbb{E}[t^+] - \frac{1}{2}\mathbb{E}[(t^-)^2].
\]

Recall the scaled normalized/cosine SCL-style variant $\mathcal L_{\text{SSCL}}=\tfrac12\mathcal L_{\text{SCL}} = -\mathbb E[t^+] + \tfrac12 \mathbb E[(t^-)^2]$. Minimizing $\mathcal L_{\text{SSCL}}$ is equivalent to maximizing $-\mathcal L_{\text{SSCL}} = \mathbb E[t^{+}] -\tfrac12\mathbb E[(t^{-})^{2}]$. Thus, we see that:
\[
I(Z,Z^+) \gtrsim -\mathcal L_{\text{SSCL}}
\quad\text{(equivalently, } I(Z,Z^+) \gtrsim -\tfrac12 \mathcal L_{\text{SCL}}\text{).}
\]
This shows that, under the stated boundedness, approximate symmetry, and small-variance assumptions, minimizing the normalized/cosine SCL-style surrogate used in MABLE approximately maximizes a second-order approximation to this particular DV-style mutual-information objective. It should not be read as a general claim that unconstrained raw inner-product SCL has the same controlled DV approximation without additional norm-control assumptions.

\subsection{Proof of Proposition~\ref{prop:lipschitz-pooling} - Lipschitz Pooling is Stable Under Optimal Matching}
\label{appendix:pooling-proofs}

\begin{proof}
Let $Z=\{z_i\}_{i=1}^n$ and $Z'=\{z'_i\}_{i=1}^n$ be two multisets of node embeddings, and fix an arbitrary ordering $(z'_1,\dots,z'_n)$ of $Z'$.
Let $\xi^\star\in S_n$ attain the minimum in the matching distance, so that
\[
d_{\mathrm{match}}(Z,Z')=\sum_{i=1}^n\|z_i-z'_{\xi^\star(i)}\|.
\]
Because $P$ is permutation-invariant, reordering inputs does not change its output:
\[
P(Z')=P(z'_1,\dots,z'_n)=P(z'_{\xi^\star(1)},\dots,z'_{\xi^\star(n)}).
\]
Applying the Lipschitz property of $P$ with respect to $\|\cdot\|_{\mathrm{sum}}$ to the ordered tuples
$(z_1,\dots,z_n)$ and $(z'_{\xi^\star(1)},\dots,z'_{\xi^\star(n)})$ gives
\[
\|P(Z)-P(Z')\|
=
\|P(z_1,\dots,z_n)-P(z'_{\xi^\star(1)},\dots,z'_{\xi^\star(n)})\|
\le
L\sum_{i=1}^n\|z_i-z'_{\xi^\star(i)}\|
=
L\,d_{\mathrm{match}}(Z,Z'),
\]
which is the claimed bound.
\end{proof}

\subsection{Proof of Proposition~\ref{prop:sum-mean-max-lipschitz} - Sum, Mean, and Max Pooling are Lipschitz}
\label{appendix:proof-sum-mean-max-lipschitz}

\begin{proof}
Let $\{z_i\}_{i=1}^n$ and $\{z'_i\}_{i=1}^n$ be collections of $n$ vectors in $\mathbb{R}^d$. We verify the Lipschitz bounds stated in Proposition~\ref{prop:sum-mean-max-lipschitz} under the specified output norms and input metrics.

\textbf{Sum Pooling.}
Under the $\ell_2$ norm for individual vectors and the output:
\[
\bigl\|\text{SumPool}(\{z_i\}) - \text{SumPool}(\{z'_i\})\bigr\|_{\ell_2}
\;=\;
\Bigl\|\sum_{i=1}^n z_i - \sum_{i=1}^n z'_i\Bigr\|_{\ell_2}
\;=\;
\Bigl\|\sum_{i=1}^n (z_i - z'_i)\Bigr\|_{\ell_2}.
\]
By the triangle inequality for vector sums:
\[
\Bigl\|\sum_{i=1}^n (z_i - z'_i)\Bigr\|_{\ell_2}
\;\le\;
\sum_{i=1}^n \|z_i - z'_i\|_{\ell_2}.
\]
Thus, $\|\text{SumPool}(\{z_i\}) - \text{SumPool}(\{z'_i\})\|_{\ell_2} \le 1 \cdot \sum_{i=1}^n \|z_i - z'_i\|_{\ell_2}$.
Hence the Lipschitz constant for SumPool under the $\ell_2$ norm is at most $1$.

\textbf{Mean Pooling.}
MeanPool is defined as SumPool scaled by $1/n$. Under the $\ell_2$ norm:
\[
\|\text{MeanPool}(\{z_i\}) - \text{MeanPool}(\{z'_i\})\|_{\ell_2}
\;=\;
\left\|\frac{1}{n}\sum_{i=1}^n z_i - \frac{1}{n}\sum_{i=1}^n z'_i\right\|_{\ell_2}
\;=\;
\frac{1}{n}\Bigl\|\sum_{i=1}^n (z_i - z'_i)\Bigr\|_{\ell_2}.
\]
By the triangle inequality for vector sums:
\[
\frac{1}{n}\Bigl\|\sum_{i=1}^n (z_i - z'_i)\Bigr\|_{\ell_2}
\;\le\;
\frac{1}{n}\sum_{i=1}^n \|z_i - z'_i\|_{\ell_2}.
\]
Thus, $\|\text{MeanPool}(\{z_i\}) - \text{MeanPool}(\{z'_i\})\|_{\ell_2} \le \frac{1}{n} \cdot \sum_{i=1}^n \|z_i - z'_i\|_{\ell_2}$.
Hence the Lipschitz constant for MeanPool under the $\ell_2$ norm is at most $1/n$.

\textbf{MaxPool.}
Define the $k$-th coordinate of MaxPool as $\text{MaxPool}(\{z_i\})_k = \max_{1 \le i \le n} (z_i)_k$. We consider the $\ell_\infty$ norm for the output of MaxPool. The difference in outputs is:
\[
\|\text{MaxPool}(\{z_i\}) - \text{MaxPool}(\{z'_i\})\|_{\infty}
\;=\;
\max_{k} \left|\max_{i} (z_i)_k - \max_{i} (z'_i)_k\right|.
\]
A standard property of the maximum function is $\left|\max_i a_i - \max_i b_i\right| \le \max_i \left|a_i - b_i\right|$. Applying this coordinate-wise:
\[
\max_{k} \left|\max_{i} (z_i)_k - \max_{i} (z'_i)_k\right|
\;\le\;
\max_{k} \left(\max_{i} \left|(z_i)_k - (z'_i)_k\right|\right).
\]
We can swap the order of maximization:
\[
\max_{k} \left(\max_{i} \left|(z_i)_k - (z'_i)_k\right|\right)
\;=\;
\max_{i} \left(\max_{k} \left|(z_i)_k - (z'_i)_k\right|\right).
\]
The term $\max_{k} \left|(z_i)_k - (z'_i)_k\right|$ is the $\ell_\infty$ norm of the vector difference $z_i - z'_i$, i.e., $\|z_i - z'_i\|_{\ell_\infty}$. So the expression becomes $\max_{i} \|z_i - z'_i\|_{\ell_\infty}$. Thus,
\[
\|\text{MaxPool}(\{z_i\}) - \text{MaxPool}(\{z'_i\})\|_{\infty}
\;\le\;
\max_{i} \|z_i - z'_i\|_{\ell_\infty},
\]

Using $\|v\|_{\ell_\infty}\le \|v\|_{\ell_2}$ for all $v\in\mathbb{R}^d$ and $\max_i a_i \le \sum_{i=1}^n a_i$ for $a_i\ge 0$, we obtain
\[
\max_{i} \|z_i - z'_i\|_{\ell_\infty}
\;\le\;
\max_{i} \|z_i - z'_i\|_{\ell_2}
\;\le\;
\sum_{i=1}^n \|z_i - z'_i\|_{\ell_2}.
\]
Combining the inequalities yields
\[
\|\text{MaxPool}(\{z_i\}) - \text{MaxPool}(\{z'_i\})\|_{\infty}
\;\le\;
\sum_{i=1}^n \|z_i - z'_i\|_{\ell_2},
\]
If the output is instead measured in $\ell_2$, then using $\|v\|_{\ell_2}\le \sqrt{d}\,\|v\|_{\ell_\infty}$ gives
\[
\|\text{MaxPool}(\{z_i\}) - \text{MaxPool}(\{z'_i\})\|_{\ell_2}
\;\le\;
\sqrt{d}\,\sum_{i=1}^n \|z_i - z'_i\|_{\ell_2},
\]
which yields the stated $\sqrt{d}$ bound.
\end{proof}

\subsection{Proof of Lemma~\ref{lem:attn-lipschitz} - Bounded-Domain Lipschitzness of Clipped Softmax Attention}
\label{appendix:attn-lipschitz-proof}

\begin{proof}
Let $Z=(z_1,\dots,z_n)$ and $Z'=(z'_1,\dots,z'_n)$ lie in $\mathcal{D}_B$, and use the product metric
\[
d_{\mathrm{sum}}(Z,Z'):=\sum_{i=1}^n\|z_i-z'_i\|_2.
\]
The normalization map $N_\varepsilon(z)=z/\max(\|z\|_2,\varepsilon)$ is Lipschitz for fixed $\varepsilon>0$ (with a constant depending only on $\varepsilon$). Since $\psi$ is Lipschitz, applying $\psi\circ N_\varepsilon$ coordinate-wise to the $n$ nodes gives a Lipschitz map from $Z$ to the raw score vector $\tilde s(Z)$. The map $\tilde s\mapsto \tilde s-\max_j \tilde s_j\mathbf{1}$ is Lipschitz because the maximum function is Lipschitz, coordinate-wise clipping to $[-c,0]$ is non-expansive, and softmax is smooth and Lipschitz on the compact cube $[-c,0]^n$. Therefore the attention-weight vector $\alpha(Z)=(\alpha_1(Z),\dots,\alpha_n(Z))\in\Delta^{n-1}$ is Lipschitz on $\mathcal{D}_B$; i.e., for some finite constant $L_\alpha$,
\[
\|\alpha(Z)-\alpha(Z')\|_1
=
\sum_{i=1}^n |\alpha_i(Z)-\alpha_i(Z')|
\le L_\alpha\, d_{\mathrm{sum}}(Z,Z').
\]
Now decompose the attention-weighted sums:
\[
\begin{aligned}
\|A(Z)-A(Z')\|_2
&=
\left\|\sum_{i=1}^n \alpha_i(Z)z_i-\sum_{i=1}^n \alpha_i(Z')z'_i\right\|_2\\
&\le
\left\|\sum_{i=1}^n \alpha_i(Z)(z_i-z'_i)\right\|_2
+
\left\|\sum_{i=1}^n \bigl(\alpha_i(Z)-\alpha_i(Z')\bigr)z'_i\right\|_2\\
&\le
\sum_{i=1}^n \alpha_i(Z)\|z_i-z'_i\|_2
+
B\sum_{i=1}^n |\alpha_i(Z)-\alpha_i(Z')|\\
&\le
\bigl(1+B L_\alpha\bigr)\,d_{\mathrm{sum}}(Z,Z').
\end{aligned}
\]
Thus $A$ is Lipschitz on $\mathcal{D}_B$. If $\sigma_{\max}(W_{pool})\le L_W$, then
\[
\|W_{pool}A(Z)-W_{pool}A(Z')\|_2
\le
L_W\|A(Z)-A(Z')\|_2,
\]
so $P_{\mathrm{attn}}=W_{pool}\circ A$ is Lipschitz on the same bounded domain.
\end{proof}

\subsection{Proof of Corollary~\ref{prop:cond-pro-pool} - Conditioned Projection Preserves Attention-Induced Separation}
\label{appendix:bilip-pool-proof}

\begin{proof}
Recall that the pooling operation is
\[
P_{\mathrm{attn}}(\{z_i\})
\;=\;
W\Bigl(\sum_{i=1}^n \alpha_i(\{z_k\}_{k=1}^n) \, z_i\Bigr),
\]
where the matrix $W \in \mathbb{R}^{d_G \times d}$ has $\sigma_{\min}(W) \ge m > 0$. This condition means $W$ is $m$-co-Lipschitz:
\[
\|\,W(u) - W(v)\|
\;\ge\;
m\,\|\,u - v\|
\quad
\text{for all } u,v \in \mathbb{R}^d.
\]
Let $\{z_i\}$ and $\{z'_i\}$ be two distinct sets of embeddings from the data distribution. Define their respective attention-weighted sums as:
\[
\Delta_z
\;=\;
\sum_{i=1}^n \alpha_i(\{z_k\}_{k=1}^n) \, z_i,
\quad
\Delta_{z'}
\;=\;
\sum_{i=1}^n \alpha_i(\{z'_k\}_{k=1}^n) \, z'_i.
\]
Let $\delta_{\text{pair}} = \|\Delta_z - \Delta_{z'}\|$ denote the actual separation achieved between these weighted sums by the attention mechanism $\alpha_i(\cdot)$ for this specific pair. The difference between the pooled outputs is
\[
\|\,P_{\mathrm{attn}}(\{z_i\}) - P_{\mathrm{attn}}(\{z'_i\})\|
\;=\;
\Bigl\| W(\Delta_z) \;-\; W(\Delta_{z'})\Bigr\|.
\]
Applying the $m$-co-Lipschitz property of $W$ with $u = \Delta_z$ and $v = \Delta_{z'}$:
\[
\Bigl\| W(\Delta_z) \;-\; W(\Delta_{z'})\Bigr\|
\;\ge\;
m\,\|\Delta_z - \Delta_{z'}\|.
\]
Substituting the definition of $\delta_{\text{pair}}$, we obtain
\[
\|\,P_{\mathrm{attn}}(\{z_i\}) - P_{\mathrm{attn}}(\{z'_i\})\|
\;\ge\;
m \cdot \delta_{\text{pair}},
\]
as desired. This inequality holds for any pair; the corollary highlights the consequence when $\delta_{\text{pair}} > 0$.
\end{proof}

\subsection[Ablation on feature-slice dimensionality de]{Ablation on feature-slice dimensionality \(d_e\)}
\label{appendix:ablation-feature-slice}

\begin{figure*}[h!]
    \centering
    \includegraphics[width=\textwidth]{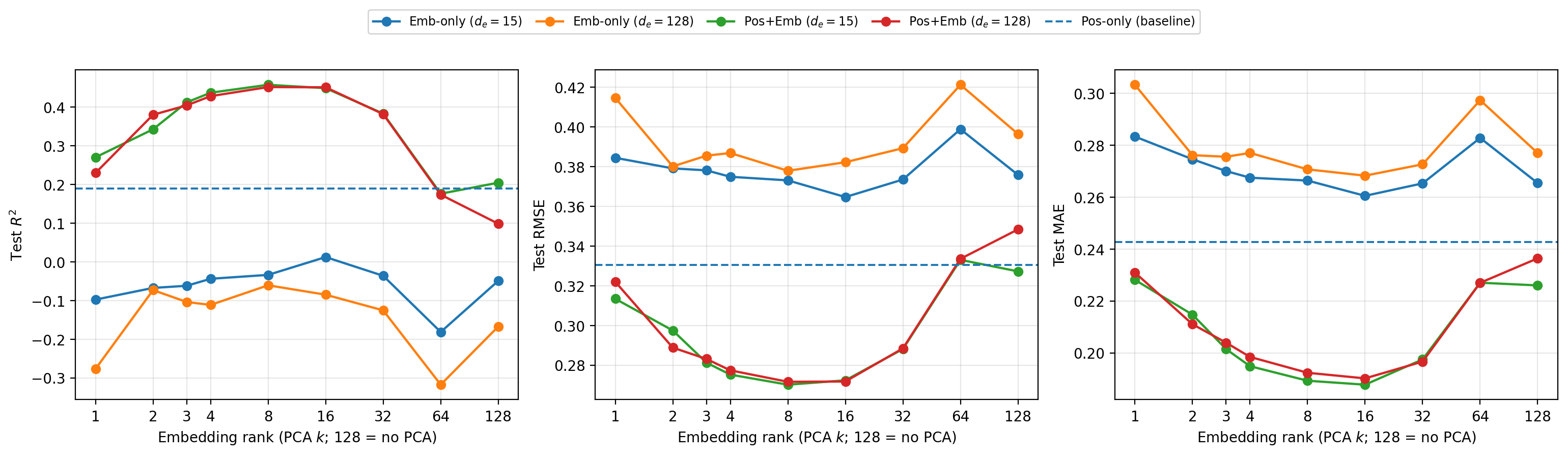}
    \caption{Embedding dimension ablation under spatial holdout (south$\rightarrow$north) with spatial CV on \textit{CuEq PK Regression} dataset (Experiment 1, Study 4). Comparison of Random Forest performance using pretrained embeddings produced with $d_e=15$ vs.\ $d_e=128$.
    Curves show \texttt{emb\_only} and \texttt{pos\_plus\_emb} across PCA rank $k$ (log-scale; $k=128$ denotes no PCA), and the dashed line indicates the \texttt{pos\_only} baseline.}
    \label{fig:cueqpk_de_ablation_spatial_rf}
\end{figure*}

\begin{table*}[h!]
\centering
\small
\setlength{\tabcolsep}{4pt}
\resizebox{\linewidth}{!}{
\begin{tabular}{rcccccccc}
\toprule
$d_e$ & Best step ($10^3$ updates) & Train total & Val total & Gap (Val--Train) &
Val graph-align & Val graph-uniform & Val recon (orig) & Val node-uniform \\
\midrule
8   & 84.8 & 0.0680 & 0.09109 & 0.02309 & 0.04088 & 0.03820 & $5.0138\times 10^{-3}$ & $6.9950\times 10^{-3}$ \\
15  & 68.0 & 0.0552 & 0.09120 & 0.03597 & 0.04265 & 0.03623 & $6.0839\times 10^{-3}$ & $6.2394\times 10^{-3}$ \\
128 & 80.6 & 0.0367 & 0.09034 & 0.05364 & 0.04195 & 0.03696 & $5.2452\times 10^{-3}$ & $6.1891\times 10^{-3}$ \\
\bottomrule
\end{tabular}}
\caption{\textbf{Feature-slice dimensionality ablation.} Best-checkpoint training and validation losses for three feature-slice widths $d_e$ (selected by minimum validation total loss) for \textit{CuEq PK Regression} dataset (Experiment 1, Study 4). While $d_e{=}128$ achieves the lowest validation total loss, $d_e{=}15$ is effectively comparable on validation loss while reaching its best checkpoint earlier (about \(\sim\)84\% of the 128d training steps in our runs) and exhibiting a smaller train--validation gap than $d_e{=}128$, consistent with reduced overfitting.}
\label{tab:cueqpk_de_ablation_losses}
\end{table*}

\newpage

\end{document}